\documentclass[aos]{imsart}

\RequirePackage{amsthm,amsmath,amsfonts,amssymb}
\RequirePackage[numbers,sort&compress]{natbib}
\RequirePackage[colorlinks,citecolor=blue,urlcolor=blue]{hyperref}
\RequirePackage{graphicx}

\startlocaldefs
\theoremstyle{plain}

\newtheorem{remark}{Remark}
\newtheorem{theorem}{Theorem}[section]
\newtheorem{lemma}[theorem]{Lemma}
\newtheorem{corollary}[theorem]{Corollary}
\newtheorem{proposition}[theorem]{Proposition}
\newtheorem{assumption}[theorem]{Assumption}

\theoremstyle{definition}
\newtheorem{definition}[theorem]{Definition}

\usepackage{tikz}
\usepackage{pgfplots}
\pgfplotsset{compat=1.16}
\usepgfplotslibrary{fillbetween}
\makeatletter
\def\journal@name{}
\def\journal@url{}
\makeatother

\endlocaldefs

\begin{document}

\begin{frontmatter}
\title{The Condition-Number Principle for Prototype Clustering}
\runtitle{The Condition-Number Principle for Prototype Clustering}
\begin{aug}

\author[A]{\fnms{Romano}~\snm{Li}\ead[label=e1]{romano.li@duke.edu}},
\author[B]{\fnms{Jianfei}~\snm{Cao}\ead[label=e2]{j.cao@northeastern.edu}}
\address[A]{Department of Economics, Duke University\printead[presep={,\ }]{e1}}

\address[B]{Department of Economics,
Northeastern University\printead[presep={,\ }]{e2}}
\end{aug}

\begin{abstract}
We develop a geometric framework that links objective accuracy to structural recovery in prototype-based clustering. The analysis is algorithm-agnostic and applies to a broad class of admissible loss functions.
We define a clustering condition number that compares within-cluster scale to the minimum loss increase required to move a point across a cluster boundary. When this quantity is small, any solution with a small suboptimality gap must also have a small misclassification error relative to a benchmark partition. The framework also clarifies a fundamental trade-off between robustness and sensitivity to cluster imbalance, leading to sharp phase transitions for exact recovery under different objectives. The guarantees are deterministic and non-asymptotic, and they separate the role of algorithmic accuracy from the intrinsic geometric difficulty of the instance.
We further show that errors concentrate near cluster boundaries and that sufficiently deep cluster cores are recovered exactly under strengthened local margins. Together, these results provide a geometric principle for interpreting low objective values as reliable evidence of meaningful clustering structure.
\end{abstract}

\begin{keyword}[class=MSC]
\kwd[Primary ]{62H30}
\kwd[; secondary ]{62R20}
\kwd{90C26}
\end{keyword}

\begin{keyword}
\kwd{cluster analysis}
\kwd{condition number}
\kwd{geometric stability}
\kwd{optimization gap}
\kwd{structural recovery}
\kwd{prototype-based clustering}
\end{keyword}

\end{frontmatter}

\section{Introduction}

\subsection{The disconnect between optimization and clustering consistency}\label{subsec:1.1}

Prototype-based clustering methods represent each group by a prototype and assign observations to the nearest prototype \cite{jain_algorithms_1988}. Classical examples include $k$-means and $k$-medoids \cite{mcqueen_methods_1967,kaufman_finding_2009}. These methods are widely used in statistics and machine learning because they yield simple, interpretable partitions. They are typically formulated as optimization problems over partitions and prototypes, but the resulting objectives are nonconvex and are therefore solved only approximately in practice, often via heuristics or relaxations. As a result, even when an algorithm achieves a low objective value, the scientific target remains the induced partition rather than the loss itself.
 
This raises the question of when near-optimality of the clustering objective implies that the resulting partition is structurally correct. In many applications, such as inferring heterogeneous treatment effects in econometrics or identifying cell types in genomics, the primary goal is to recover meaningful groups rather than to minimize a loss function $\mathcal{L}_n$. In such settings the relevant notion of success is structural accuracy.

Optimization success does not, by itself, guarantee structural recovery. A candidate solution $(\hat{\mathcal{C}},\hat{\theta})$ may achieve an objective value extremely close to the global minimum (a small \emph{objective gap} $\delta$), yet induce a partition that differs substantially from the desired benchmark (a large \emph{misclassification rate} $p$, defined up to label permutation). This phenomenon arises when the loss landscape is relatively flat along directions that alter the partition, so markedly different clusterings can produce nearly indistinguishable objective values. In such regimes, even an exact global minimizer may fail to recover the benchmark structure if the geometry is obscured by heavy tails, outliers, or severe imbalance.

Existing theoretical work often addresses this issue indirectly. Statistical analyses typically impose strong distributional assumptions, such as well-separated mixture models, while algorithmic analyses study the behavior of specific optimization procedures, such as Lloyd-type updates under favorable initialization. While these approaches provide valuable insights, they do not directly answer a more basic question:

Given a candidate solution that is nearly optimal in objective value, under what geometric conditions must it also be structurally close to a benchmark partition?

Answering this question would provide a direct bridge between optimization accuracy and structural inference. In particular, it would yield a certificate of correctness that applies independently of how the solution was obtained, whether through heuristics, relaxations, or exact optimization.

\subsection{Our contribution: A geometric condition number}\label{subsec:1.2}

Our main result establishes a stability principle linking optimization accuracy to structural recovery in clustering. Specifically, we show that when the geometry of the instance is well-conditioned, any solution that is nearly optimal in objective value must also be close to a benchmark partition.

The key observation underlying this result is geometric. Assigning a point to an incorrect prototype requires crossing a margin between clusters, which necessarily increases the loss. This allows us to lower bound the excess objective value contributed by each misclassified point. Aggregating these penalties yields a non-asymptotic inequality that connects the optimization gap to the misclassification rate.

This perspective leads to our main contribution: a geometric framework that links near-optimality to structural recovery.
We summarize the contribution in four parts.

\noindent\emph{1. The clustering condition number.}
Our analysis is organized around a dimensionless quantity that we call the \emph{clustering condition number} $\kappa$. This quantity measures the relationship between the scale of within-cluster variation and the minimum loss increment required to move a point across a cluster boundary under a given loss function $g$. In our benchmark geometry,
\[
\kappa \;\asymp\; \frac{g(D_{\text{eff}})}{\Delta_g(\gamma;D_{\text{eff}})},
\]
where $D_{\text{eff}}$ denotes an effective within-cluster radius, $\gamma$ denotes the benchmark geometric margin, and $\Delta_g(\gamma;D)$ denotes the minimal loss increment incurred by crossing a margin $\gamma$ starting from radius at most $D$.
Our main theorem (Theorem~\ref{thm:main_stability}) shows that the misclassification rate is controlled by the optimization gap through the condition number:
\[
p(\hat{\mathcal{C}},\mathcal{C}^*) \;\lesssim\;
\kappa \cdot (\delta+\delta_{\text{approx}})
\;+\; \text{(prototype-displacement term)}.
\]
The leading term is algorithm-agnostic. It separates the geometric difficulty of the instance, captured by $\kappa$, from the computational precision of the optimization procedure, captured by $\delta$. For standard objectives, the prototype displacement can itself be controlled by the optimization gap (Section~\ref{subsec:5.3}), yielding effectively one-parameter control in the small-gap regime.

\noindent\emph{2. Sharp phase transitions and objective selection.}
Specializing the framework to common clustering objectives yields sharp recovery thresholds and clarifies how robustness and cluster imbalance influence recoverability. In a two-ball model we obtain exact-recovery conditions for global minimizers, revealing distinct phase transitions under different losses. These results illustrate a fundamental trade-off between robustness to heavy-tailed geometry and sensitivity to cluster imbalance.

\noindent\emph{3. Local geometry and zero-error cores.}
Beyond global error bounds, we analyze the spatial distribution of misclassification. We introduce a Core--Belt decomposition (Section~\ref{sec:5}), showing that stability is inherently non-uniform, since points deep inside clusters enjoy stronger geometric margins and can therefore be certified as exactly recovered even when the global solution is only near-optimal. Structural ambiguity is thus confined to a narrow boundary region between clusters.

\noindent\emph{4. Operationalization.}
Finally, we extend the framework to heterogeneous objectives, including weighted and instance-specific losses, as well as hierarchical and dynamic clustering settings. We also develop a data-driven diagnostic procedure (Section~\ref{subsec:6.4}) that combines observable geometric quantities with empirical optimization gaps to produce conservative certificates of structural stability.


\subsection{Related work}\label{subsec:1.3}

This paper contributes to several streams of literature. 
First, a large statistical literature studies clustering recovery under explicit generative assumptions, especially finite mixtures. Foundational identifiability results are due to \cite{teicher_identifiability_1963,teicher_identifiability_1967}, while classical work by \cite{pollard_strong_1981,pollard_central_1982} proves consistency of empirical $k$-means minimizers. 
Related analyses in empirical vector quantization further establish convergence-rate results for empirically optimal quantizers under additional regularity conditions \cite{antos_individual_2005}.
Consistency results have been established for related methods such as $k$-medoids \cite{jiang_consistency_2021}. 
Subsequent papers analyze component and assignment recovery under separation or moment conditions, including Gaussian-mixture learning and spectral-clustering consistency, with recent extensions to broader regimes \cite{jaffe_asymptotic_2025,klochkov_robust_2021,telgarsky_moment-based_2013,dasgupta_learning_1999,moitra_settling_2010,luxburg_consistency_2008, biau_performance_2008,laloe_l1-quantization_2010,thorpe_convergence_2015,lember_minimizing_2003}. 
More recently, a prominent line of work has characterized minimax rates and information-theoretic phase transitions for exact recovery in mixture models \cite{ndaoud_sharp_2022}. 
These results provide important statistical guarantees, but they primarily concern risk/objective convergence under explicit generative models rather than whether a near-optimal objective value implies structurally accurate partition recovery in a deterministic, instance-wise manner.
   
A second strand studies optimization algorithms for clustering, including Lloyd-type methods for $k$-means and related approximation analyses \cite{arthur_k-means_2006,balcan_discriminative_2008,balcan_approximate_2009,awasthi_stability_2010,balcan_clustering_2013}. This literature also gives recovery guarantees under favorable initialization or separation assumptions, and analyzes spectral approaches followed by $k$-means under suitable gap conditions \cite{peng_partitioning_2015,von_luxburg_tutorial_2007}. 
A parallel effort investigates convex relaxations of clustering, particularly semidefinite programming (SDP), to bypass the non-convexity of $k$-means \cite{peng_approximating_2007,mixon_clustering_2016,fei_hidden_nodate}. 
These analyses often provide guarantees for the optimizer of a specific relaxation (or for relax-and-round procedures) under probabilistic separation assumptions, but they are tied to particular algorithmic constructions or certificate arguments rather than giving a generic, loss-based certificate for arbitrary near-optimal solutions.

A third line of research studies clustering stability under geometric or perturbation-based assumptions. The clusterability literature gives separation and stability conditions under which meaningful clusters are recoverable, and related work analyzes robustness to small perturbations in data or objectives \cite{ben-david_sober_2006,ben-david_clustering_2018,balcan_discriminative_2008}. 
In the ``beyond worst-case'' literature, perturbation resilience frameworks study instances whose optimal clustering is invariant to multiplicative perturbations \cite{bilu_are_2012,balcan_clustering_nodate,angelidakis_algorithms_2017}. 
Relatedly, robust statistics develops trimming- and contamination-based robustification of prototype clustering, including trimmed $k$-means and its extensions \cite{cuesta-albertos_trimmed_1997,garcia-escudero_general_2008}. 
These results clarify when clustering instances are well-conditioned, but they mainly describe data geometry or optimizer behavior, not near-optimal solutions. Thus they leave open the question of whether a near-optimal objective value guarantees structural closeness to a benchmark partition.

Conceptually, our framework is reminiscent of margin-based analyses in statistical learning, where a margin condition controls how probability mass concentrates near decision boundaries \cite{mammen_smooth_nodate}. We adapt this intuition to the deterministic, unsupervised, and non-convex setting of prototype clustering by using a uniform loss-increment geometry to penalize boundary crossings.

Our results are also useful for post-clustering inference. Recent work in econometrics and statistics studies inference after data-driven grouping, including treatment-effect and panel settings \cite{leung_network_2023,cao_inference_2025} and selective inference for clustering procedures \cite{chen_selective_2023,gao_selective_2024,yun_selective_2023}. A central concern in this literature is that uncertainty quantification can be distorted when the clustering step is unstable. Our geometric stability conditions provide a sufficient basis for arguing that near-optimal solutions are structurally close to a benchmark partition, which supports more reliable downstream inference.

\subsection{Organization}
Section \ref{sec:2} introduces the formal setup, including the benchmark partition, geometric margin quantities, admissible loss class, and definitions of objective and structural errors.
Section \ref{sec:3} presents the main stability theorem, which links optimization accuracy to structural recovery through a geometric condition number, together with approximation and displacement terms.
Section \ref{sec:4} instantiates this bound for specific losses and analyzes exact-recovery phase transitions. Section \ref{sec:5} refines the global guarantee through local core-belt analysis and displacement control. 
Section \ref{sec_diagnostics} presents the diagnostic implications of the theory. 
Section \ref{sec:6} extends the framework to heterogeneous and dynamic settings.
Section \ref{sec:7} concludes with a discussion of interpretation, practice, and future directions.
Appendix A establishes the tightness of the phase-transition thresholds in the two-ball model. Appendix B contains proofs and auxiliary technical results for the refined stability analysis. Appendix C provides the deferred proofs for the heterogeneous and dynamic tracking extensions.

\section{Statistical framework and geometric setup}\label{sec:2}

In this section, we formalize the prototype-based clustering framework. We introduce a flexible class of loss functions that encompasses standard algorithms (e.g., $k$-means, $k$-medoids) and robust variants (e.g., Huber clustering). Crucially, we distinguish between the \emph{optimization objective} used by the algorithm and the \emph{geometric benchmark} used to evaluate structural recovery.

\subsection{Notation}

We use the following conventions throughout the paper. For a positive integer $m$, write $[m]:=\{1,\dots,m\}$. 
We use $|A|$ for cardinality of a finite set $A$. Candidate (algorithmic) objects are denoted with hats, such as $(\hat{\mathcal{C}},\hat{\theta})$, while benchmark/reference objects carry a superscript $*$, such as $(\mathcal{C}^*,\theta^*)$.
For cluster labels, $\Pi_k$ denotes the set of all permutations of $[k]$. When comparing two partitions, all label-dependent quantities are understood up to relabeling via $\Pi_k$. 
Given a metric $d$, we write $B(c,r):=\{x\in\mathcal{X}: d(x,c)\le r\}$ for a closed ball. We use $a\lesssim b$ to mean $a\le Cb$ for a universal constant $C>0$ independent of $(n,k)$ and the instance-specific geometry, unless explicitly stated otherwise.

\subsection{Admissible losses}\label{subsec:2.1}

Let $\mathcal{X}$ be a domain equipped with a metric $d(\cdot, \cdot)$ (typically Euclidean distance in $\mathbb{R}^d$). We consider a dataset $S = \{x_1, \dots, x_n\} \subset \mathcal{X}$. A clustering of $S$ into $k$ groups is defined by a partition $\mathcal{C}=\{C_1,\dots,C_k\}$ of $[n]$ and a set of prototypes
$\theta=(\theta_1,\dots,\theta_k)\in\Theta^k$, where $\Theta\subseteq\mathcal{X}$
for medoid-type formulations and $\Theta=\mathbb{R}^d$ for center-based formulations.

We consider clustering methods that minimize a cumulative loss 
$$\mathcal{L}_n(\mathcal{C}, \theta) := \sum_{j=1}^k \sum_{i\in C_j} g(d(x_i, \theta_j)),$$
where $g$ is a loss function. 
To accommodate non-smooth, robust, or saturated objectives, we consider a broad class of admissible loss functions.

\begin{definition}[Admissible Loss Function]\label{def:admissible_loss}
A function $g: [0, \infty) \to [0, \infty)$ is an \emph{admissible loss} if:
\begin{enumerate}
    \item[(i)] $g(0) = 0$ and $g$ is non-decreasing.
    \item[(ii)] $g$ is continuous on $[0, \infty)$.
\end{enumerate}
\end{definition}

Common examples include $g(r)=r^2$ ($k$-means), $g(r)=r$ (continuous $k$-median / linear-loss prototype clustering), and the Huber loss $g_\tau(r)$, which interpolates between quadratic and linear penalties \cite{huber_robust_1964}. Note that we do \emph{not} require $g$ to be convex or strictly increasing globally, which allows our theory to extend to trimmed or truncated losses
\cite{garcia-escudero_robustness_1999,garcia-escudero_general_2008}.

\subsection{Loss increment geometry}
\label{subsec:increment}

To connect the loss function with the geometric separation of clusters, we analyze the cost of moving a point across a cluster boundary. Intuitively, if a point that belongs to a cluster at distance $r$ from its prototype is reassigned to a different prototype that is at least $\gamma$ farther away, the loss must increase by some amount. 

We quantify the smallest such increase through the \emph{uniform loss increment}.

\begin{definition}[Uniform Loss Increment]\label{def:increment}
For a radius $D \ge 0$ and a margin shift $\gamma > 0$, the uniform loss increment $\Delta_g(\gamma; D)$ is defined as:
\begin{equation}\label{eq:uniform-increment}
\Delta_g(\gamma; D) := \inf_{0 \le r \le D} \{ g(r + \gamma) - g(r) \}.
\end{equation}
\end{definition}

When $\Delta_g(\gamma;D)=0$ (e.g., for saturated or locally flat losses over the relevant range), no cost-based certificate can prevent misclassification, and our bounds correctly become vacuous through the factor $1/\Delta_g(\gamma;D)$. In particular, $\Delta_g(\gamma;D)>0$ holds for strictly increasing losses and for piecewise-linear/quadratic losses (e.g., Huber) on the relevant range.

Intuitively, $\Delta_g(\gamma; D)$ measures the smallest loss increase incurred when a point whose distance to its correct prototype is at most $D$ is reassigned to a prototype that is at least $\gamma$ farther away.
This quantity will later appear as the denominator of the clustering condition number that governs the stability guarantees in Section~\ref{sec:3}.

\medskip

\noindent\textbf{Standing Assumption.} Throughout, we additionally assume that $g$ is $L_g$-Lipschitz on the domain $[0, D_{\text{eff}} + \Delta_0]$, which covers the relevant range of distances for any near-optimal solution.

\subsection{Benchmark geometry}\label{subsec:2.2}

We adopt a nonparametric perspective and analyze clustering relative to a fixed geometric benchmark. Rather than assuming a generative model, we specify a reference partition and a set of associated prototype locations that serve as anchors for the cluster geometry. These objects define the separation and scale parameters that govern structural recoverability.

Let $\mathcal{C}^* = \{C_1^*, \dots, C_k^*\}$ be a reference partition of the data, and let $\theta^* = (\theta_1^*, \dots, \theta_k^*) \in \Theta^k$ be a set of associated prototypes.

\begin{remark}[Geometric reference vs.\ optimizer]
The benchmark pair $(\mathcal{C}^*,\theta^*)$ need not minimize the clustering objective. 
Instead, $\theta_j^*$ serves as a geometric anchor for the cluster $\mathcal{C}_j^*$. 
For example, $\theta^*$ may represent population means, medoids, or ground-truth centers in synthetic experiments.
\end{remark}

The stability of recovering $\mathcal{C}^*$ is governed by four geometric quantities, defined solely by the benchmark configuration $(\mathcal{C}^*,\theta^*)$ and the ambient metric space:
\begin{enumerate}
    \item \emph{Effective radius ($D_{\mathrm{eff}}$).} The maximum cluster radius relative to its anchor:
    \begin{equation}\label{eq:Deff}
        D_{\text{eff}} := \max_{j \in [k]} \max_{i \in C_j^*} d(x_i, \theta_j^*).
    \end{equation}

    \item \emph{Prototype separation ($\Delta_0$).} The minimum distance between distinct benchmark prototypes:
    \begin{equation}\label{eq:Delta0}
        \Delta_0 := \min_{j \neq l} d(\theta_j^*, \theta_l^*).
    \end{equation}

    \item \emph{Geometric margin ($\gamma$).} The slack between separation and twice the radius:
    \begin{equation}\label{eq:gamma-def}
        \gamma := \Delta_0 - 2 D_{\text{eff}}.
    \end{equation}

    \item \emph{Balance ($c_b$).} The minimum benchmark cluster proportion:
    \begin{equation}
        c_b := \min_{j \in [k]} |C_j^*|/n.
    \end{equation}
\end{enumerate}

Together, the quantities $(D_{\mathrm{eff}},\Delta_0,\gamma,c_b)$ summarize the geometric scale, separation, and balance of the benchmark configuration.

Throughout the paper, we work in the \emph{separable regime} where $\gamma > 0$. This condition ensures that the balls $B(\theta_j^*, D_{\text{eff}})$ are disjoint. More importantly, it provides a strict geometric margin that rules out ambiguous assignments at the benchmark scale. Specifically, since $d(\theta_j^*, \theta_l^*) \ge 2D_{\text{eff}} + \gamma$, the triangle inequality implies that for any $i\in C_j^*$ and $l\neq j$:
\begin{equation}\label{eq:benchmark-margin-ineq}
d(x_i,\theta_l^*) \ge d(\theta_j^*, \theta_l^*) - d(x_i, \theta_j^*) \ge (2D_{\text{eff}} + \gamma) - D_{\text{eff}} = D_{\text{eff}} + \gamma.
\end{equation}
Since also $d(x_i,\theta_j^*) \le D_{\text{eff}}$, it follows that
\begin{equation}\label{eq:benchmark-margin-diff}
d(x_i,\theta_l^*) \ge d(x_i,\theta_j^*) + \gamma.
\end{equation}
Thus, moving a point from its correct anchor to an incorrect one necessarily incurs an additional distance of at least $\gamma$ beyond its benchmark distance to the correct prototype. This margin will translate directly into a minimum loss increase through the increment quantity $\Delta_g$ defined in Section~\ref{subsec:increment}.
The geometry is illustrated in Figure \ref{fig:geometry}.

\begin{figure}[t]
\centering
\begin{tikzpicture}[scale=1.1]

\coordinate (theta1) at (0,0);
\coordinate (theta2) at (6,0);

\coordinate (xi) at (1.6,0);

\draw[dashed] (theta1) circle (2);

\fill (theta1) circle (2pt) node[below] {$\theta_j^*$};
\fill (theta2) circle (2pt) node[below] {$\theta_\ell^*$};
\fill (xi) circle (2pt) node[above] {$x_i$};

\draw[<->] (theta1) -- (xi) node[midway, above] {$r$};

\draw[<->] (xi) -- (theta2) node[midway, above] {$\ge r+\gamma$};

\node at (0,2.4) {$B(\theta_j^*,D_{\mathrm{eff}})$};

\end{tikzpicture}

\caption{
Geometric interpretation of the loss increment. 
A point $x_i$ in cluster $j$ lies at distance $r \le D_{\mathrm{eff}}$ from its anchor $\theta_j^*$. 
Any reassignment to another prototype $\theta_\ell^*$ increases the distance by at least $\gamma$, so that 
$d(x_i,\theta_\ell^*) \ge r+\gamma$. 
This distance gap induces a minimum loss increase of $\Delta_g(\gamma;D_{\mathrm{eff}})$.
}
\label{fig:geometry}
\end{figure}
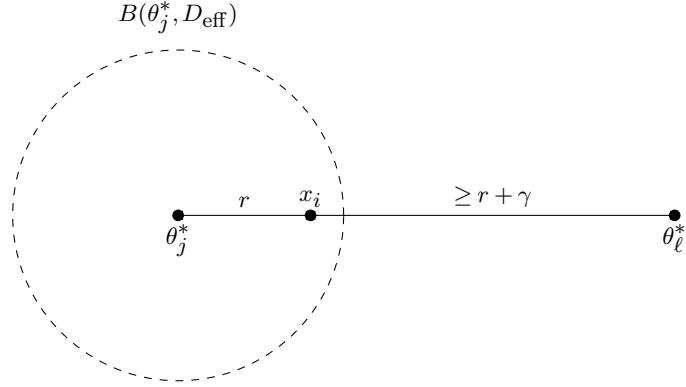


\begin{remark}[The separability condition in context]
\label{rem:separation-comparison}
The condition $\gamma > 0$ requires that the balls $B(\theta_j^*, D_{\mathrm{eff}})$ be pairwise disjoint, i.e., geometric separability. Several points of comparison are worth noting.
\begin{enumerate}
\item[(i)] When $\gamma = 0$, the increment $\Delta_g(0; D_{\mathrm{eff}}) = 0$ for any loss in our class, so the condition number $\kappa$ diverges and the stability bound becomes vacuous. This suggests that some form of strict separation is a genuine structural requirement for bounds of the type we study, though we do not claim $\gamma > 0$ is the weakest condition under which such bounds are possible.

\item[(ii)] In the Gaussian mixture literature, recovery guarantees typically require a signal-to-noise condition of the form $\Delta_0 \ge C\sigma\sqrt{k\log n}$ or $\Delta_0 \ge C\sigma k^{1/4}$, where $\sigma$ is the sub-Gaussian parameter \cite{lu_statistical_2016,ndaoud_sharp_2022}. Under sub-Gaussian tails the effective radius satisfies $D_{\mathrm{eff}} \lesssim \sigma\sqrt{d+\log n}$, so $\gamma > 0$ becomes $\Delta_0 \gtrsim \sigma\sqrt{d+\log n}$, comparable in spirit to those signal-to-noise conditions but stated deterministically and without a generative model.

\item[(iii)] The $(1+\alpha)$-perturbation resilience framework of \cite{balcan_clustering_nodate} requires the optimal clustering to stay invariant under multiplicative perturbations of the distance metric. The condition $\gamma > 0$ is logically independent of this. Perturbation resilience can hold without strict separation, and strict separation can hold without perturbation resilience. Both conditions identify instances where clustering is well-posed beyond optimization alone, but through different mechanisms.


\item[(iv)] The condition $\gamma > 0$ need not be large for the theory to be useful. For $k$-means the bound reads $p \lesssim (D_{\mathrm{eff}}/\gamma)^2 \cdot \delta$, so as the margin narrows the required optimization gap must tighten at rate $(\gamma/D_{\mathrm{eff}})^2$. For linear loss the dependence is only linear in $\gamma/D_{\mathrm{eff}}$. The theory remains informative at small $\gamma$ by trading geometric width for optimization precision; see Corollary~\ref{cor:kappa_bound} for the general form.
\end{enumerate}
\end{remark}

\begin{remark}[Scaling with Sample Size]
While our definitions are deterministic given the sample $S$, it is instructive to consider their behavior if points are drawn from a population distribution. If the distribution has compact support, $D_{\text{eff}}$ approaches a constant as $n \to \infty$. If the distribution has sub-Gaussian tails, $D_{\text{eff}}$ typically scales as $\sqrt{\log n}$ (up to scale and dimension factors). The condition number $\kappa$ defined in the next section will naturally capture these scaling laws, penalizing heavy-tailed geometry where the radius grows faster than the separation.
\end{remark}

\subsection{Optimality gap and misclassification}\label{subsec:2.3}

We now relate the algorithmic output of a clustering procedure to the benchmark geometry introduced above. Specifically, we distinguish between optimization accuracy, measured by the objective gap, and structural accuracy, measured by the misclassification rate relative to the benchmark partition.

Let $(\hat{\mathcal C},\hat\theta)$ be a candidate solution produced by an algorithm. 
Define the optimal objective value
\[
\mathsf{OPT}_n := \min_{\mathcal C,\theta} \mathcal L_n(\mathcal C,\theta).
\]
We measure optimization accuracy through the multiplicative gap $\delta \ge 0$ defined by
\begin{equation}
    \label{eq:mult-gap}
\mathcal L_n(\hat{\mathcal C},\hat\theta) \le (1+\delta)\mathsf{OPT}_n.
\end{equation}
This multiplicative optimality-gap formulation is common in the analysis of 
approximation algorithms for $k$-means clustering and spectral clustering
\cite{arthur_k-means_2006,balcan_approximate_2009,kanungo_local_2004,peng_partitioning_2015}.


To quantify recovery of the benchmark partition, we define the misclassification rate.

\begin{definition}[Misclassification rate]
\label{def:misclassification}
Let $\Pi_k$ denote the set of permutations of $[k]$. 
The misclassification rate of $\hat{\mathcal C}$ relative to $\mathcal C^*$ is
\[
p(\hat{\mathcal C},\mathcal C^*) 
:= 1-\max_{\pi\in\Pi_k}
\frac{1}{n}\sum_{j=1}^k |\hat C_j\cap C_{\pi(j)}^*|.
\]
\end{definition}

This metric represents the fraction of points that are assigned to an incorrect cluster, after optimally matching the cluster labels. Equivalently,
\begin{equation}\label{eq:misclass-rate-equiv}
p(\hat{\mathcal{C}}, \mathcal{C}^*) = \min_{\pi \in \Pi_k}\frac{1}{n}\sum_{j=1}^k \bigl|C_j^* \setminus \hat{C}_{\pi(j)}\bigr|.
\end{equation}

Our goal is to bound the structural error in terms of the optimization gap and the geometric parameters introduced earlier. In the next section we establish a stability inequality of the form
\[
p(\hat{\mathcal C},\mathcal C^*)
\le
\kappa(g,\text{geometry})\cdot\delta
+\text{approximation and displacement terms}.
\]

\section{Main stability results}\label{sec:3}

We now present the main theoretical guarantee linking the optimization gap
to the misclassification rate.
Our result relies on a single geometric quantity, the \emph{clustering condition number}, which captures the interplay between the within-cluster scale and the cross-cluster separation under the loss $g$.

\subsection{Condition number}\label{subsec:3.1}

Before deriving the bounds, we combine the geometric parameters introduced
in Section~\ref{sec:2} ($D_{\text{eff}}, \gamma$) into a dimensionless
quantity that governs stability. Intuitively, the difficulty of clustering
is determined by the ratio between the ``within-cluster scale'' and the
``cost of a mistake.'' The former is captured by $g(D_{\text{eff}})$,
while the latter is quantified by the uniform loss increment $\Delta_g$
(Definition~\ref{def:increment}). This leads to the following definition.

\begin{definition}[Clustering Condition Number]\label{def:condition_number}
For an admissible loss $g$ and a benchmark geometry characterized by margin $\gamma$ and radius $D_{\text{eff}}$, the condition number $\kappa$ is defined as
\begin{equation}\label{eq:kappa-def}
    \kappa(g,\gamma,D_{\text{eff}}) := \frac{g(D_{\text{eff}})}{\Delta_g(\gamma; D_{\text{eff}})}.
\end{equation}
\end{definition}

The term $\Delta_g(\gamma; D_{\text{eff}})$ represents the minimum loss
increase induced by moving a point across a geometric margin $\gamma$,
while $g(D_{\text{eff}})$ represents the benchmark within-cluster loss
scale. 
Their ratio therefore compares the typical within-cluster loss scale to the
minimal penalty incurred when a point crosses a cluster boundary.
A small $\kappa$ indicates a
well-conditioned problem in which separation dominates within-cluster
variability. Although the main stability inequality below is stated in
terms of the objective value $OPT_n$, this ratio naturally emerges when
the bound is expressed in a simplified condition-number form
(Corollary~\ref{cor:kappa_bound}).

\begin{remark}[Scale invariance]\label{rem:scale-invariance}
For homogeneous losses where $g(c r)=c^p g(r)$ (e.g., squared loss and absolute loss), $\kappa$ depends only on the relative geometry $\gamma/D_{\text{eff}}$. In particular, for $k$-means ($g(r)=r^2$), one has $\Delta_g(\gamma;D_{\text{eff}})=\gamma^2$ and hence $\kappa=(D_{\text{eff}}/\gamma)^2$. For linear loss ($g(r)=r$), $\Delta_g(\gamma;D_{\text{eff}})=\gamma$ and hence $\kappa=D_{\text{eff}}/\gamma$.
See also Section~\ref{subsec:4.1}. 
\end{remark}

\subsection{General stability inequality}\label{subsec:3.2}

Our main theorem states that any solution with an objective value close to the global optimum must be structurally close to the benchmark. To establish this, we first quantify the minimum penalty incurred by a single misclassified point (Lemma~\ref{lem:pointwise_cost}) and then aggregate these penalties to bound the total excess loss (Lemma~\ref{lem:aggregate_gap}). A key subtlety is that the bound involves the prototype displacement $\eta$, which must be controlled to keep the effective margin $\gamma-\eta$ positive. For standard objectives, this control is guaranteed in a \emph{small-gap regime} via the parameter-stability results in Section~\ref{subsec:5.3}; here we state the stability theorem conditionally on $\eta<\gamma$ and explicitly record this linkage.

Let $\delta \ge 0$ be the multiplicative optimality gap from \eqref{eq:mult-gap}, i.e.,
$\mathcal{L}_n(\hat{\mathcal{C}}, \hat{\theta}) \le (1+\delta)\mathsf{OPT}_n$.
Let $\delta_{\text{approx}} \ge 0$ denote the benchmark approximation error:
\begin{equation}\label{eq:delta-approx}
    \mathcal{L}_n(\mathcal{C}^*, \theta^*) = (1 + \delta_{\text{approx}})\,\mathsf{OPT}_n.
\end{equation}
Let $\eta$ denote the \emph{prototype displacement} under the optimal label matching $\pi$:
\begin{equation}\label{eq:eta-def}
    \eta := \min_{\pi\in\Pi_k}\max_{j\in[k]} d\bigl(\hat{\theta}_j,\theta^*_{\pi(j)}\bigr).
\end{equation}
Recall from Section~\ref{sec:2} that we work under the standing assumption that $g$ is $L_g$-Lipschitz on the relevant domain (covering both benchmark and candidate configurations).

\begin{remark}[Small-gap regime and displacement control]\label{rem:small_gap_eta}
The inequalities below require $\eta<\gamma$ so that the effective margin $\gamma-\eta$ is positive and $\Delta_g(\gamma-\eta;D_{\text{eff}})$ is informative.
For center-based objectives such as $k$-means (and for discrete prototype sets such as $k$-medoids), Section~\ref{subsec:5.3} shows that $\eta$ is intrinsically controlled by the optimization gap: in particular, $\eta \lesssim D_{\text{eff}}\sqrt{\delta+\delta_{\text{approx}}}$ for $k$-means and $\eta=0$ once $\delta$ is below a discrete swap threshold for $k$-medoids. Consequently, whenever $\delta$ is sufficiently small (relative to $\gamma/D_{\text{eff}}$ and the discrete gap, respectively), the condition $\eta<\gamma$ holds automatically.
\end{remark}

\begin{lemma}[Pointwise Cost of Misclassification]\label{lem:pointwise_cost}
Consider a point $x_i$ belonging to the benchmark cluster $C_j^*$. If a candidate partition $\hat{\mathcal{C}}$ assigns $x_i$ to a cluster $\hat{C}_l$ with $l \ne \pi(j)$ (i.e., $x_i$ is misclassified), and the candidate prototypes satisfy the displacement bound $\eta < \gamma$, then the loss contribution satisfies:
\begin{equation}\label{eq:pointwise-cost}
    g(d(x_i, \hat{\theta}_l)) - g(d(x_i, \theta_j^*)) \ge \Delta_g(\gamma - \eta; D_{\text{eff}}).
\end{equation}
\end{lemma}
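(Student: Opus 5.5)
The plan is a direct triangle-inequality argument combined with the monotonicity of $g$ and the definition of $\Delta_g$ (Definition~\ref{def:increment}). First, a convention on labels. I will read the statement with a single permutation $\pi$: the one attaining the minimum in \eqref{eq:eta-def}, which the lemma takes to be the same matching that defines misclassification. Thus candidate cluster $\hat C_l$ corresponds to benchmark cluster $C^*_m$, where $m$ is the benchmark label matched to $l$ under $\pi$, and $d(\hat\theta_l,\theta^*_m)\le\eta$. The hypothesis that $x_i$ is misclassified then means exactly $m\neq j$. Because $\pi$ is a bijection, this is equivalent to $l\neq\pi(j)$ in the direction convention used in the statement.

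\emph{Step 1 (benchmark margin).} Set $r:=d(x_i,\theta_j^*)$. Since $i\in C_j^*$, we have $0\le r\le D_{\mathrm{eff}}$ by \eqref{eq:Deff}. Because $m\neq j$, inequality \eqref{eq:benchmark-margin-diff} gives $d(x_i,\theta_m^*)\ge r+\gamma$.

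\emph{Step 2 (transfer to the candidate prototype).} By the triangle inequality,
\[
d(x_i,\hat\theta_l)\;\ge\; d(x_i,\theta_m^*)-d(\theta_m^*,\hat\theta_l)\;\ge\; r+\gamma-\eta .
\]
The assumption $\eta<\gamma$ ensures that $r+\gamma-\eta>r\ge 0$, so the effective shift $\gamma-\eta$ is a legitimate positive margin in the sense of Definition~\ref{def:increment}.

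\emph{Step 3 (monotonicity and infimum).} Since $g$ is non-decreasing (Definition~\ref{def:admissible_loss}(i)),
\[
g(d(x_i,\hat\theta_l))-g(r)\;\ge\; g\bigl(r+(\gamma-\eta)\bigr)-g(r).
\]
Because $r\in[0,D_{\mathrm{eff}}]$, the right-hand side is one of the terms in the infimum defining $\Delta_g(\gamma-\eta;D_{\mathrm{eff}})$, so it is at least that infimum. This yields \eqref{eq:pointwise-cost}. Neither continuity nor the Lipschitz assumption is needed here.

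The only real obstacle is the bookkeeping in the label convention. The matching $\pi$ that realizes $\eta$ in \eqref{eq:eta-def} need not coincide with the matching that realizes $p$ in Definition~\ref{def:misclassification}. I would handle this by stating explicitly that the lemma is applied with the single matching $\pi$ from \eqref{eq:eta-def}, and that ``misclassified'' is understood relative to that same $\pi$. If the misclassification rate is later needed under its own optimal matching, the count of points misclassified under the displacement-optimal $\pi$ still upper-bounds $np$, since $p$ minimizes over all permutations. So the aggregation step (Lemma~\ref{lem:aggregate_gap}) can use this $\pi$ without loss.
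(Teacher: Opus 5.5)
Your proof is correct and is essentially the paper's argument. You pass through the matched benchmark prototype $\theta^*_{\pi^{-1}(l)}$ (your $\theta^*_m$) by the triangle inequality, apply the margin inequality \eqref{eq:benchmark-margin-diff}, and finish with monotonicity of $g$ and the infimum defining $\Delta_g(\gamma-\eta;D_{\mathrm{eff}})$ at $r=d(x_i,\theta_j^*)\le D_{\mathrm{eff}}$. Your explicit treatment of the matching is also a worthwhile refinement: the paper silently identifies the $\eta$-optimal matching with the $p$-optimal one, and your observation that misclassifications counted under the $\eta$-optimal matching number at least $np$ (with $\Delta_g\ge 0$) is exactly what makes the aggregation in Lemma~\ref{lem:aggregate_gap} rigorous.
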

\begin{proof}
By the definition of $\eta$ in \eqref{eq:eta-def}, we have $d(\hat{\theta}_l,\theta^*_{\pi^{-1}(l)})\le \eta$, hence
\[
d(x_i,\hat{\theta}_l)\ge d(x_i,\theta^*_{\pi^{-1}(l)})-\eta.
\]
Since $l\ne \pi(j)$, we have $\pi^{-1}(l)\ne j$. By the geometric margin property \eqref{eq:benchmark-margin-diff},
\[
d(x_i,\theta^*_{\pi^{-1}(l)}) \ge d(x_i,\theta_j^*)+\gamma,
\]
so $d(x_i,\hat{\theta}_l)\ge d(x_i,\theta_j^*)+\gamma-\eta$. Since $d(x_i,\theta_j^*)\le D_{\text{eff}}$ and $g$ is non-decreasing, it follows that
\[
g(d(x_i,\hat{\theta}_l)) - g(d(x_i,\theta_j^*)) \ge \inf_{0\le r\le D_{\text{eff}}}\{g(r+\gamma-\eta)-g(r)\}
= \Delta_g(\gamma-\eta;D_{\text{eff}}).
\]
\end{proof}

\begin{lemma}[Aggregate Excess Loss]\label{lem:aggregate_gap}
Let $p = p(\hat{\mathcal{C}}, \mathcal{C}^*)$ be the misclassification rate. The objective difference satisfies the lower bound:
\begin{equation}\label{eq:aggregate-gap}
    \mathcal{L}_n(\hat{\mathcal{C}}, \hat{\theta}) - \mathcal{L}_n(\mathcal{C}^*, \theta^*) \ge n \cdot p \cdot \Delta_g(\gamma - \eta; D_{\text{eff}}) - n \cdot L_g \eta.
\end{equation}
\end{lemma}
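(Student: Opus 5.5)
We will decompose the objective difference pointwise. Fix the permutation $\pi$ attaining the minimum in \eqref{eq:eta-def}, so that $d(\hat\theta_j,\theta^*_{\pi(j)})\le\eta$ for all $j$. For each $i\in[n]$, let $j(i)$ denote its benchmark label ($i\in C^*_{j(i)}$) and $l(i)$ its candidate label ($i\in\hat C_{l(i)}$). Then
\[
\mathcal L_n(\hat{\mathcal C},\hat\theta)-\mathcal L_n(\mathcal C^*,\theta^*)=\sum_{i=1}^n\bigl[g(d(x_i,\hat\theta_{l(i)}))-g(d(x_i,\theta^*_{j(i)}))\bigr].
\]
We split the index set into the correctly matched points $M^c:=\{i: l(i)=\pi(j(i))\}$ and the mismatched points $M:=\{i: l(i)\ne\pi(j(i))\}$, and bound each group separately.

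\textbf{Mismatched points.} For $i\in M$, Lemma~\ref{lem:pointwise_cost} applies directly (it was proved using exactly this $\pi$), so each summand is at least $\Delta_g(\gamma-\eta;D_{\mathrm{eff}})$. By \eqref{eq:misclass-rate-equiv}, $|M|=\sum_j|C_j^*\setminus\hat C_{\pi(j)}|\ge n\,p$, since $p$ is the minimum over permutations and $\pi$ is just one of them. Since $\Delta_g\ge0$, the contribution of $M$ is at least $n p\,\Delta_g(\gamma-\eta;D_{\mathrm{eff}})$.

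\textbf{Correctly matched points.} For $i\in M^c$ we have $\hat\theta_{l(i)}=\hat\theta_{\pi(j(i))}$, which lies within $\eta$ of $\theta^*_{j(i)}$. We will show $\pi$ and $\pi^{-1}$ are handled consistently: the definition gives $d(\hat\theta_l,\theta^*_{\pi(l)})\le\eta$, while Lemma~\ref{lem:pointwise_cost} used $\pi^{-1}$. We will fix the convention that $\hat\theta_{\pi(j)}$ is matched to $\theta^*_j$, reindexing the minimizer in \eqref{eq:eta-def} by its inverse if needed, since the minimum is over a group. The triangle inequality then gives $|d(x_i,\hat\theta_{l(i)})-d(x_i,\theta^*_{j(i)})|\le\eta$. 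Both distances lie in $[0,D_{\mathrm{eff}}+\eta]\subseteq[0,D_{\mathrm{eff}}+\Delta_0]$, because $\eta<\gamma\le\Delta_0$. The standing Lipschitz assumption therefore gives a summand of at least $-L_g\eta$. Summing over $|M^c|\le n$ points yields at least $-nL_g\eta$.

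Adding the two bounds gives \eqref{eq:aggregate-gap}. The main point needing care is this bookkeeping: the permutation in $\eta$ and the one in $p$ need not coincide. This is resolved by working with the $\eta$-optimal $\pi$ throughout and using $|M|\ge np$. A secondary issue is checking that the Lipschitz domain covers the relevant distances, which uses $\eta<\gamma$.
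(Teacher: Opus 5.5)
Your proof is correct and takes the same route as the paper. You split into mismatched points, each costing at least $\Delta_g(\gamma-\eta;D_{\mathrm{eff}})$ by Lemma~\ref{lem:pointwise_cost}, and correctly matched points, each losing at most $L_g\eta$ by the Lipschitz assumption, then sum. Your bookkeeping is actually tighter than the paper's, which simply asserts $|M|/n=p$ for ``the optimal matching'' and so tacitly identifies the $\eta$-optimal and $p$-optimal permutations (and glosses over the $\pi$ versus $\pi^{-1}$ convention), whereas you fix the $\eta$-optimal $\pi$ and use $|M|\ge np$ together with $\Delta_g\ge 0$, which makes that step rigorous.
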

\begin{proof}
Let $M\subseteq [n]$ be the set of misclassified indices under the optimal matching $\pi$. Then $|M|/n = p$.
For each $i\in M$, Lemma~\ref{lem:pointwise_cost} yields a loss increase of at least $\Delta_g(\gamma-\eta;D_{\text{eff}})$.
For each $i\notin M$, the candidate assignment matches the benchmark assignment. By the Lipschitz property of $g$ and the bound $d(\hat{\theta}_{\pi(j)},\theta_j^*)\le \eta$, the loss decreases by at most $L_g \eta$.
Summing over all points:
\[
\mathcal{L}_n(\hat{\mathcal{C}}, \hat{\theta}) - \mathcal{L}_n(\mathcal{C}^*, \theta^*) \ge |M| \Delta_g(\gamma-\eta; D_{\text{eff}}) - (n-|M|) L_g \eta \ge np \Delta_g - n L_g \eta.
\]
\end{proof}

\begin{theorem}[Global stability]\label{thm:main_stability}
Assume the separable regime ($\gamma>0$) and that the uniform increment $\Delta_g(\gamma;D_{\text{eff}})>0$.
Let $(\hat{\mathcal{C}}, \hat{\theta})$ be any candidate solution satisfying \eqref{eq:mult-gap}.
Assume further that the displacement satisfies $\eta<\gamma$ (which holds automatically in the small-gap regime for standard objectives; see Remark~\ref{rem:small_gap_eta} and Section~\ref{subsec:5.3}).
Then the misclassification rate satisfies
\begin{equation}\label{eq:main_bound}
    p(\hat{\mathcal{C}}, \mathcal{C}^*) \;\le\; \underbrace{\frac{\mathsf{OPT}_n}{n\,\Delta_g(\gamma-\eta; D_{\text{eff}})}\,\bigl(\delta+\delta_{\text{approx}}\bigr)}_{\text{Optimization Error Term}}
    \;+\; \underbrace{\frac{L_g\,\eta}{\Delta_g(\gamma-\eta; D_{\text{eff}})}}_{\text{Displacement Term}}.
\end{equation}
\end{theorem}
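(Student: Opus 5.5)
The plan is to sandwich the objective difference $\mathcal{L}_n(\hat{\mathcal{C}},\hat\theta)-\mathcal{L}_n(\mathcal{C}^*,\theta^*)$ between a lower bound coming from Lemma~\ref{lem:aggregate_gap} and an upper bound coming from the two optimality-gap definitions. Once both bounds are in place, dividing by the positive increment isolates $p$.

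\textbf{Upper bound.} I would combine the multiplicative gap \eqref{eq:mult-gap} with the definition \eqref{eq:delta-approx} of $\delta_{\text{approx}}$:
\[
\mathcal{L}_n(\hat{\mathcal{C}},\hat\theta)-\mathcal{L}_n(\mathcal{C}^*,\theta^*)\le (1+\delta)\mathsf{OPT}_n-(1+\delta_{\text{approx}})\mathsf{OPT}_n=(\delta-\delta_{\text{approx}})\mathsf{OPT}_n .
\]
Since $\delta_{\text{approx}}\ge 0$ and $\mathsf{OPT}_n\ge 0$, this is at most $(\delta+\delta_{\text{approx}})\mathsf{OPT}_n$. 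Stating it in this weaker symmetric form matches the bound in \eqref{eq:main_bound} and leaves some slack.

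\textbf{Lower bound and isolation of $p$.} Lemma~\ref{lem:aggregate_gap} applies because $\eta<\gamma$, and it gives the lower bound $np\,\Delta_g(\gamma-\eta;D_{\text{eff}})-nL_g\eta$. Chaining the two bounds gives
\[
np\,\Delta_g(\gamma-\eta;D_{\text{eff}})\le(\delta+\delta_{\text{approx}})\mathsf{OPT}_n+nL_g\eta .
\]
Dividing by $n\Delta_g(\gamma-\eta;D_{\text{eff}})$ yields \eqref{eq:main_bound}.

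\textbf{Main obstacle: positivity of the denominator.} The theorem assumes $\Delta_g(\gamma;D_{\text{eff}})>0$, but the bound divides by $\Delta_g(\gamma-\eta;D_{\text{eff}})$, whose margin is smaller. Monotonicity of $g$ alone does not give monotonicity of $\Delta_g$ in its margin in a way that transfers positivity from $\gamma$ down to $\gamma-\eta$. I would handle this by cases:
\begin{itemize}
\item If $\Delta_g(\gamma-\eta;D_{\text{eff}})=0$, the right-hand side of \eqref{eq:main_bound} is interpreted as $+\infty$ (the vacuous convention already adopted after Definition~\ref{def:increment}), so there is nothing to prove.
\item Otherwise the division is legitimate and the argument above goes through.
\end{itemize}

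I would also double-check two points in the lemmas. First, the matching $\pi$ used to define $\eta$ in \eqref{eq:eta-def} must be the same matching used in counting misclassifications. If the optimal matchings differ, I would count errors under the $\eta$-matching $\pi_\eta$. That count is at least $np(\hat{\mathcal{C}},\mathcal{C}^*)$, since $p$ minimizes over all permutations, so the inequality is preserved. Second, the Lipschitz step requires distances to lie in $[0,D_{\text{eff}}+\Delta_0]$; this holds by the standing assumption.
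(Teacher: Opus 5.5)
Your proposal is correct and follows the paper's proof. Both arguments bound the objective difference above by the conservative $(\delta+\delta_{\text{approx}})\mathsf{OPT}_n$ obtained from the two gap definitions, bound it below using Lemma~\ref{lem:aggregate_gap}, and divide by $n\,\Delta_g(\gamma-\eta;D_{\text{eff}})$. Your two side remarks are sound refinements of points the paper leaves implicit: the vacuous reading when $\Delta_g(\gamma-\eta;D_{\text{eff}})=0$, and counting errors under the $\eta$-matching, which can only give a count of at least $np$.
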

\begin{proof}
We upper bound the objective difference using the optimality gaps. Write
\[
\mathcal{L}_n(\hat{\mathcal{C}},\hat{\theta}) - \mathcal{L}_n(\mathcal{C}^*,\theta^*)
= \bigl(\mathcal{L}_n(\hat{\mathcal{C}},\hat{\theta}) - \mathsf{OPT}_n\bigr)
- \bigl(\mathcal{L}_n(\mathcal{C}^*,\theta^*) - \mathsf{OPT}_n\bigr).
\]
Since $\mathcal{L}_n(\hat{\mathcal{C}},\hat{\theta}) - \mathsf{OPT}_n \le \delta\,\mathsf{OPT}_n$ and
$\mathcal{L}_n(\mathcal{C}^*,\theta^*) - \mathsf{OPT}_n = \delta_{\text{approx}}\,\mathsf{OPT}_n \ge 0$,
we have the conservative bound
\[
\mathcal{L}_n(\hat{\mathcal{C}},\hat{\theta}) - \mathcal{L}_n(\mathcal{C}^*,\theta^*)
\le \bigl(\delta+\delta_{\text{approx}}\bigr)\mathsf{OPT}_n.
\]
Combining this upper bound with the lower bound from Lemma~\ref{lem:aggregate_gap} and rearranging for $p$ yields \eqref{eq:main_bound}.
\end{proof}

The bound \eqref{eq:main_bound} is non-asymptotic and deterministic. Typically, the average optimal loss $\mathsf{OPT}_n/n$ is of the same order as the worst-case within-cluster loss $g(D_{\text{eff}})$. This yields the following simplified condition-number form.

\subsection{Condition-number bound}

Since $d(x_i,\theta^*_j) \le D_{\mathrm{eff}}$ for all $i\in C_j^*$,
we have
\[
L_n(\mathcal C^*,\theta^*) \le n\, g(D_{\mathrm{eff}}),
\]
and therefore $OPT_n/n \le L_n(\mathcal C^*,\theta^*)/n\le g(D_{\mathrm{eff}})$.
This yields the following condition-number form.

\begin{corollary}[Condition number form]\label{cor:kappa_bound}
If $\Delta_g(\gamma; D_{\mathrm{eff}})/
\Delta_g(\gamma-\eta; D_{\mathrm{eff}})\le C$ for some constant $C$, then
\begin{equation}\label{eq:kappa-cor}
    p(\hat{\mathcal{C}}, \mathcal{C}^*) \;\lesssim\; \kappa(g,\gamma,D_{\text{eff}}) \cdot \bigl(\delta + \delta_{\text{approx}}\bigr) \;+\; \frac{L_g\,\eta}{\Delta_g(\gamma; D_{\text{eff}})}.
\end{equation}
\end{corollary}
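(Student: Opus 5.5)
The plan is to start from the bound \eqref{eq:main_bound} of Theorem~\ref{thm:main_stability} and to rewrite each of its two terms in terms of the unperturbed increment $\Delta_g(\gamma;D_{\mathrm{eff}})$. The hypotheses of the theorem ($\gamma>0$, $\Delta_g(\gamma;D_{\mathrm{eff}})>0$, $\eta<\gamma$) are implicitly in force. The corollary adds the assumption $\Delta_g(\gamma;D_{\mathrm{eff}})\le C\,\Delta_g(\gamma-\eta;D_{\mathrm{eff}})$. Since $\Delta_g(\gamma;D_{\mathrm{eff}})>0$, this assumption forces $\Delta_g(\gamma-\eta;D_{\mathrm{eff}})>0$, so \eqref{eq:main_bound} is a finite bound. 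It also gives
\[
\frac{1}{\Delta_g(\gamma-\eta;D_{\mathrm{eff}})}\le \frac{C}{\Delta_g(\gamma;D_{\mathrm{eff}})}.
\]

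\textbf{Optimization error term.} First I use the observation recorded just before the corollary. For every $i\in C_j^*$ we have $d(x_i,\theta_j^*)\le D_{\mathrm{eff}}$, and $g$ is non-decreasing, so $\mathcal{L}_n(\mathcal{C}^*,\theta^*)\le n\,g(D_{\mathrm{eff}})$. Since $\mathsf{OPT}_n$ is the minimum of $\mathcal{L}_n$, this gives $\mathsf{OPT}_n/n\le g(D_{\mathrm{eff}})$. Combining with the displayed inequality, the first term of \eqref{eq:main_bound} is at most
\[
\frac{C\,g(D_{\mathrm{eff}})}{\Delta_g(\gamma;D_{\mathrm{eff}})}\,(\delta+\delta_{\text{approx}})
= C\,\kappa(g,\gamma,D_{\mathrm{eff}})\,(\delta+\delta_{\text{approx}}),
\]
by Definition~\ref{def:condition_number}.

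\textbf{Displacement term.} The same ratio bound turns the second term of \eqref{eq:main_bound} into at most $C\,L_g\eta/\Delta_g(\gamma;D_{\mathrm{eff}})$.

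Adding the two estimates yields \eqref{eq:kappa-cor}, with the implicit constant in $\lesssim$ equal to $C$. The argument is a direct substitution, so there is no real obstacle.

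The one point needing care is the meaning of $\lesssim$. The paper's convention asks for a universal constant, whereas here the constant is the $C$ from the hypothesis. I will therefore state explicitly that the implied constant is $C$. I will also note that $C$ is universal in the intended small-displacement regime: for example, for $k$-means $\Delta_g(\gamma-\eta;D_{\mathrm{eff}})=(\gamma-\eta)^2$, so the ratio is at most $4$ whenever $\eta\le\gamma/2$.
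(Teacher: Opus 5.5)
Your proposal is correct and follows the paper's route exactly: invoke Theorem~\ref{thm:main_stability}, bound $\mathsf{OPT}_n/n\le g(D_{\mathrm{eff}})$ via $\mathcal{L}_n(\mathcal{C}^*,\theta^*)\le n\,g(D_{\mathrm{eff}})$, and use the ratio hypothesis to replace $1/\Delta_g(\gamma-\eta;D_{\mathrm{eff}})$ by $C/\Delta_g(\gamma;D_{\mathrm{eff}})$ in both terms. Two of your points are clarifications the paper leaves implicit and are worth keeping: the hypothesis forces $\Delta_g(\gamma-\eta;D_{\mathrm{eff}})>0$, and the implied constant is $C$ rather than a universal one.
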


This corollary emphasizes the role of the condition number $\kappa$ when the displacement $\eta$ is small relative to the margin (specifically, when $\Delta_g(\gamma-\eta; D) \approx \Delta_g(\gamma; D)$).
In particular, for medoid-based formulations, if the benchmark prototypes are themselves feasible medoids and the algorithm returns the same medoids (so that $\eta=0$), the error is governed purely by $\kappa\cdot(\delta+\delta_{\text{approx}})$.

\subsection{Interpretation of the Stability Bound}

Theorem~\ref{thm:main_stability} decouples the statistical difficulty of the
clustering problem from the algorithmic task of minimizing the objective.
The bound separates three sources of error.

\begin{itemize}
\item \emph{Optimization accuracy ($\delta$).}
The algorithm only needs to deliver a near-optimal objective value.
The guarantee is algorithm-agnostic and does not depend on how the
solution $(\hat C,\hat\theta)$ is obtained.

\item \emph{Geometry ($\kappa$).}
The intrinsic difficulty of the clustering instance is captured by the
condition number $\kappa$. When the margin $\gamma$ is small relative to
the effective radius $D_{\mathrm{eff}}$, the increment
$\Delta_g(\gamma;D_{\mathrm{eff}})$ becomes small and $\kappa$ grows
large, making structural recovery unstable even for very small
optimization gaps.

\item \emph{Prototype displacement ($\eta$).}
The term linear in $\eta$ reflects the effect of shifting the prototype
locations, which effectively moves the cluster decision boundaries.
For center-based methods such as $k$-means, $\eta$ is not a free
parameter but is itself controlled by the optimization gap. In
Section~\ref{subsec:5.3} we show that for $k$-means,
\[
\eta \lesssim D_{\mathrm{eff}}\sqrt{\delta+\delta_{\mathrm{approx}}},
\]
so that in the small-gap regime the overall error behaves essentially
as $O(\kappa\delta)$.
\end{itemize}

\section{Examples}\label{sec:4}

In this section, we instantiate the general stability framework for specific choices of the loss generator $g$. We compute the resulting condition numbers for standard objectives ($k$-means and continuous $k$-median) and a robust variant (Huber), and we provide a sharp two-cluster analysis to highlight the trade-offs between separation and cluster balance.

Throughout this section, it is useful to distinguish the \emph{loss generator} from the \emph{prototype feasibility set}. The squared loss $g(r)=r^2$ with unconstrained prototypes corresponds to $k$-means, while the linear loss $g(r)=r$ with unconstrained prototypes corresponds to continuous $k$-median. The discrete prototype constraint of $k$-medoids is conceptually distinct and is not the object of the exact-recovery threshold in Section~\ref{subsec:4.3}.

\subsection{Condition numbers for common objectives}\label{subsec:4.1}

The two most common prototype clustering formulations correspond to the squared loss ($k$-means) and the linear loss (continuous $k$-median).

    

\emph{The squared loss} ($k$-means).
Let $g(r)=r^2$. The uniform increment is
\[
\Delta_g(\gamma;D)=\inf_{0\le r\le D}\bigl\{(r+\gamma)^2-r^2\bigr\}
=\inf_{0\le r\le D}\bigl\{\gamma^2+2\gamma r\bigr\}
=\gamma^2,
\]
attained at $r=0$ due to the strict convexity of the squared loss. The within-cluster scale is $g(D_{\text{eff}})=D_{\text{eff}}^2$. Hence
\begin{equation}\label{eq:kappa-means}
\kappa_{\text{means}}(\gamma,D_{\text{eff}})=\frac{D_{\text{eff}}^2}{\gamma^2}.
\end{equation}
The stability relation $p \lesssim (D_{\text{eff}}/\gamma)^2\,\delta$ shows that $k$-means becomes ill-conditioned when $\gamma$ is small relative to $D_{\text{eff}}$. The quadratic growth of $g$ also makes $D_{\text{eff}}$ sensitive to outliers.

\emph{The linear loss} (continuous $k$-median).
Let $g(r)=r$. Then $\Delta_g(\gamma;D)=\gamma$ and $g(D_{\text{eff}})=D_{\text{eff}}$, so
\begin{equation}\label{eq:kappa-medoids}
\kappa_{\text{med}}(\gamma,D_{\text{eff}})=\frac{D_{\text{eff}}}{\gamma}.
\end{equation}
Compared to $k$-means, the dependence on the relative margin is linear rather than quadratic. However, as the two-cluster analysis in Section~\ref{subsec:4.3} shows, linear objectives can be substantially more sensitive to severe imbalance when one demands \emph{exact} recovery of the benchmark.

The same condition-number scaling also applies to medoid-type formulations at the level of the loss increment itself; however, exact-recovery thresholds for $k$-medoids require a separate discrete argument and are not identified here with the continuous $k$-median result.

\begin{figure}[t]
\centering
\begin{tikzpicture}
\begin{axis}[
    width=11cm,
    height=7cm,
    xlabel={Margin ratio $\gamma/D$},
    ylabel={Condition number $\kappa$},
    ymode=log,
    xmin=0.1, xmax=2.5,
    samples=400,
    legend pos=north east
]

\addplot[thick,black] {1/x^2};
\addlegendentry{$k$-means}

\addplot[thick,black,dashed] {1/x};
\addlegendentry{$k$-medoids}

\addplot[
    thick,
    blue,
    mark=o,
    mark size=2pt,
    mark repeat=6
]
{ (0.5 - 0.125) / ( x <= 0.5 ? 0.5*x^2 : 0.5*x - 0.125 ) };
\addlegendentry{Huber, $\tau/D_{\mathrm{eff}}=0.5$}

\addplot[
    thick,
    red,
    mark=triangle,
    mark size=2pt,
    mark repeat=6
]
{ (1 - 0.5) / ( x <= 1 ? 0.5*x^2 : x - 0.5 ) };
\addlegendentry{Huber, $\tau/D_{\mathrm{eff}}=1$}

\addplot[
    thick,
    green!60!black,
    mark=square,
    mark size=2pt,
    mark repeat=6
]
{1/x^2};
\addlegendentry{Huber, $\tau/D_{\mathrm{eff}}=2$}

\end{axis}
\end{tikzpicture}
    \caption{Condition number $\kappa$ as a function of the relative margin $\gamma/D_{\mathrm{eff}}$. Huber loss (shown with markers, $\tau/D_{\mathrm{eff}}\in\{0.5,1,2\}$) interpolates between the quadratic scaling of $k$-means and the linear scaling of $k$-medoids, providing a tunable trade-off between stability and robustness. The $k$-means curve coincides with the Huber curve when $\tau/D_{\mathrm{eff}}=2$.}
    \label{fig:huber_kappa}

\end{figure}
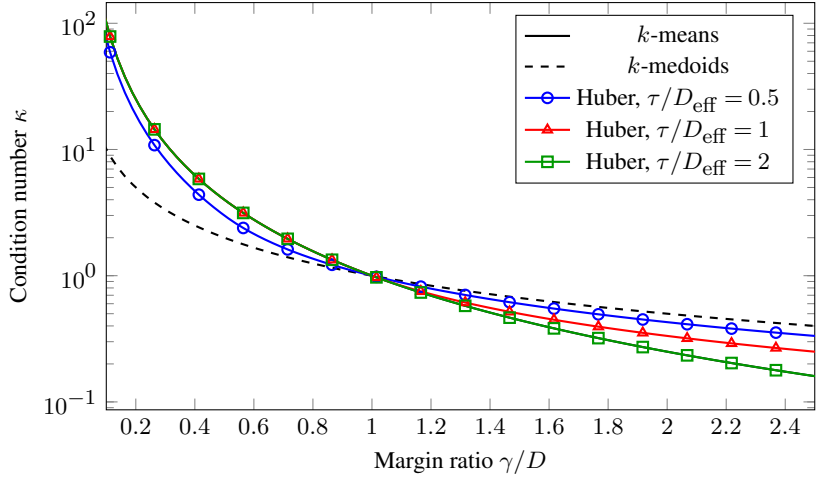

\subsection{Robustness via Huber loss}\label{subsec:4.2}

To interpolate between the curvature of squared loss and the robustness of linear loss, consider the Huber loss with tuning parameter $\tau>0$:
\begin{equation}\label{eq:huber-def}
g_\tau(r):=\begin{cases}
\frac{1}{2}r^2, & 0\le r\le \tau,\\
\tau r-\frac{1}{2}\tau^2, & r>\tau.
\end{cases}
\end{equation}
Since $g_\tau$ is convex, the increment $r\mapsto g_\tau(r+\gamma)-g_\tau(r)$ is non-decreasing in $r$. Therefore, the uniform loss increment satisfies, for any $D\ge 0$:
\begin{equation}\label{eq:huber-increment}
\Delta_{g_\tau}(\gamma;D)=g_\tau(\gamma)=
\begin{cases}
\frac{1}{2}\gamma^2, & \gamma\le \tau,\\
\tau\gamma-\frac{1}{2}\tau^2, & \gamma>\tau.
\end{cases}
\end{equation}
The resulting condition number is
\begin{equation}\label{eq:huber-kappa}
\kappa_\tau(\gamma,D_{\text{eff}})=\frac{g_\tau(D_{\text{eff}})}{g_\tau(\gamma)}.
\end{equation}

The following cases make the interpolation explicit (see Figure \ref{fig:huber_kappa}).
\begin{itemize}
\item If $D_{\text{eff}}\le \tau$ and $\gamma\le \tau$, then $\kappa_\tau=(D_{\text{eff}}/\gamma)^2$ (the $k$-means regime).
\item If $D_{\text{eff}}>\tau$ and $\gamma>\tau$, then
\[
\kappa_\tau(\gamma,D_{\text{eff}})
=\frac{\tau D_{\text{eff}}-\frac{1}{2}\tau^2}{\tau\gamma-\frac{1}{2}\tau^2}
\approx \frac{D_{\text{eff}}}{\gamma},
\]
recovering the linear-loss (continuous $k$-median) scaling up to constants.
\item In the mixed regimes (e.g., $D_{\text{eff}}>\tau$ but $\gamma\le\tau$), the numerator is linear while the denominator is quadratic, yielding intermediate conditioning.
\end{itemize}

A practical tuning principle suggested by \eqref{eq:huber-increment}--\eqref{eq:huber-kappa} is to choose the Huber threshold at the margin scale, $\tau \approx \gamma$. Then typical within-cluster residuals $r$ lie in the quadratic region $r \le \tau$, preserving the curvature of squared loss, while a boundary crossing increases the distance by roughly $\gamma$, moving the loss into the linear regime. This yields $k$-means-like behavior for moderate residuals while preventing boundary or outlier points from incurring excessively large quadratic penalties.

\subsection{Phase transition in a two-cluster model}\label{subsec:4.3}

We now specialize the geometry to a two-cluster setting to quantify the limits of exact recoverability and to isolate the role of imbalance.

Consider $k=2$ and suppose the benchmark clusters satisfy
$C_1^*\subset B(\theta_1^*,D)$ and $C_2^*\subset B(\theta_2^*,D)$ with
$d(\theta_1^*,\theta_2^*)=\Delta>2D$.
Let $c_b\in(0,1/2]$ be the balance coefficient, i.e., $c_b=\min\{|C_1^*|,|C_2^*|\}/n$.
In this model, $D_{\text{eff}}=D$ and the margin is $\gamma=\Delta-2D$.

We ask how large $\Delta/D$ must be to guarantee that the benchmark partition is the unique minimizer of the profiled clustering objective, up to label swapping.
That is, under what separation is $\,\mathcal C^*\,$ the unique minimizer of
\[
\mathcal C \mapsto \min_{\theta}\mathcal L_n(\mathcal C,\theta)
\]
up to label swapping?

The next theorem gives sufficient conditions for exact recovery.
For squared loss, the statement concerns standard $k$-means.
For linear loss, the statement below is formulated for the \emph{continuous one-dimensional (collinear) $k$-median problem}; the corresponding discrete $k$-medoids problem is not covered by this theorem.
These thresholds are order-wise tight (up to universal constants);
see Appendix~\ref{app:sec4} for the lower bounds.

\begin{theorem}[Sufficient thresholds for exact recovery]\label{thm:two_cluster_thresholds}
Under the two-ball model with balance $c_b$, the following conditions are sufficient for exact recovery of the benchmark partition as the unique minimizer of the profiled objective $\mathcal C\mapsto \min_\theta \mathcal L_n(\mathcal C,\theta)$, up to label swapping:
\begin{enumerate}
    \item[(i)] \textbf{For $k$-means:}
    \begin{equation}\label{eq:kmeans_threshold}
        \frac{\Delta}{D} > 2 + \frac{2}{\sqrt{c_b}}.
    \end{equation}
    \item[(ii)] \textbf{For one-dimensional continuous $k$-median (collinear two-ball model):}
    \begin{equation}\label{eq:kmedian_threshold}
        \frac{\Delta}{D} > 2 + \frac{1}{c_b}.
    \end{equation}
\end{enumerate}
\end{theorem}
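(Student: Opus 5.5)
The plan is to prove, for each loss, a contradiction statement: any partition $\mathcal C\neq\mathcal C^*$ (up to label swap) has strictly larger profiled cost than $\mathcal C^*$. Two structural facts do the work. First, an optimal profiled partition is a Voronoi partition with respect to its own optimal prototypes, since otherwise reassigning a point strictly lowers the cost. Second, the benchmark cost is at most $nD^2$ for $k$-means and at most $nD$ for the linear loss. So it suffices to show that every Voronoi-type partition separating the two balls differently from $\mathcal C^*$ costs more than the benchmark, or at least more than $\mathcal C^*$ itself.

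\emph{Step 1 (localization of prototypes).} Let $(\mathcal C,\theta)$ be a profiled minimizer and fix $j\in\{1,2\}$. Suppose neither prototype lies within distance $t$ of $\theta_j^*$. Then every point of $C_j^*$ is at distance at least $t-D$ from both prototypes. There are at least $c_b n$ such points, so the cost is at least $c_b n\, g(t-D)$. Comparing with the benchmark gives the following.
\begin{itemize}
\item For $k$-means, some prototype lies within $t_2:=D+D/\sqrt{c_b}$ of each $\theta_j^*$.
\item For the linear loss, some prototype lies within $t_1:=D+D/c_b$ of each $\theta_j^*$.
\end{itemize}
Under \eqref{eq:kmeans_threshold}, respectively \eqref{eq:kmedian_threshold}, we have $\Delta>2t$ in the relevant case. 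Hence the two prototypes are distinct and can be labeled so that $\theta_j$ is the one near $\theta_j^*$.

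\emph{Step 2 (from localization to exact separation).} The crude triangle-inequality argument would require $\Delta>2D+2t$, which is weaker than the theorem by roughly a factor of two. I would therefore replace it with an exchange argument.
\begin{itemize}
\item For $k$-means, let $A\subseteq C_2^*$ be the points assigned to the cluster containing most of $C_1^*$, and let $B\subseteq C_1^*$ be the symmetric set. Use the identity $\sum_{i\in C}\|x_i-\bar x_C\|^2=\frac{1}{2|C|}\sum_{i,i'\in C}\|x_i-x_{i'}\|^2$. This writes the cost of a mixed cluster as the sum of the within-ball costs plus a cross term $\frac{|A||C_1^*\setminus B|}{|C|}\,\|\bar x_A-\bar x_{C_1^*\setminus B}\|^2$. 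Centroids of subsets of the balls stay in the balls, so this cross term is at least $\frac{|A|\,c_b n}{n}\,(\Delta-2D)^2$ up to the size factors. Moving $A$ back to its own ball saves at most $|A|\cdot 4D^2$-type within-ball variance. Requiring the cross term to dominate, that is $(\Delta-2D)^2 c_b>4D^2$, gives exactly \eqref{eq:kmeans_threshold}. The sizes $|C|\le n$ and $|C_1^*\setminus B|\ge c_b n$ need care here, and Step 1 guarantees that the majority part of each cluster is indeed of size at least on the order of $c_b n$.
\item For the one-dimensional linear loss, points are collinear and the optimal two-median partition is contiguous, i.e.\ a threshold split. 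Any non-benchmark split therefore moves a contiguous block $A$ from the end of one ball into the other cluster. I would compute the cost change as a function of $|A|$ using the piecewise-linear, median-based profiled cost. The cluster receiving $A$ has its median pinned inside its own ball as long as $|A|$ is less than half of it. So each point of $A$ pays at least $\Delta-2D=\gamma$ extra, while the donor cluster's cost decreases by at most $|A|\cdot 2D$ from median relocation. The dangerous case is when $A$ is the whole smaller cluster, or when the median shifts. There the gain on the larger cluster is at most of order $nD$ against a loss of $c_b n\gamma$, which forces $\gamma>D/c_b$, i.e.\ \eqref{eq:kmedian_threshold}.
\end{itemize}

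\emph{Main obstacle.} I expect the main obstacle to be Step 2: getting the sharp constants $2/\sqrt{c_b}$ and $1/c_b$ rather than lossy multiples. This requires careful bookkeeping of cluster sizes in the $k$-means cross term, and of median shifts in the $k$-median case when the transferred block is a constant fraction of a cluster. I would first try treating the extreme configuration, where the entire minority cluster is merged and the majority is split, as the worst case, and then show that the cost difference is monotone in $|A|$ so that this case suffices.
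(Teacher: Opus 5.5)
\textbf{Gap in part (i).} You have the right ingredients for $k$-means, and they are the paper's: the ANOVA cross term with sub-centroids confined to their balls (so the centroid gap is at least $\gamma$), and the deletion bound of $4D^2$ per moved point. What is not established is the step that turns the cross terms into $c_b\,m\,\gamma^2$.

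Your justification, $|C_1^*\setminus B|\ge c_b n$ together with $|C|\le n$, forces $B=\emptyset$ when $C_1^*$ is the minority cluster. A per-cluster bound can be rescued when the majorities of $C_1^*$ and $C_2^*$ land in different output clusters. Then $|B|\le n_1/2$ and $|A|\le n_2/2$, so $(n_1-|B|)n_2\ge n_1|A|$, which is exactly $\frac{n_1-|B|}{n_1-|B|+|A|}\ge c_b$.

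However, your definition of $A,B$ breaks down in the remaining configurations, where one output cluster holds the majority of both benchmark clusters. An example is all of the minority merged with part of the majority, with the rest of the majority split off. Under the optimal label matching, the output cluster matched to $C_1^*$ then contains no point of $C_1^*$. Its cross term vanishes, yet every point in it counts as misclassified. The penalty for all $m$ errors must come from the other cluster's coefficient, so no per-cluster bound can work. Step 1 does not exclude these configurations: localizing centroids within $t_2=D+D/\sqrt{c_b}>D$ does not control where the bisector cuts the balls, as you note yourself. The monotonicity-in-$|A|$ reduction you sketch is also unproved.

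\textbf{The missing idea.} Label by optimal alignment, which gives $m=m_1+m_2\le n/2$ for every partition, and bound the two mixing coefficients jointly:
\[
\Psi(m_1,m_2)=\frac{(n_1-m_1)m_2}{n_1-m_1+m_2}+\frac{(n_2-m_2)m_1}{n_2-m_2+m_1}\;\ge\; c_b\,m .
\]
For fixed $m$, each summand has the form $z/4-d^2/(4z)$ with $z$ affine in $m_1$, so it is concave. It therefore suffices to check the endpoints $m_1\in\{0,m,n_1\}$. The endpoint $m_1=n_1$ is exactly your extreme configuration and reduces to $(n-m)^2\ge n_1 m$. This lemma covers every partition, so the Voronoi reduction and Step 1 become unnecessary.

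\textbf{Part (ii).} Step 1 is false for the linear loss as stated: $2t_1=2D+2D/c_b$ exceeds the threshold $2D+D/c_b$, so $\Delta>2t_1$ does not follow. Your 1D argument does not actually use it, and it essentially matches the paper's merge-penalty lemma, $\phi(A\cup B)\ge\phi(A)+\phi(B)+\gamma\min\{a,b\}$, proved by clipping the median. To get the sharp constant in the median-shift case, you must do two things. First, charge $\gamma$ to the minority points (clipping the median to the far side of the gap) rather than to the points of $A$. Second, bound the donor's gain by exactly $2Dm\le nD$ using $m\le n/2$, rather than by something ``of order $nD$''.
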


The sufficiency follows from establishing a deterministic macro-failure lower bound and a micro-mixing swap argument (detailed in Appendix~\ref{app:proof_thm41_full}). In the linear-loss case, the proof is carried out in the collinear one-dimensional model, where the absolute-deviation objective admits an exact merge-penalty argument. The order-wise tightness of the scaling is established via adversarial configurations in Appendix~\ref{app:tightness}.

    

\begin{figure}[t]
\centering
\begin{tikzpicture}
\begin{axis}[
    width=11cm,
    height=7cm,
    xlabel={Normalized separation $\Delta/D$},
    ylabel={Balance coefficient $c_b$},
    xmin=2, xmax=10,
    ymin=0, ymax=0.5,
    domain=0.01:0.5,
    samples=400,
    legend pos=south west,
]

\addplot[name path=kmeans, thick, blue]
({2 + 2/sqrt(x)}, {x});
\addlegendentry{$k$-means boundary: $2+2/\sqrt{c_b}$}

\addplot[name path=kmedoids, thick, red, dashed]
({2 + 1/x}, {x});
\addlegendentry{1D continuous $k$-median boundary: $2+1/{c_b}$}

\addplot[name path=right, draw=none]
(10,0.01) -- (10,0.5);

\addplot[blue, fill opacity=0.2, draw=none] fill between[of=kmeans and right];

\addplot[red, fill opacity=0.2, draw=none] fill between[of=kmedoids and right];

\addplot[only marks, mark=*, mark size=1.5pt] coordinates {(6,0.25)};

\node[anchor=west, font=\small] at (axis cs:6,0.25) {$\,(6,\,0.25)$};

\end{axis}
\end{tikzpicture}

\caption{
Two-cluster phase diagram in the geometric two-ball model.
Curves show the sufficient exact-recovery boundaries from
Theorem~\ref{thm:two_cluster_thresholds}.
Regions to the right of each curve correspond to parameter values for which the benchmark partition is guaranteed to be the unique minimizer of the profiled objective $C \mapsto \min _\theta L_n(C, \theta)$, up to label swapping.
}
\label{fig:phase_diagram}
\end{figure}
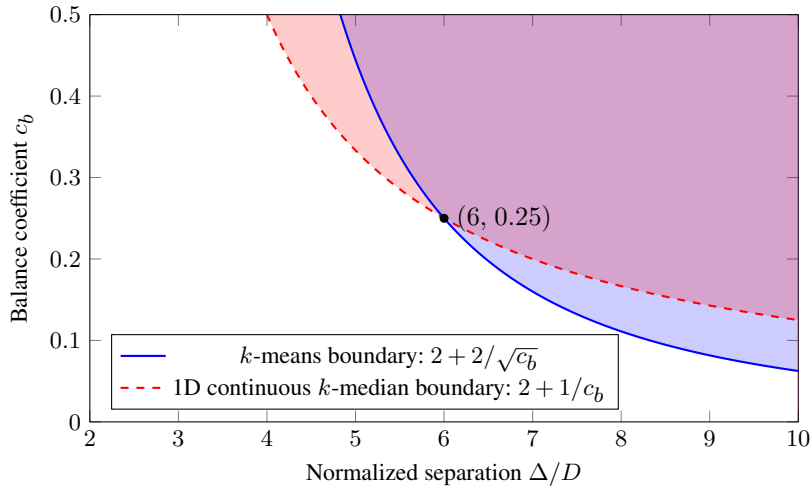

In the two-ball model, $\gamma/D=(\Delta/D)-2$. The thresholds in
Theorem~\ref{thm:two_cluster_thresholds} can therefore be expressed
directly in terms of the condition number. For $k$-means, \eqref{eq:kmeans_threshold} is equivalent to
$\gamma/D>2/\sqrt{c_b}$, that is,
\[
\kappa_{\text{means}}(\gamma,D)=(D/\gamma)^2<c_b/4.
\]
For one-dimensional continuous $k$-median, \eqref{eq:kmedian_threshold} is equivalent to
$\gamma/D>1/c_b$, that is,
\[
\kappa_{\text{med}}(\gamma,D)=D/\gamma<c_b.
\]
Thus exact recoverability in this model requires
$\kappa_{\text{means}}<c_b/4$ for squared loss and
$\kappa_{\text{med}}<c_b$ for one-dimensional continuous linear loss.

Comparing \eqref{eq:kmeans_threshold} and \eqref{eq:kmedian_threshold}
(as visualized in Figure~\ref{fig:phase_diagram}) reveals a qualitative
difference in sensitivity to imbalance. As $c_b\to0$, the separation
required for one-dimensional continuous $k$-median scales as $1/c_b$, whereas for $k$-means it
scales as $1/\sqrt{c_b}$.

As visualized in Figure~\ref{fig:phase_diagram}, the two sufficient
phase boundaries intersect at $c_b=0.25$. Thus their ordering is not
uniform over the full range $c_b\in(0,1/2]$. The substantive distinction
appears in the severe-imbalance regime. As $c_b\to0$, the separation
required under one-dimensional continuous linear loss scales as $1/c_b$, whereas under squared loss
it scales only as $1/\sqrt{c_b}$.

This phase transition analysis provides a theoretical guide for
selecting clustering objectives. Squared loss is advantageous under
strong imbalance when the within-cluster scale is well controlled,
whereas linear or other robust losses are preferable when heavy tails or outliers
inflate the effective radius. The theorem above should therefore be read as a sharp comparison between squared loss and continuous linear loss in the two-ball model, rather than as a complete characterization of the discrete $k$-medoids problem.

\section{Refined stability analysis: Local robustness and geometry}\label{sec:5}

Section~\ref{sec:3} establishes a global stability guarantee, showing that near-optimal objective values imply small total misclassification mass.
However, this bound treats all points symmetrically and therefore does not
reveal \emph{where} errors occur in the data geometry.

In practice, clustering errors are rarely distributed uniformly.
Points deep inside clusters are typically much easier to classify correctly
than points near cluster boundaries. Intuitively, interior points enjoy
larger geometric margins, while boundary points lie near decision surfaces
where small prototype shifts may change assignments.

In this section we formalize this intuition by refining the global stability
analysis. We show that the geometry naturally decomposes each cluster into a
\emph{core region} and a \emph{boundary belt}. Points in the core benefit from
an enhanced margin and therefore admit stronger stability guarantees.
In particular, sufficiently deep cores can be certified to have
\emph{zero misclassification error}, even when the overall solution is only
near-optimal.

We then address a second issue left implicit in Section~3, namely the prototype
displacement parameter $\eta$. For standard clustering objectives such as
$k$-means and $k$-medoids, this displacement is not an independent quantity
but is itself controlled by the optimization gap. Establishing this link
shows that the stability bound effectively depends on a single parameter in
the small-gap regime.

\subsection{Spatial concentration: The core--belt decomposition}\label{subsec:5.1}

We quantify the ``depth'' of a point within a benchmark cluster $C_j^*$ by its slack relative to the benchmark radius constraint. For any depth parameter $s \in [0, D_{\text{eff}})$, define the \emph{depth-$s$ core} and the \emph{boundary belt} as:
\begin{align}
    \text{Core}(s) &:= \bigcup_{j=1}^k \bigl\{ i \in C_j^* : d(x_i, \theta_j^*) \le D_{\text{eff}} - s \bigr\}, \\
    \text{Belt}(s) &:= [n] \setminus \text{Core}(s).
\end{align}

    

Moving deeper into a cluster increases the effective separation from other
clusters. As a point moves farther from competing prototypes while remaining
close to its own anchor, the geometric margin against incorrect assignments
grows. The next lemma quantifies this margin amplification.

\begin{lemma}[Anchor-depth enhanced margin]\label{lem:core_margin}
Under the separable regime ($\gamma > 0$), if a point $x_i$ belongs to the core of cluster $j$ at depth $s$, then for any other benchmark prototype $\theta_l^*$ ($l \ne j$),
\begin{equation}\label{eq:enhanced_margin}
    d(x_i, \theta_l^*) \ge d(x_i, \theta_j^*) + \gamma + 2s.
\end{equation}
\end{lemma}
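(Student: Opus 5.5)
The argument is a single application of the triangle inequality, refining the derivation of \eqref{eq:benchmark-margin-diff} in Section~\ref{subsec:2.2}; all the gain comes from using the core condition on $x_i$. Fix $i\in C_j^*$ with $i\in\mathrm{Core}(s)$, so that $r:=d(x_i,\theta_j^*)\le D_{\text{eff}}-s$, and fix $l\neq j$.

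First, the definitions of $\Delta_0$ in \eqref{eq:Delta0} and $\gamma$ in \eqref{eq:gamma-def} give the prototype separation $d(\theta_j^*,\theta_l^*)\ge \Delta_0 = 2D_{\text{eff}}+\gamma$. The reverse triangle inequality then gives
\[
d(x_i,\theta_l^*)\ \ge\ d(\theta_j^*,\theta_l^*)-d(x_i,\theta_j^*)\ \ge\ 2D_{\text{eff}}+\gamma-r .
\]
To compare this with $r$ plus the desired margin, I rewrite the right-hand side as $r+\gamma+2(D_{\text{eff}}-r)$. The core condition gives $D_{\text{eff}}-r\ge s$, hence $2(D_{\text{eff}}-r)\ge 2s$. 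This yields $d(x_i,\theta_l^*)\ge r+\gamma+2s$, which is \eqref{eq:enhanced_margin}.

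There is no real obstacle. The only point worth flagging is that the factor $2s$, rather than $s$, arises because depth enters twice. It appears once in the lower bound $2D_{\text{eff}}+\gamma-r$, where a smaller $r$ pushes $x_i$ farther from $\theta_l^*$. It appears again when we subtract $r$ to form the excess over the correct-anchor distance. The separable-regime hypothesis $\gamma>0$ is not actually used in the inequality itself; it only ensures the statement is meaningful alongside the earlier margin bounds. Likewise, $s<D_{\text{eff}}$ only keeps the core nonempty in general. As in Lemma~\ref{lem:pointwise_cost}, this enhanced margin will later feed into $\Delta_g(\gamma+2s-\eta;D_{\text{eff}}-s)$ for core points.
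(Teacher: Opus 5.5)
Your proof is correct and follows essentially the same route as the paper: the reverse triangle inequality with $d(\theta_j^*,\theta_l^*)\ge 2D_{\text{eff}}+\gamma$, with the core condition $d(x_i,\theta_j^*)\le D_{\text{eff}}-s$ used twice. The paper first derives $d(x_i,\theta_l^*)\ge D_{\text{eff}}+\gamma+s$ and then subtracts $d(x_i,\theta_j^*)\le D_{\text{eff}}-s$, while you regroup the same bound as $r+\gamma+2(D_{\text{eff}}-r)$; your remark that $\gamma>0$ is not needed for the inequality itself is also accurate.
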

\begin{proof}
By the triangle inequality and the separation condition $d(\theta_j^*, \theta_l^*) \ge 2D_{\text{eff}} + \gamma$, we have
\[
d(x_i, \theta_l^*) \ge d(\theta_j^*, \theta_l^*) - d(x_i, \theta_j^*)
\ge (2D_{\text{eff}} + \gamma) - (D_{\text{eff}} - s) = D_{\text{eff}} + \gamma + s.
\]
Subtracting $d(x_i, \theta_j^*) \le D_{\text{eff}} - s$ from the right-hand side yields
$d(x_i, \theta_l^*) - d(x_i, \theta_j^*) \ge \gamma + 2s$, proving \eqref{eq:enhanced_margin}.
\end{proof}

The geometric intuition is illustrated in Figure~\ref{fig:corebelt}.

\begin{figure}[t]
\centering
\begin{tikzpicture}[scale=1.2]

\coordinate (c1) at (-2,0);
\coordinate (c2) at (2,0);

\def\core{1}
\def\Deff{1.4}

\fill[gray!25] (c1) circle (\core);
\fill[gray!25] (c2) circle (\core);

\draw[dashed, thick] (c1) circle (\Deff);
\draw[dashed, thick] (c2) circle (\Deff);

\fill (c1) circle (2pt);
\fill (c2) circle (2pt);

\node[below] at (c1) {$\theta_1^*$};
\node[below] at (c2) {$\theta_2^*$};

\node at (-2,1.8) {\textbf{Cluster 1}};
\node at (2,1.8) {\textbf{Cluster 2}};

\fill (-2.4,0.3) circle (1.5pt);
\fill (-2.2,-0.4) circle (1.5pt);
\fill (-1.7,0.4) circle (1.5pt);
\fill (-1.6,-0.2) circle (1.5pt);

\coordinate (b1) at (-0.8,0.5);
\coordinate (b2) at (-0.7,-0.4);
\coordinate (b3) at (-0.9,0.1);

\fill (b1) circle (1.5pt);
\fill (b2) circle (1.5pt);
\fill (b3) circle (1.5pt);

\draw[thin] (c2) -- (b1);
\draw[thin] (c2) -- (b2);
\draw[thin] (c2) -- (b3);

\fill (1.7,0.4) circle (1.5pt);
\fill (2.3,-0.3) circle (1.5pt);
\fill (2.1,0.5) circle (1.5pt);
\fill (1.9,-0.4) circle (1.5pt);

\node at (-2,0.7) {\small core};
\node at (2,0.7) {\small core};

\node at (0,-1) {\small boundary belt};

\end{tikzpicture}

\caption{Core--Belt decomposition. Shaded disks represent cluster cores
($d(x,\theta_j^*) \le D_{\mathrm{eff}}-s$), while dashed rings indicate
the boundary belt where misclassifications may occur. Points on the
right side of Cluster~1 lie closer to the competing prototype
$\theta_2^*$ and are therefore more susceptible to reassignment.}
\label{fig:corebelt}
\end{figure}
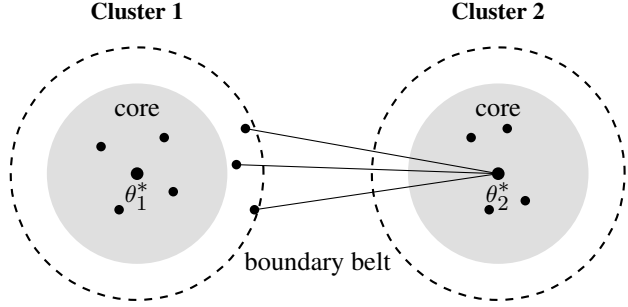

\subsection{Exact recovery of cluster cores}\label{subsec:5.2}

Lemma~\ref{lem:core_margin} implies that core points effectively face a larger margin $\gamma+2s$. Combining this with the prototype displacement $\eta$, the effective margin for a core point is $\gamma - \eta + 2s$. This motivates the \emph{local condition number}:
\begin{equation}\label{eq:kappa-local}
    \kappa(s) := \frac{g(D_{\text{eff}})}{\Delta_g(\gamma - \eta + 2s; D_{\text{eff}} - s)}.
\end{equation}
We now apply the logic of Theorem~\ref{thm:main_stability} to the restricted subset of core points. Since the global optimum $\mathsf{OPT}_n$ upper-bounds the optimum of any restricted sub-problem, keeping $\mathsf{OPT}_n$ in the numerator yields a conservative but valid localization bound.

\begin{proposition}[Local stability bound]\label{prop:core_stability}
For any solution satisfying the conditions of Theorem~\ref{thm:main_stability}, let $p_{\text{core}}(s)$ denote the fraction of misclassified points within the core (normalized by $n$). Then:
\begin{equation}\label{eq:core_bound}
    p_{\text{core}}(s) \;\le\; \frac{\mathsf{OPT}_n}{n \Delta_g(\gamma - \eta + 2s; D_{\text{eff}} - s)} (\delta + \delta_{\text{approx}}) + \frac{L_g \eta}{\Delta_g(\gamma - \eta + 2s; D_{\text{eff}} - s)}.
\end{equation}
\end{proposition}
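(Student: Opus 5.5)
The plan is to repeat the two-step argument behind Theorem~\ref{thm:main_stability}, but with the pointwise penalty replaced by the stronger one available to core points. The upper bound on the objective difference is unchanged. Exactly as in the proof of Theorem~\ref{thm:main_stability},
\[
\mathcal{L}_n(\hat{\mathcal{C}},\hat{\theta})-\mathcal{L}_n(\mathcal{C}^*,\theta^*)\le(\delta+\delta_{\text{approx}})\,\mathsf{OPT}_n .
\]
All the work therefore goes into a localized version of Lemma~\ref{lem:aggregate_gap} for the core.

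\emph{Step 1 (pointwise core penalty).} Fix the optimal matching $\pi$ from \eqref{eq:eta-def}. Take $i\in C_j^*\cap\mathrm{Core}(s)$ that is assigned to $\hat C_l$ with $l\neq\pi(j)$. As in Lemma~\ref{lem:pointwise_cost},
\[
d(x_i,\hat\theta_l)\ge d(x_i,\theta^*_{\pi^{-1}(l)})-\eta .
\]
By Lemma~\ref{lem:core_margin} this is at least $d(x_i,\theta_j^*)+\gamma-\eta+2s$. Write $r=d(x_i,\theta_j^*)$, so that $r\le D_{\text{eff}}-s$ by the definition of the core. Monotonicity of $g$ then gives
\[
g(d(x_i,\hat\theta_l))-g(r)\ge\Delta_g(\gamma-\eta+2s;D_{\text{eff}}-s).
\]
Note that the infimum in this increment runs only over $[0,D_{\text{eff}}-s]$, which is exactly why the local increment has the reduced radius.

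\emph{Step 2 (aggregation).} Let $M_{\text{core}}$ be the set of misclassified core indices, so $|M_{\text{core}}|=n\,p_{\text{core}}(s)$. Split the objective difference into three groups of indices.
\begin{itemize}
\item Points in $M_{\text{core}}$ each contribute at least $\Delta_g(\gamma-\eta+2s;D_{\text{eff}}-s)$, by Step~1.
\item Correctly classified points (in the core or in the belt) each contribute at least $-L_g\eta$, by the Lipschitz argument of Lemma~\ref{lem:aggregate_gap}.
\item Misclassified belt points each contribute at least $\Delta_g(\gamma-\eta;D_{\text{eff}})\ge0$, by Lemma~\ref{lem:pointwise_cost}. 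This is where the hypothesis $\eta<\gamma$ is used. These points can simply be dropped.
\end{itemize}
Summing gives
\[
\mathcal{L}_n(\hat{\mathcal{C}},\hat{\theta})-\mathcal{L}_n(\mathcal{C}^*,\theta^*)\ge n\,p_{\text{core}}(s)\,\Delta_g(\gamma-\eta+2s;D_{\text{eff}}-s)-nL_g\eta .
\]

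\emph{Step 3.} Combine this lower bound with the upper bound from the first paragraph and divide by $n\,\Delta_g(\gamma-\eta+2s;D_{\text{eff}}-s)$. This yields \eqref{eq:core_bound}.

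\emph{Main obstacle.} The delicate point is the third group in Step~2: misclassified belt points. They must contribute nonnegatively, which requires $\eta<\gamma$, a hypothesis inherited from Theorem~\ref{thm:main_stability}. The Lipschitz step must also be valid on the relevant distance range, which the standing assumption covers.

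There is a smaller point to handle when dividing in Step~3: the local increment must be positive, or else the bound is read as vacuous. Positivity follows from $\gamma-\eta+2s\ge\gamma-\eta>0$ together with $\Delta_g>0$. If $\Delta_g$ at the larger margin were not obviously positive, I would use the monotonicity of $g$ to argue that
\[
\Delta_g(\gamma-\eta+2s;D_{\text{eff}}-s)\ge\Delta_g(\gamma-\eta;D_{\text{eff}}),
\]
since the margin is larger and the infimum is over a smaller set of $r$. This gives positivity and also shows that the local bound is no worse than the global one.
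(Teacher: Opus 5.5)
Your proposal is correct and follows the paper's proof essentially step for step. The ingredients are the same: the enhanced margin $\gamma+2s$ from Lemma~\ref{lem:core_margin}, the pointwise increment $\Delta_g(\gamma-\eta+2s;D_{\text{eff}}-s)$ for misclassified core points, dropping the nonnegative belt terms (using $\eta<\gamma$), the $-L_g\eta$ Lipschitz bound for correctly classified points, and comparison with the upper bound $(\delta+\delta_{\text{approx}})\mathsf{OPT}_n$.

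Your monotonicity remark $\Delta_g(\gamma-\eta+2s;D_{\text{eff}}-s)\ge\Delta_g(\gamma-\eta;D_{\text{eff}})$ is a correct addition that the paper omits. Note, though, that Theorem~\ref{thm:main_stability} only assumes $\Delta_g(\gamma;D_{\text{eff}})>0$. So for a non-strictly-increasing $g$, positivity at the shrunken margin is not guaranteed by the hypotheses, and one falls back, as the paper does, on reading the bound as vacuous.
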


\begin{proof}[Sketch] By Lemma~\ref{lem:core_margin}, points in the depth-$s$ core enjoy an enhanced geometric margin bounded below by $\gamma + 2s$, and their distance to the benchmark prototype is at most $D_{\mathrm{eff}} - s$. Applying the pointwise loss-increment lower bound solely to the misclassified points within the core, and conservatively dropping the non-negative penalty from any misclassified points in the belt, we obtain an aggregate excess loss lower bound. Comparing this to the global objective upper bound $(\delta + \delta_{\mathrm{approx}})\mathsf{OPT}_n$ directly yields the local stability certificate. The detailed derivation is provided in Appendix~\ref{app:refined_proofs}.
\end{proof}

A complete proof is given in Appendix~\ref{app:refined_proofs}.

This result provides a certificate for \emph{Zero-Error Cores}. Since the misclassification rate changes in increments of $1/n$, if the right-hand side of \eqref{eq:core_bound} is strictly less than $1/n$, then strictly fewer than one point can be misclassified in the core, hence $p_{\text{core}}(s) = 0$.

\subsection[Parameter stability and control of eta]{Parameter stability and control of $\eta$}\label{subsec:5.3}

The preceding analysis refines the spatial distribution of misclassification. We now ask how large the prototype displacement $\eta$ can be for a near-optimal solution.

The displacement parameter $\eta$, introduced in Section~\ref{sec:3}, measures how far the candidate prototypes move relative to the benchmark anchors. For standard clustering objectives, this displacement is itself controlled by the optimization gap, so $\eta$ does not represent an independent source of instability.

\begin{assumption}[Local quadratic growth for $k$-means]\label{ass:local-qg}
There exist constants $c_{\mathrm{qg}}>0$ and a neighborhood $\mathcal{U}$ of $\theta^*$ such that, for all $\theta\in\mathcal{U}$, the empirical objective satisfies:
\begin{equation}\label{eq:local-qg}
\mathcal{L}_n(\mathcal{C}^*,\theta)-\mathcal{L}_n(\mathcal{C}^*,\theta^*)
\;\ge\;
c_{\mathrm{qg}} \sum_{j=1}^k |C_j^*| \,\|\theta_j-\theta_j^*\|^2 .
\end{equation}
\end{assumption}

\begin{proposition}[Control of displacement]\label{prop:eta_control}
Assume the separable regime and that the benchmark approximation error $\delta_{\text{approx}}$ is defined as in \eqref{eq:delta-approx}.
\begin{enumerate}
    \item[(i)] {For $k$-means (strong local growth):}
    Suppose $g(r)=r^2$ and Assumption~\ref{ass:local-qg} holds, and that the matched prototypes $\hat{\theta}$ lie in the neighborhood $\mathcal{U}$. Then
    \begin{equation}\label{eq:eta-kmeans}
        \eta \;\lesssim\; D_{\text{eff}}\,\sqrt{\delta+\delta_{\text{approx}}}\,,
    \end{equation}
    where the implied constant depends on $c_{\mathrm{qg}}$ and the balance coefficient $c_b$.

    \item[(ii)] {For $k$-medoids (discreteness):}
    Suppose $g(r)=r$ and the prototype space $\Theta$ is finite (e.g., $\Theta=S=\{x_1,\dots,x_n\}$). Let $\Delta_{\min}$ denote the minimal positive cost increment induced by swapping a prototype. If
    \begin{equation}\label{eq:eta-medoids-gap}
        (\delta+\delta_{\text{approx}})\,\mathsf{OPT}_n < \Delta_{\min},
    \end{equation}
    then the matched prototypes must coincide with the benchmark prototypes, hence $\eta=0$.
\end{enumerate}
\end{proposition}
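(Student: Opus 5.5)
For part (i), I will compare the candidate objective to the benchmark objective along the benchmark partition and then invoke Assumption~\ref{ass:local-qg}. The first step is to note that $\mathsf{OPT}_n \le \mathcal{L}_n(\mathcal{C}^*,\hat\theta)$ is not directly what we need. What we need is an upper bound on $\mathcal{L}_n(\mathcal{C}^*,\hat\theta)-\mathcal{L}_n(\mathcal{C}^*,\theta^*)$ in terms of $(\delta+\delta_{\text{approx}})\mathsf{OPT}_n$.

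My plan is as follows.
\begin{enumerate}
\item[(a)] For squared loss with fixed prototypes, the optimal assignment is nearest-prototype. Hence $\mathcal{L}_n(\hat{\mathcal C},\hat\theta)\ge \min_{\mathcal C}\mathcal{L}_n(\mathcal C,\hat\theta)$.
\item[(b)] Work in the small-gap regime, where $\hat\theta\in\mathcal U$ and the displacement is small relative to $\gamma$. In this regime the nearest-prototype partition induced by $\hat\theta$ coincides with $\mathcal C^*$. The margin argument of Lemma~\ref{lem:pointwise_cost} gives this directly when $\eta<\gamma/2$. That yields $\mathcal{L}_n(\mathcal{C}^*,\hat\theta)\le \mathcal{L}_n(\hat{\mathcal C},\hat\theta)\le(1+\delta)\mathsf{OPT}_n$.
\item[(c)] Since $\mathcal{L}_n(\mathcal{C}^*,\theta^*)\ge \mathsf{OPT}_n$, the difference is at most $\delta\,\mathsf{OPT}_n\le(\delta+\delta_{\text{approx}})\mathsf{OPT}_n$.
\item[(d)] Assumption~\ref{ass:local-qg} gives $c_{\mathrm{qg}}\,c_b n\max_j\|\hat\theta_j-\theta^*_j\|^2\le (\delta+\delta_{\text{approx}})\mathsf{OPT}_n$, using $|C_j^*|\ge c_b n$ on the worst coordinate.
\item[(e)] Finally, $\mathsf{OPT}_n\le n D_{\text{eff}}^2$, as noted before Corollary~\ref{cor:kappa_bound}. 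This gives $\eta^2\le D_{\text{eff}}^2(\delta+\delta_{\text{approx}})/(c_{\mathrm{qg}}c_b)$, which is \eqref{eq:eta-kmeans}.
\end{enumerate}

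The main obstacle is step (b). It is circular: we want to use smallness of $\eta$ to identify the induced partition, but smallness of $\eta$ is the conclusion. I would handle this in one of two ways. The first is to read "matched prototypes lie in $\mathcal U$" as including the condition that $\mathcal U$ is contained in the $\gamma/2$-neighborhood of $\theta^*$, which is a harmless shrinking of $\mathcal U$. The second fallback avoids reassignment altogether. Decompose $\mathcal{L}_n(\hat{\mathcal C},\hat\theta)-\mathcal{L}_n(\mathcal C^*,\theta^*)$ into the benchmark-partition part plus a nonnegative reassignment correction. Under the nearest-prototype reduction this correction can only lower the objective, and that is exactly the inequality we need. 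I would present the first route and remark on the second.

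For part (ii), I will argue by contradiction using discreteness. With $\Theta$ finite, the set of attainable objective values is finite. Suppose the matched prototypes $\hat\theta$ differ from $\theta^*$ in some coordinate. Then I will compare $\mathcal{L}_n(\hat{\mathcal C},\hat\theta)$ with the configuration obtained by swapping the differing prototypes back to the benchmark ones. By the definition of $\Delta_{\min}$, this swap changes the cost by at least $\Delta_{\min}$, and the change is in the favorable direction because the benchmark medoids are locally optimal for the benchmark geometry. So $\mathcal{L}_n(\hat{\mathcal C},\hat\theta)-\mathcal{L}_n(\mathcal C^*,\theta^*)\ge\Delta_{\min}$. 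The left side is bounded above by $(\delta+\delta_{\text{approx}})\mathsf{OPT}_n$, exactly as in the proof of Theorem~\ref{thm:main_stability}. This contradicts \eqref{eq:eta-medoids-gap}, so $\hat\theta=\theta^*$ after matching and $\eta=0$.

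The delicate point in (ii) is the interpretation of $\Delta_{\min}$. It must be the minimal positive gap between the cost of the benchmark medoid configuration and that of any configuration with a different medoid set. I would state this interpretation explicitly, since it is what makes the swap argument a one-line contradiction.
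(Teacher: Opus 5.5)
\paragraph{Part (i).} This matches the paper's proof. The paper also shrinks $\mathcal U$ so that $\hat\theta\in\mathcal U$ forces $\eta<\gamma/2$, and it replaces $\hat{\mathcal C}$ by the best response to $\hat\theta$. It then identifies that best response with $\mathcal C^*$ and combines local quadratic growth with $|C_j^*|\ge c_b n$ and $\mathsf{OPT}_n\le nD_{\text{eff}}^2$.

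Drop the ``second fallback'', because it has the wrong sign. Split the excess as $[\mathcal L_n(\mathcal C^*,\hat\theta)-\mathcal L_n(\mathcal C^*,\theta^*)]+[\mathcal L_n(\hat{\mathcal C},\hat\theta)-\mathcal L_n(\mathcal C^*,\hat\theta)]$. When $\hat{\mathcal C}$ is the best response to $\hat\theta$, the second bracket is $\le 0$. That is the wrong direction for upper-bounding the first bracket. The second bracket is nonnegative only if $\mathcal C^*$ is itself a best response to $\hat\theta$, which is step (b) again. So the fallback does not avoid reassignment.

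\paragraph{Part (ii).} There is a genuine gap here. You claim that swapping back to $\theta^*$ changes the cost ``in the favorable direction because the benchmark medoids are locally optimal''. Nothing in the statement supports this. The benchmark is explicitly not required to be an optimizer, and $\delta_{\text{approx}}>0$ is allowed. So some $\hat\theta\neq\theta^*$ may satisfy $V(\hat\theta)<V(\theta^*)$, where $V(\theta):=\min_{\mathcal C}\mathcal L_n(\mathcal C,\theta)$. Even single-swap local optimality of $\theta^*$ would say nothing about a $\hat\theta$ that differs in several coordinates. In that case $\mathcal L_n(\hat{\mathcal C},\hat\theta)-\mathcal L_n(\mathcal C^*,\theta^*)$ can be negative. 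Your one-sided upper bound by $(\delta+\delta_{\text{approx}})\mathsf{OPT}_n$ is then trivially satisfied, and no contradiction arises. Your argument therefore only covers the case where $\theta^*$ is the unique global minimizer of the profiled objective, and there $\delta_{\text{approx}}$ plays no role.

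The missing step is a two-sided estimate, which is what the paper does. We have $\mathsf{OPT}_n\le V(\hat\theta)\le\mathcal L_n(\hat{\mathcal C},\hat\theta)\le(1+\delta)\mathsf{OPT}_n$. We also have $\mathsf{OPT}_n\le V(\theta^*)\le\mathcal L_n(\mathcal C^*,\theta^*)=(1+\delta_{\text{approx}})\mathsf{OPT}_n$. Together these give $|V(\hat\theta)-V(\theta^*)|\le(\delta+\delta_{\text{approx}})\mathsf{OPT}_n<\Delta_{\min}$, which forces $V(\hat\theta)=V(\theta^*)$. Next you need uniqueness, up to permutation, of the tuple attaining that value. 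The paper assumes this explicitly; your reading of $\Delta_{\min}$ as a positive gap to every configuration with a different medoid set builds it in. Uniqueness gives $\hat\theta=\theta^*$ after matching, so $\eta=0$. This is also why $\delta_{\text{approx}}$ appears in the hypothesis: it absorbs the case where the candidate beats the benchmark, not merely slack in an upper bound.
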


\begin{proof}[Sketch]
For (i), optimality implies $\mathcal{L}_n(\hat{\mathcal{C}},\hat{\theta})-\mathcal{L}_n(\mathcal{C}^*,\theta^*)\le(\delta+\delta_{\text{approx}})\mathsf{OPT}_n$. Under $k$-means, applying the local growth inequality \eqref{eq:local-qg} to the restricted benchmark partition yields a quadratic bound on $\|\hat{\theta}-\theta^*\|$, which rearranges to \eqref{eq:eta-kmeans}. For (ii), the discrete nature of the search space implies that any solution with cost difference smaller than the minimal gap $\Delta_{\min}$ cannot reside on a different prototype configuration.
\end{proof}

A complete proof is given in Appendix~\ref{app:refined_proofs}.

\subsection{Geometry of the near-optimal set: The ``tube''}\label{subsec:5.4}

Finally, we characterize the set of \emph{all} near-optimal solutions. For two partitions $\mathcal{C},\mathcal{C}'$, define the (label-invariant) Hamming distance by
\begin{equation}\label{eq:ham-def}
d_{\mathrm{Ham}}(\mathcal{C},\mathcal{C}')
:= 1-\max_{\pi\in\Pi_k}\frac{1}{n}\sum_{j=1}^k |C_j\cap C'_{\pi(j)}|.
\end{equation}

\begin{theorem}[Hamming tube]\label{thm:tube}
Let $(\hat{\mathcal{C}}^{(1)}, \hat{\theta}^{(1)})$ and $(\hat{\mathcal{C}}^{(2)}, \hat{\theta}^{(2)})$ be any two candidate solutions that are $(1+\delta)$-near-optimal, with aligned displacements $\eta_1,\eta_2$. Assume $\max\{\eta_1,\eta_2\}<\gamma$. Then:
\begin{equation}\label{eq:tube-bound}
    d_{\mathrm{Ham}}\!\bigl(\hat{\mathcal{C}}^{(1)}, \hat{\mathcal{C}}^{(2)}\bigr)
    \;\lesssim\;
    2 \kappa \cdot (\delta + \delta_{\text{approx}})
    \;+\;
    \frac{L_g(\eta_1+\eta_2)}{\Delta_g(\gamma-\max\{\eta_1,\eta_2\}; D_{\text{eff}})}.
\end{equation}
\end{theorem}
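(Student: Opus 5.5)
The natural route is a triangle inequality through the benchmark partition $\mathcal{C}^*$. I will first check that $d_{\mathrm{Ham}}$ is a pseudometric on partitions modulo relabeling, and then apply Theorem~\ref{thm:main_stability} to each candidate separately.

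For the triangle inequality, use the equivalent form \eqref{eq:misclass-rate-equiv}: $d_{\mathrm{Ham}}(\mathcal{C},\mathcal{C}')$ is the minimum over $\pi\in\Pi_k$ of the fraction of points whose label in $\mathcal{C}$ disagrees with the $\pi$-relabeled label in $\mathcal{C}'$. Let $\pi_1$ be optimal for $(\hat{\mathcal{C}}^{(1)},\mathcal{C}^*)$ and $\pi_2$ optimal for $(\hat{\mathcal{C}}^{(2)},\mathcal{C}^*)$. Compose them to get a relabeling $\sigma=\pi_2^{-1}\circ\pi_1$ between $\hat{\mathcal{C}}^{(1)}$ and $\hat{\mathcal{C}}^{(2)}$. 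A point that $\sigma$ labels inconsistently must be misclassified, relative to $\mathcal{C}^*$, by at least one of the two candidates under its matching. A union bound then gives
\[
d_{\mathrm{Ham}}\bigl(\hat{\mathcal{C}}^{(1)},\hat{\mathcal{C}}^{(2)}\bigr)\le p\bigl(\hat{\mathcal{C}}^{(1)},\mathcal{C}^*\bigr)+p\bigl(\hat{\mathcal{C}}^{(2)},\mathcal{C}^*\bigr).
\]
One subtlety needs checking. In Lemma~\ref{lem:aggregate_gap}, the misclassification set is taken under the matching $\pi$ that realizes $\eta$, not under the matching that minimizes $p$. The bound of Theorem~\ref{thm:main_stability} therefore actually controls the misclassification fraction under the $\eta$-matching, which dominates $p$. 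I will run the composition argument with these $\eta$-matchings, so the union bound applies to exactly the quantities that the theorem bounds. Taking the minimum over relabelings only decreases the left side.

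Next, apply Theorem~\ref{thm:main_stability} to each solution $m=1,2$. Both are $(1+\delta)$-near-optimal and have $\eta_m<\gamma$, so each misclassification fraction is at most
\[
\frac{\mathsf{OPT}_n(\delta+\delta_{\text{approx}})+nL_g\eta_m}{n\,\Delta_g(\gamma-\eta_m;D_{\text{eff}})}.
\]
Because $g$ is non-decreasing, $\Delta_g(\cdot\,;D)$ is non-decreasing in its margin argument. Hence both denominators are at least $\Delta_g(\gamma-\max\{\eta_1,\eta_2\};D_{\text{eff}})$. Summing the two bounds gives the displacement term $L_g(\eta_1+\eta_2)/\Delta_g(\gamma-\max\{\eta_1,\eta_2\};D_{\text{eff}})$ exactly.

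Finally, convert the optimization terms to $\kappa$. Use $\mathsf{OPT}_n/n\le g(D_{\text{eff}})$, and, as in Corollary~\ref{cor:kappa_bound}, absorb the ratio $\Delta_g(\gamma;D_{\text{eff}})/\Delta_g(\gamma-\max\{\eta_1,\eta_2\};D_{\text{eff}})$ into the constant hidden in $\lesssim$. This gives $2\kappa(\delta+\delta_{\text{approx}})$.

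The main obstacle is this last step. The symbol $\lesssim$ silently requires that $\Delta_g(\gamma-\max_m\eta_m;D_{\text{eff}})$ be comparable to $\Delta_g(\gamma;D_{\text{eff}})$, which is the same hypothesis as in Corollary~\ref{cor:kappa_bound}. I would state this dependence explicitly, or else leave the bound in the unsimplified form with $\Delta_g(\gamma-\max\{\eta_1,\eta_2\};D_{\text{eff}})$ in the denominator. The union-bound step is routine once the matching subtlety above is handled.
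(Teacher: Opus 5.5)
Your proposal is correct and matches the paper's proof: a triangle inequality for $d_{\mathrm{Ham}}$ through $\mathcal{C}^*$ by composing matchings, then Theorem~\ref{thm:main_stability} for each solution, monotonicity of $\Delta_g$ in its margin argument to unify the denominators, and $\mathsf{OPT}_n/n\le g(D_{\mathrm{eff}})$. Your use of the $\eta$-matchings is harmless but unnecessary, since $|M|\ge np$ already makes the stated bound on $p$ valid. On the obstacle you flag, the paper reads $\kappa$ in \eqref{eq:tube-bound} as the shrunken-margin quantity $g(D_{\mathrm{eff}})/\Delta_g(\gamma-\max\{\eta_1,\eta_2\};D_{\mathrm{eff}})$, which needs no comparability hypothesis, whereas with $\kappa$ taken from Definition~\ref{def:condition_number} your extra assumption is indeed required.
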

\begin{proof}[Sketch]
The Hamming distance satisfies the triangle inequality modulo permutations. We bound $d_{\mathrm{Ham}}(\hat{\mathcal{C}}^{(1)}, \hat{\mathcal{C}}^{(2)}) \le d_{\mathrm{Ham}}(\hat{\mathcal{C}}^{(1)}, \mathcal{C}^*) + d_{\mathrm{Ham}}(\hat{\mathcal{C}}^{(2)}, \mathcal{C}^*)$. Applying Theorem~\ref{thm:main_stability} to each term yields the result.
\end{proof}

A complete proof is given in Appendix~\ref{app:refined_proofs}.

\begin{remark}[Instability as a diagnostic]
Theorem~5.5 implies that all near-optimal solutions must lie within a small
Hamming neighborhood of the benchmark partition when the condition number
$\kappa$ is moderate and the optimization gap is small.
Consequently, if repeated runs of a randomized clustering algorithm
(e.g., different $k$-means initializations) produce partitions that have
similar objective values but large pairwise Hamming distances, this is
strong evidence that the clustering instance is \emph{ill-conditioned}.
In such regimes the loss landscape is nearly flat along directions that
change the partition, so multiple structurally distinct clusterings can
achieve comparable objective values.
\end{remark}

\begin{remark}[Partial separation and instance-adaptive margins]
\label{rem:partial-separation}
The core--belt decomposition also provides a natural answer to the concern that $\gamma > 0$ is a strong global assumption. Consider a dataset where most points lie well inside their cluster cores but a fraction $\beta$ lie near the boundaries, so $|\mathrm{Belt}(s)|/n = \beta$ for some depth $s > 0$. Even when the global margin $\gamma$ is small, the local condition number $\kappa(s)$ for the core may be moderate, and Proposition~\ref{prop:core_stability} then certifies exact recovery for those interior points. Total misclassification is bounded by the $\beta$-fraction of boundary points plus a $\kappa(s) \cdot \delta$ contribution from the better-conditioned interior. Structural guarantees therefore remain meaningful for the bulk of the data even when the global geometry is tight. One can read this as a soft-margin guarantee, where the effective margin is point-specific rather than uniform.
\end{remark}

\section{Operational implications and diagnostics}
\label{sec_diagnostics}
 In this section, we discuss the operational implications of the theory. 

\subsection{Data-driven stability certificates}\label{subsec:6.4}

While our bounds involve benchmark quantities $(\gamma,D_{\text{eff}})$, one can construct a conservative diagnostic certificate from observable proxies.
Here we propose a diagnostic procedure.

Given a candidate solution $(\hat{\mathcal{C}},\hat{\theta})$ with objective value $\hat{\mathcal{L}}$ and a conservative guard $\alpha\in(0,1)$ (e.g., $\alpha=0.2$), the procedure has four steps. In practice, $\alpha$ acts as a conservative guardrail against finite-sample variations in estimating the geometric margin.
Concretely, we run the diagnostic using the effective margin $(1-\alpha)\widehat{\gamma}$ (equivalently, we require a safety buffer of $\alpha\widehat{\gamma}$).
We recommend setting $\alpha\in[0.1,0.2]$ as a robust default, and reporting a simple sensitivity check over $\alpha\in\{0.1,0.2\}$ when the instance appears close to the boundary.

\begin{enumerate}
    \item \emph{Geometric proxies.} Compute the empirical within-cluster radius
    \[
    \hat{D} := \max_{j\in[k]}\max_{i\in\hat{C}_j} d(x_i,\hat{\theta}_j),
    \]
    and separation
    \[
    \hat{\Delta} := \min_{j\neq l} d(\hat{\theta}_j,\hat{\theta}_l).
    \]

    \item \emph{Guarded condition number.} Estimate the empirical margin $\hat{\gamma}:=(\hat{\Delta}-2\hat{D})_+$. Shrink the margin by $\alpha$ and compute
    \[
    \hat{\kappa} := \frac{g(\hat{D})}{\Delta_g((1-\alpha)\hat{\gamma};\hat{D})}.
    \]
    If $\hat{\gamma}=0$, set $\hat{\kappa}:=\infty$ (certificate is vacuous).

    \item \emph{Optimization gap proxy.} Estimate $\hat{\delta}$ empirically, e.g., via a best-of-$R$ multi-start heuristic,
    \[
    \hat{\delta}
    := \frac{\hat{\mathcal{L}} - \min_{r\le R}\hat{\mathcal{L}}^{(r)}}{\min_{r\le R}\hat{\mathcal{L}}^{(r)}},
    \]
    where $\hat{\mathcal{L}}^{(r)}$ denotes the objective value of the $r$-th randomized restart.

    \item \emph{Global certificate.} The product $\hat{p}_{\text{cert}} := \hat{\kappa}\cdot\hat{\delta}$ serves as a conservative certificate for the misclassification mass.
\end{enumerate}

\begin{remark}[Robust proxy radii]\label{rem:robust_proxy}
Since $\hat{D}$ is a maximum, it can be inflated by a small number of extreme points in heavy-tailed data. One may replace $\hat{D}$ by a high-quantile within-cluster radius (e.g., $95\%$ quantile) to obtain a \emph{trimmed} certificate that aligns with robust objectives such as Huber or medoids.
\end{remark}

\subsection{Stability of downstream functionals}\label{subsec:6.5}

The structural guarantees on $p$ imply stability for any downstream functional that is Lipschitz with respect to the label-invariant Hamming distance, which coincides with the misclassification rate $p(\hat{\mathcal{C}},\mathcal{C}^*)$ in Definition~\ref{def:misclassification}.
Let $T(\mathcal{C})$ be such a functional (e.g., cluster sizes, balance, or other partition-only summaries). Then
\[
|T(\hat{\mathcal{C}}) - T(\mathcal{C}^*)|
\;\le\; L_T \cdot p(\hat{\mathcal{C}}, \mathcal{C}^*),
\]
for some Lipschitz constant $L_T$. Hence, if the benchmark partition possesses desirable structural properties (e.g., balance), any stable near-optimal solution preserves them up to $O(\kappa\delta)$. Econometric implications of these structural properties are discussed elsewhere.

\section{Extensions}\label{sec:6}

The stability principles established above rely on a unified geometric abstraction. In this section, we show that this abstraction naturally extends to heterogeneous objectives, hierarchical structures, and dynamic settings.

\subsection{Heterogeneity via envelope condition numbers}\label{subsec:6.1}

In applications using instance-specific losses (e.g., adaptive Huber thresholds or weighted clustering \cite{inaba_applications_1994}), let each point $x_i$ be associated with an admissible loss $g_i$. The total objective is
\begin{equation}\label{eq:hetero_obj}
    \mathcal{L}_n(\mathcal{C}, \theta) \;=\; \sum_{j=1}^k \sum_{i \in C_j} g_i\!\bigl(d(x_i, \theta_j)\bigr).
\end{equation}
Stability is governed by the \emph{worst-case} capability of the loss mixture to penalize margin crossings versus its \emph{worst-case} internal scale.

\begin{definition}[Envelope quantities]\label{def:envelope}
For $\gamma>0$ and $D\ge 0$, define the \emph{uniform lower envelope increment} and the \emph{uniform upper envelope scale} by
\begin{align}
    \underline{\Delta}(\gamma; D) &:= \inf_{i \in [n]} \Delta_{g_i}(\gamma; D), \\
    \overline{G}(D) &:= \sup_{i \in [n]} g_i(D).
\end{align}
Let $\overline{L}$ denote a uniform Lipschitz bound over the relevant domain (consistent with the standing assumption in Section~\ref{sec:2}):
\begin{equation}
    \overline{L} \;:=\; \sup_{i \in [n]} \operatorname{Lip}\!\bigl(g_i \text{ on } [0,\, D_{\text{eff}}+\Delta_0]\bigr).
\end{equation}
\end{definition}

These quantities capture the worst-case behavior of the heterogeneous loss family,
specifically the smallest possible margin penalty across losses (through
$\underline{\Delta}(\gamma;D)$) and the largest loss value at radius $D$
(through $\overline{G}(D)$).

\begin{remark}[Weighted losses as a special case]\label{rem:weighted}
Weighted clustering can be embedded by setting $g_i(r)=w_i\,g(r)$ with weights $w_i>0$. Then
\[
\underline{\Delta}(\gamma;D)=(\inf_i w_i)\,\Delta_g(\gamma;D),
\qquad
\overline{G}(D)=(\sup_i w_i)\,g(D),
\]
so the effective condition number inherits the weight ratio $(\sup_i w_i)/(\inf_i w_i)$.
\end{remark}

\begin{corollary}[Heterogeneous stability]\label{cor:heterogeneous}
Under the heterogeneous setting, let $\eta$ be the prototype displacement and assume $\underline{\Delta}(\gamma-\eta;D_{\text{eff}})>0$. Any $(1+\delta)$-near-optimal solution satisfies
\begin{equation}\label{eq:hetero_bound}
    p \;\le\; \frac{\mathsf{OPT}_n}{n\,\underline{\Delta}(\gamma-\eta; D_{\text{eff}})}\,(\delta+\delta_{\text{approx}})
    \;+\; \frac{\overline{L}\,\eta}{\underline{\Delta}(\gamma-\eta; D_{\text{eff}})}.
\end{equation}
If additionally $\mathsf{OPT}_n/n \lesssim \overline{G}(D_{\text{eff}})$, then \eqref{eq:hetero_bound} takes the condition-number form
$p \lesssim \kappa_{\text{het}}(\delta+\delta_{\text{approx}})+ \overline{L}\eta/\underline{\Delta}(\gamma-\eta;D_{\text{eff}})$,
where $\kappa_{\text{het}} := \overline{G}(D_{\text{eff}})/\underline{\Delta}(\gamma-\eta;D_{\text{eff}})$.
\end{corollary}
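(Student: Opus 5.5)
The plan is to rerun the argument of Lemmas~\ref{lem:pointwise_cost} and \ref{lem:aggregate_gap} and Theorem~\ref{thm:main_stability}, point by point, with each homogeneous quantity replaced by its envelope counterpart from Definition~\ref{def:envelope}. The geometric facts \eqref{eq:benchmark-margin-ineq}--\eqref{eq:benchmark-margin-diff} involve only the metric and the benchmark $(\mathcal C^*,\theta^*)$, so they hold without change. The only thing that moves is which loss is evaluated at each point.

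\emph{Step 1 (pointwise cost).} Fix the optimal matching $\pi$ defining $\eta$ and take $i\in C_j^*$ assigned to $\hat C_l$ with $l\neq\pi(j)$. As in Lemma~\ref{lem:pointwise_cost}, we get $d(x_i,\hat\theta_l)\ge d(x_i,\theta_j^*)+\gamma-\eta$ with $d(x_i,\theta_j^*)\le D_{\text{eff}}$. Since $g_i$ is non-decreasing,
\[
g_i(d(x_i,\hat\theta_l))-g_i(d(x_i,\theta_j^*))\ge \Delta_{g_i}(\gamma-\eta;D_{\text{eff}})\ge \underline{\Delta}(\gamma-\eta;D_{\text{eff}}).
\]

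\emph{Step 2 (aggregation).} For correctly matched $i$, the change in distance is at most $d(\hat\theta_{\pi(j)},\theta_j^*)\le\eta$. Both distances lie in $[0,D_{\text{eff}}+\Delta_0]$, the range on which $\overline{L}$ is defined, so the loss drops by at most $\overline{L}\eta$. Summing over points, with $M$ the set of misclassified indices and $|M|=np$, gives
\[
\mathcal L_n(\hat{\mathcal C},\hat\theta)-\mathcal L_n(\mathcal C^*,\theta^*)\ge np\,\underline{\Delta}(\gamma-\eta;D_{\text{eff}})-n\overline{L}\eta .
\]

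\emph{Step 3 (upper bound and rearrangement).} As in the proof of Theorem~\ref{thm:main_stability}, the left side is at most $(\delta+\delta_{\text{approx}})\mathsf{OPT}_n$, using $\delta_{\text{approx}}\ge0$. Dividing by $n\,\underline{\Delta}(\gamma-\eta;D_{\text{eff}})>0$ yields \eqref{eq:hetero_bound}. The condition-number form then follows by substituting $\mathsf{OPT}_n/n\lesssim\overline{G}(D_{\text{eff}})$ into the first term and reading off $\kappa_{\text{het}}$.

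The main obstacle is the Lipschitz step in Step~2. I must check that the candidate distance $d(x_i,\hat\theta_{\pi(j)})\le D_{\text{eff}}+\eta$ stays inside the domain on which $\overline{L}$ is defined. This holds because $\eta<\gamma\le\Delta_0$, as implicitly assumed through the positivity of $\underline{\Delta}(\gamma-\eta;D_{\text{eff}})$. In particular, $\underline{\Delta}(\gamma-\eta;D_{\text{eff}})>0$ forces $\gamma-\eta>0$, since every admissible loss has zero increment at margin zero. Everything else is a verbatim transcription of the homogeneous proof.
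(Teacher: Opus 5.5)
Your proposal is correct and follows the paper's proof: a pointwise bound $\Delta_{g_i}(\gamma-\eta;D_{\text{eff}})\ge\underline{\Delta}(\gamma-\eta;D_{\text{eff}})$ on misclassified points, a $-\overline{L}\eta$ leakage bound on correctly classified ones, the $(\delta+\delta_{\text{approx}})\mathsf{OPT}_n$ upper bound, and then substitution of $\mathsf{OPT}_n/n\lesssim\overline{G}(D_{\text{eff}})$. Your check that $D_{\text{eff}}+\eta<D_{\text{eff}}+\Delta_0$ keeps the Lipschitz step inside the domain of $\overline{L}$ is a detail the paper leaves implicit; also, as in the paper, the misclassified set under the $\eta$-matching only satisfies $|M|\ge np$ rather than equality, but that is all the lower bound needs.
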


\begin{proof}[Sketch]
    The result mirrors the global stability bound (Theorem~\ref{thm:main_stability}). By uniformly lower-bounding the pointwise cross-margin penalty using the lower envelope $\underline{\Delta}$ and upper-bounding the loss deviation using the uniform Lipschitz constant $\overline{L}$, the aggregate excess loss inequality holds identically. Detailed tracking of the parameters is provided in Appendix~\ref{app:extensions_proofs}.
\end{proof} 

\subsection{Hierarchical and multi-resolution benchmarks}\label{subsec:6.2}

Hierarchical clustering can be viewed as a sequence of flat clustering problems at varying scales $l=1,\dots,L$, each with geometric parameters $(\gamma_l, D_l)$ and a level-wise optimization gap $\delta_l$.

Our main theorem applies \emph{level-wise}. Assume the hierarchy is \emph{geometrically consistent}, meaning there exists $\rho>0$ such that $\gamma_l \ge \rho D_l$ for all $l$. Then the level-wise condition numbers $\kappa_l$ are uniformly bounded by some $\kappa_{\max}$.

\begin{remark}[Error metric for trees]\label{rem:tree_metric}
Let $p_l$ denote the misclassification rate at level $l$ after label alignment within each parent node (so that errors are measured locally within each split). Define the overall tree error by $p_{\text{tree}} := \sum_{l=1}^L p_l$.
This additive metric reflects the cumulative structural distortion across levels.
\end{remark}

If a recursive algorithm achieves local near-optimality $\delta_l$ at each split, then by repeated application of the level-wise stability bound,
\begin{equation}\label{eq:tree_bound}
    p_{\text{tree}} \;\lesssim\; \sum_{l=1}^L \kappa_l\,\delta_l,
\end{equation}
up to benchmark-approximation and displacement terms at each level. Thus, provided $\delta_l$ decays sufficiently fast with depth (and the geometry remains consistent), the overall tree structure is stable.

\subsection{Dynamic clustering: Tracking and drift control}\label{subsec:6.3}

Consider a time-varying setting where the benchmark $(\mathcal{C}^*_t,\theta^*_t)$ evolves over $t=1,\dots,T$.
Let
\[
\eta_t^{\text{drift}} := d_{\text{Hausdorff}}(\theta^*_t,\theta^*_{t-1})
\]
quantify the geometric drift of the truth (permutation-invariant), and let $\eta_t^{\text{alg}}$ be the permutation-matched displacement of the algorithm relative to the \emph{previous} anchors $\theta^*_{t-1}$ (e.g., via warm starts). 
Warm-started algorithms naturally control displacement relative to the previous anchors, since the previous solution provides the initialization for the current step.
By the triangle inequality, the total displacement from the current truth is bounded by
\[
\eta_t \;:=\; \eta_t^{\text{alg}} + \eta_t^{\text{drift}}.
\]
Define the time-$t$ condition number at effective margin $\gamma_t-\eta_t$ by
\begin{equation}\label{eq:kappa_t_def}
\kappa_t(\eta_t) \;:=\; \frac{g(D_t)}{\Delta_g(\gamma_t-\eta_t;D_t)}.
\end{equation}

\begin{proposition}[Tracking stability]\label{prop:tracking}
At time $t$, if the total displacement is small ($\eta_t<\gamma_t$) and $\Delta_g(\gamma_t-\eta_t;D_t)>0$, then a warm-started solution with optimization gap $\delta_t$ satisfies
\begin{equation}\label{eq:tracking_bound}
    p_t \;\lesssim\; \kappa_t(\eta_t)\,\delta_t
    \;+\; \frac{L_g\cdot(\eta_t^{\text{alg}}+\eta_t^{\text{drift}})}{\Delta_g(\gamma_t-\eta_t;D_t)}.
\end{equation}
\end{proposition}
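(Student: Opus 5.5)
The plan is to reduce Proposition~\ref{prop:tracking} to a time-$t$ instance of Theorem~\ref{thm:main_stability}, treating the current benchmark $(\mathcal{C}_t^*,\theta_t^*)$ as the reference configuration with geometric parameters $(D_t,\gamma_t)$ and Lipschitz constant $L_g$ on the relevant range. The only genuinely new ingredient is that the displacement entering Lemmas~\ref{lem:pointwise_cost}--\ref{lem:aggregate_gap} must be measured against the \emph{current} anchors $\theta_t^*$. What the warm start controls, however, is displacement relative to $\theta_{t-1}^*$.

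\textbf{Step 1: bound the current displacement.} Let $\pi_{\mathrm{alg}}$ be the permutation attaining $\eta_t^{\mathrm{alg}}$, so $d(\hat\theta_{t,j},\theta^*_{t-1,\pi_{\mathrm{alg}}(j)})\le\eta_t^{\mathrm{alg}}$ for all $j$. Let $\sigma$ be a matching of $\theta^*_{t-1}$ to $\theta^*_t$ with $d(\theta^*_{t-1,m},\theta^*_{t,\sigma(m)})\le \eta_t^{\mathrm{drift}}$. By the triangle inequality, the composed permutation $\sigma\circ\pi_{\mathrm{alg}}$ gives $d(\hat\theta_{t,j},\theta^*_{t,\sigma\pi_{\mathrm{alg}}(j)})\le \eta_t^{\mathrm{alg}}+\eta_t^{\mathrm{drift}}$. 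Taking the minimum over permutations in \eqref{eq:eta-def} then yields $\eta(\hat\theta_t,\theta_t^*)\le\eta_t$.

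This is the step I expect to be the main obstacle. The Hausdorff distance between the two prototype sets does not by itself produce a bijective matching with the same bound. In general it only gives a bottleneck matching once the drift is small relative to the separation. If the drift is below $\Delta_{0,t}/2$, each ball of radius $\eta_t^{\mathrm{drift}}$ around a point of $\theta^*_{t-1}$ contains exactly one point of $\theta^*_t$, and the nearest-point map is a bijection. I would state this regime explicitly. It is implied by $\eta_t<\gamma_t\le\Delta_{0,t}$, possibly after a short argument or a mild strengthening. Alternatively, I would interpret $\eta_t^{\mathrm{drift}}$ as the permutation-invariant bottleneck distance, which is clearly what is intended.

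\textbf{Step 2: apply the static bound with a monotone margin.} Since $\Delta_g(\cdot;D)$ is non-decreasing in its first argument (because $g$ is non-decreasing), the pointwise bound of Lemma~\ref{lem:pointwise_cost} holds with $\Delta_g(\gamma_t-\eta;D_t)\ge\Delta_g(\gamma_t-\eta_t;D_t)$. Likewise, the Lipschitz loss term in Lemma~\ref{lem:aggregate_gap} is at most $nL_g\eta\le nL_g\eta_t$. Repeating the proof of Theorem~\ref{thm:main_stability} verbatim therefore gives
\[
p_t\le \frac{\mathsf{OPT}_{n,t}}{n\,\Delta_g(\gamma_t-\eta_t;D_t)}(\delta_t+\delta_{\mathrm{approx},t})+\frac{L_g(\eta_t^{\mathrm{alg}}+\eta_t^{\mathrm{drift}})}{\Delta_g(\gamma_t-\eta_t;D_t)}.
\]

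\textbf{Step 3: pass to condition-number form.} Using $\mathsf{OPT}_{n,t}/n\le g(D_t)$, exactly as in the derivation preceding Corollary~\ref{cor:kappa_bound}, the first term becomes $\kappa_t(\eta_t)(\delta_t+\delta_{\mathrm{approx},t})$. No ratio constant $C$ is needed here, because $\kappa_t$ is defined directly at the effective margin $\gamma_t-\eta_t$. The statement suppresses $\delta_{\mathrm{approx},t}$ inside $\lesssim$, so I would either absorb it, noting that it vanishes when the benchmark is optimal, or record it explicitly as in Theorem~\ref{thm:main_stability}.
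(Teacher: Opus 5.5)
Your reduction is the same as the paper's: apply Theorem~\ref{thm:main_stability} to the time-$t$ instance, bound the displacement by $\eta_t^{\mathrm{alg}}+\eta_t^{\mathrm{drift}}$, and use $\mathsf{OPT}_{n,t}/n\le g(D_t)$ to reach the condition-number form. The only difference is in the bookkeeping. The appendix proof takes $\eta_t$ to be the actual matched displacement $\eta(\hat\theta_t,\theta_t^*)$ and simply \emph{assumes} $\eta_t\le\eta_t^{\mathrm{alg}}+\eta_t^{\mathrm{drift}}$, so it needs neither your permutation composition nor monotonicity of $\Delta_g(\cdot;D_t)$. Your version, which takes $\eta_t$ to be the sum as in the main-text definition, reads the statement more faithfully.

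You are right that Hausdorff drift does not by itself give a bijective matching. However, the hypotheses do not put you in your good regime. They only give $\eta_t^{\mathrm{drift}}\le\eta_t<\gamma_t=\Delta_{0,t}-2D_t$, and this can be close to $\Delta_{0,t}$ rather than below $\Delta_{0,t}/2$. For example, on $\mathbb{R}$ take $\theta_t^*=(0,1,10)$ and $D_t=0.1$, so $\gamma_t=0.8$, and let $\hat\theta_t=\theta_{t-1}^*=(0.5,\,9.9,\,10.1)$. The Hausdorff drift is $1/2$, so $\eta_t=1/2<\gamma_t$. Yet any bijection sends $9.9$ or $10.1$ to $0$ or $1$, so the matched displacement is at least $8.9$ and Theorem~\ref{thm:main_stability} cannot be invoked.

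You do not need to strengthen the hypothesis or switch to the bottleneck distance, though; a case split suffices. If $\eta_t^{\mathrm{drift}}<\Delta_{0,t}/2$, your nearest-point bijection proves Step~1 as written. If $\eta_t^{\mathrm{drift}}\ge\Delta_{0,t}/2\ge\gamma_t/2$, then $0<\gamma_t-\eta_t\le\eta_t$. Taking $r=0$ in the definition of $\Delta_g$, and using $g(0)=0$ with the Lipschitz bound, gives $\Delta_g(\gamma_t-\eta_t;D_t)\le g(\gamma_t-\eta_t)\le L_g(\gamma_t-\eta_t)$. Hence the displacement term in \eqref{eq:tracking_bound} is at least $\eta_t/(\gamma_t-\eta_t)\ge 1\ge p_t$, and the bound holds trivially. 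This proves the proposition with Hausdorff drift exactly as stated, and closes the step the paper's proof leaves as an assumption.
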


\begin{proof}[Sketch]
    By the triangle inequality, the total displacement of the candidate prototypes from the current benchmark is bounded by the sum of the algorithmic displacement (relative to the previous step) and the geometric drift of the true anchors: $\eta_t \le \eta_t^{\text{alg}} + \eta_t^{\text{drift}}$. Substituting this $\eta_t$ and the time-$t$ geometric parameters into the condition-number bound (Corollary~\ref{cor:kappa_bound}) yields the tracking guarantee. See Appendix~\ref{app:extensions_proofs} for details. 
\end{proof}

\section{Discussion}\label{sec:7}

This paper develops a geometric stability theory for prototype-based clustering that makes precise when \emph{optimization success} can be interpreted as \emph{structural recovery}. The key move is to separate the role of the algorithm from the role of the instance. Algorithms control how close one gets to the optimum, while the data geometry, together with the chosen loss, controls whether near-optimality is informative about the partition. Our analysis crystallizes this separation through a single dimensionless quantity, the clustering condition number, and follows a simple chain in which geometry induces a pointwise cost for misclassification, which aggregates into an excess-loss lower bound, and this, combined with an optimization upper bound, yields nonasymptotic stability guarantees. The refined results sharpen this picture by showing where errors can occur (a boundary belt) and how near-optimal solutions concentrate (a small Hamming tube), culminating in a practical diagnostic procedure.
In short, a small optimization gap together with a favorable condition number
guarantees that the recovered partition must be close to the benchmark clustering,
\[
\text{small optimization gap}
\;+\;
\text{small condition number}
\;\Longrightarrow\;
\text{small clustering error}.
\]

A central takeaway is therefore a \emph{conditioning viewpoint} for clustering. 
Reliability depends not only on the convergence behavior of a solver, but also on whether the underlying instance is well-conditioned under the selected objective.
When the condition number is small, many computational routes to a low objective value are automatically also routes to the correct structure; when it is large, even the global minimizer can be structurally ambiguous, because the loss landscape fails to separate distinct partitions. In this sense, the condition number provides a principled language for interpreting a ubiquitous empirical phenomenon, namely that repeated runs of different algorithms may return different partitions with nearly indistinguishable objective values. Our theory explains when such variability is unavoidable and when it is a symptom of inadequate optimization.

This perspective also suggests a stability-based way to \emph{interrogate modeling choices}. In practice, the number of clusters $k$ is often chosen by a mixture of heuristics and domain knowledge. While our analysis is stated for a fixed $k$, the condition number and the associated certificates provide a natural sanity check, since a plausible model order should yield solutions that are both low-loss and geometrically stable. Empirically, one expects over-refined choices of $k$ to manifest as collapsing separation and rapidly deteriorating stability certificates, whereas under-specified choices tend to produce persistent boundary belts that do not shrink under improved optimization. Formalizing such stability-guided selection rules is an attractive direction, but already at the conceptual level our results clarify that a ``good'' choice of $k$ should make the clustering instance well-conditioned.

A natural direction is to extend the theory beyond strict separation. In many practical datasets, clusters overlap partially, with most points well inside their clusters and a fraction occupying ambiguous boundary regions. The core--belt decomposition in Section~\ref{sec:5} already provides partial results in this direction, certifying exact recovery for interior points even when the global margin is small. A more ambitious program would replace the uniform margin $\gamma$ with a margin profile describing the distribution of point-specific margins, analogous to the Tsybakov margin condition in classification \cite{mammen_smooth_nodate}. Under such a profile, the condition number would depend on quantiles of the margin distribution rather than its minimum, giving bounds that degrade with the fraction of ambiguous points. We view this as the most promising path toward a stability theory that works without clean global separation.

Our results are designed to be usable as a lightweight diagnostic layer on top of standard clustering pipelines. 
A simple diagnostic workflow is as follows.
First, run the chosen algorithm from multiple random initializations (or with multiple solvers) and record the best achieved objective values; the dispersion across restarts provides an empirical proxy for the optimization gap, as in Section~\ref{subsec:6.4}. Second, compute observable geometric proxies from the returned solution (within-cluster radius and prototype separation) and convert them into a guarded condition number estimate. Reporting the resulting certificate alongside the clustering output (for example, the estimated condition number and the implied conservative bound on misclassification mass) provides an interpretable measure of \emph{structural reliability} that is complementary to conventional fit diagnostics such as within-cluster sum of squares. Third, when repeated runs return noticeably different partitions, the Hamming-tube viewpoint suggests treating this variability itself as information. Large pairwise Hamming distances among near-optimal solutions indicate either poor optimization (large gaps across runs) or, more fundamentally, an ill-conditioned geometry. In either case, the diagnostic separates ``try harder optimization'' from ``the instance is intrinsically ambiguous,'' and can motivate follow-up actions such as changing the loss (e.g., Huber tuning), revisiting $k$, or switching to a robust proxy radius when extreme points dominate the max-radius statistic.

Beyond metric prototype clustering, the proof architecture is intentionally modular. The stability argument rests on two ingredients, (i) a notion of margin that turns a structural mistake into a minimum increase in pointwise loss, and (ii) an aggregation step that converts those pointwise increases into a global excess-loss lower bound. Wherever analogous ingredients can be defined, including on graphs, under non-Euclidean dissimilarities, or for other prototype-like objectives, the same condition-number logic can be transplanted. In particular, this opens a path for connecting guarantees obtained in relaxations (e.g., spectral embeddings or convex programs) to statements about the recovered partition, using a common geometric interface rather than algorithm-specific dynamics.

Finally, the results provide conceptual support for downstream analysis that depends on the estimated partition. Many scientific uses of clustering treat the recovered groups as objects of inference (comparing group means, estimating group-level effects, or reporting cluster-level summaries), yet such conclusions are fragile when the partition itself is not stable. By showing that near-optimal solutions concentrate in a small neighborhood of the benchmark under favorable geometry, our theory supplies a structural stability premise under which post-clustering summaries become more reproducible and more interpretable. We hope this encourages a shift in practice. Stability should be viewed as part of the evidentiary standard for using clusters in subsequent scientific claims, on par with conventional optimization diagnostics.

\appendix

\section{Proofs for Section 4 (exact recovery and phase transitions)}\label{app:sec4}

In this appendix, we provide the full proof of the exact recovery thresholds (Theorem~\ref{thm:two_cluster_thresholds}) and present adversarial worst-case constructions to establish the order-wise tightness of these bounds.

\subsection{Proof of Theorem~\ref{thm:two_cluster_thresholds}: Exact recovery thresholds}
\label{app:proof_thm41_full}

Recall the two-ball model in Section~\ref{subsec:4.3}: for $k=2$,
\[
C_1^*\subset B(\theta_1^*,D),\qquad 
C_2^*\subset B(\theta_2^*,D),\qquad 
\|\theta_1^*-\theta_2^*\|=\Delta>2D.
\]
Let $n_j:=|C_j^*|$ with $n_1+n_2=n$, and assume without loss of generality that $n_1\le n_2$.
Thus the balance coefficient is $c_b=n_1/n\in(0,1/2]$.
Define the geometric margin $\gamma:=\Delta-2D>0$.

For part (i) we study $k$-means with squared loss $g(r)=r^2$ and prototypes in $\mathbb R^d$.
For part (ii) we study the linear objective $g(r)=r$ in the \emph{continuous} prototype space, \emph{restricted to the collinear two-ball model}: all sample points are assumed to lie on the one-dimensional axis spanned by $\theta_1^*$ and $\theta_2^*$, so after identifying that axis with $\mathbb R$, part (ii) becomes a one-dimensional $k$-median statement.
The additional discreteness constraint of $k$-medoids is \emph{not} handled here and should be treated separately.

Let $\widehat{\mathcal C}=\{\widehat C_1,\widehat C_2\}$ be an arbitrary candidate partition.
Relabel $\widehat{\mathcal C}$, if necessary, so that it is optimally aligned with $\mathcal C^*$:
\[
|\widehat C_1\cap C_1^*|+|\widehat C_2\cap C_2^*|
\;\ge\;
|\widehat C_1\cap C_2^*|+|\widehat C_2\cap C_1^*|.
\]
Equivalently, if we define
\[
M_1:=C_1^*\cap \widehat C_2,\qquad 
M_2:=C_2^*\cap \widehat C_1,\qquad
m_1:=|M_1|,\quad m_2:=|M_2|,\quad m:=m_1+m_2,
\]
then this relabeling ensures
\begin{equation}\label{eq:thm41_relabeling_app}
m\le \frac{n}{2}.
\end{equation}
Moreover, $\widehat{\mathcal C}\neq \mathcal C^*$ up to label swapping if and only if $m\ge 1$.

\textbf{Benchmark upper bound.}
Because $x_i\in B(\theta_j^*,D)$ for $i\in C_j^*$ and $g$ is nondecreasing,
\begin{equation}\label{eq:thm41_benchUB_app}
\min_{\theta}\mathcal L_n(\mathcal C^*,\theta)
\le \mathcal L_n(\mathcal C^*,\theta^*)
=\sum_{i\in C_1^*} g(\|x_i-\theta_1^*\|)+\sum_{i\in C_2^*} g(\|x_i-\theta_2^*\|)
\le n\,g(D).
\end{equation}


\begin{lemma}[One-dimensional merge penalty for linear loss]\label{lem:1d_merge_penalty_app}
Let
\[
I_1=[u_1-D,u_1+D],\qquad I_2=[u_2-D,u_2+D],\qquad u_2-u_1-2D=\gamma>0.
\]
For a finite set $S\subset\mathbb R$, define the one-dimensional median objective
\[
\phi(S):=\min_{t\in\mathbb R}\sum_{x\in S}|x-t|.
\]
Let $A\subset I_1$ and $B\subset I_2$, with $a:=|A|$ and $b:=|B|$. Then
\begin{equation}\label{eq:1d_merge_penalty}
\phi(A\cup B)\;\ge\;\phi(A)+\phi(B)+\gamma\,\min\{a,b\}.
\end{equation}
\end{lemma}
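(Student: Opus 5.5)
The plan is to lower-bound the merged objective pointwise in the location $t$, using the convexity and piecewise linearity of the one-dimensional absolute-deviation function. Write $f_A(t):=\sum_{x\in A}|x-t|$ and $f_B(t):=\sum_{x\in B}|x-t|$, so that $\phi(A\cup B)=\min_t\{f_A(t)+f_B(t)\}$. It then suffices to show that for every $t\in\mathbb R$,
\[
f_A(t)+f_B(t)\;\ge\;\phi(A)+\phi(B)+\gamma\min\{a,b\}.
\]
Taking the minimum over $t$ then gives \eqref{eq:1d_merge_penalty}.

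The key ingredient is a linear-growth bound outside each interval. Since $A\subset I_1$, every $x\in A$ satisfies $x\le u_1+D$. Hence for $t\ge u_1+D$ we have $|x-t|=t-x$, so
\[
f_A(t)=f_A(u_1+D)+a\,(t-u_1-D)\;\ge\;\phi(A)+a\,(t-u_1-D).
\]
Symmetrically, since every $y\in B$ satisfies $y\ge u_2-D$, for $t\le u_2-D$ we get
\[
f_B(t)\ge \phi(B)+b\,(u_2-D-t).
\]
In addition, trivially $f_A\ge\phi(A)$ and $f_B\ge\phi(B)$ everywhere.

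I would then split into three cases according to the position of $t$ relative to the gap $[u_1+D,\,u_2-D]$, which has length $\gamma$.
\begin{itemize}
\item If $t\le u_1+D$, use $f_A(t)\ge\phi(A)$ together with the $B$-bound. This gives an excess of at least $b(u_2-D-t)\ge b\gamma$.
\item If $t\ge u_2-D$, the symmetric argument gives an excess of at least $a\gamma$.
\item If $t$ lies in the gap, both growth bounds apply. The excess is at least $a(t-u_1-D)+b(u_2-D-t)$, which is affine in $t$. Its minimum over the gap is therefore attained at an endpoint, where it equals $b\gamma$ or $a\gamma$.
\end{itemize}
In every case the excess is at least $\gamma\min\{a,b\}$.

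I do not expect a genuine obstacle. The only point needing care is the growth identity: it uses the fact that all of $A$ lies to the left of $u_1+D$ (and all of $B$ to the right of $u_2-D$), so that $f_A$ has slope exactly $a$ there. The identity also uses the trivial fact $f_A(u_1+D)\ge\phi(A)$. Degenerate cases $a=0$ or $b=0$ are immediate, since then $\min\{a,b\}=0$ and $\phi$ is subadditive in the required direction: $\phi(A\cup B)=\phi(B)$ or $\phi(A)$, and $\phi(\emptyset)=0$.
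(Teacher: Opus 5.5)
Your proof is correct, and it organizes the argument differently from the paper. The paper splits on whether $a\ge b$ or $a\le b$. In the case $a\ge b$ it uses a clipping (exchange) step to locate a minimizer: for $t>u_1+D$ it shows $F(u_1+D)-F(t)\le -(a-b)(t-u_1-D)\le 0$, so some minimizer satisfies $t^*\le u_1+D$. Only then does it evaluate the one-sided bound $\sum_{y\in B}|y-t^*|\ge\phi(B)+b\gamma$ at that minimizer; the case $a\le b$ is symmetric.

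You instead prove $f_A(t)+f_B(t)\ge\phi(A)+\phi(B)+\gamma\min\{a,b\}$ for every $t$. You split on where $t$ sits relative to the gap $[u_1+D,\,u_2-D]$, and on the gap itself you minimize an affine lower bound at its endpoints. This buys you several things:
\begin{itemize}
\item the argument is symmetric in $(A,B)$;
\item it needs no claim about where a minimizer lies;
\item it passes to $\phi(A\cup B)$ just by taking the infimum over $t$.
\end{itemize}
The paper's route instead isolates a structural fact: the merged median can be taken on the side of the larger group, so only the smaller group pays the crossing cost $\gamma$.

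Both proofs rest on the same ingredient. Once $t$ leaves a group's interval, that group's absolute-deviation sum grows with slope equal to the group's size.

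Your separate treatment of $a=0$ or $b=0$ is harmless but unnecessary. The three-case argument goes through verbatim with $f_A\equiv 0$ or $f_B\equiv 0$.
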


\begin{proof}
Write
\[
\tau_1:=u_1+D,\qquad \tau_2:=u_2-D,
\]
so that $\tau_2-\tau_1=\gamma$, every point of $A$ lies weakly to the left of $\tau_1$, and every point of $B$ lies weakly to the right of $\tau_2$.

We first treat the case $a\ge b$.
Let
\[
F(t):=\sum_{x\in A}|x-t|+\sum_{y\in B}|y-t|,\qquad t\in\mathbb R.
\]
We claim that $F$ admits a minimizer $t^*$ with $t^*\le \tau_1$.
Indeed, if $t>\tau_1$, set $t':=\tau_1$. Since every $x\in A$ satisfies $x\le \tau_1<t$,
\[
|x-t|=|x-t'|+(t-t').
\]
For every $y\in B$, the triangle inequality gives
\[
|y-t'|\le |y-t|+|t-t'|=|y-t|+(t-t').
\]
Summing these inequalities over $A$ and $B$ yields
\[
F(t')-F(t)\le -(a-b)(t-t')\le 0,
\]
because $a\ge b$. Hence replacing any $t>\tau_1$ by $\tau_1$ does not increase the objective, proving the claim.

Now fix such a minimizer $t^*\le \tau_1$. Then
\[
\phi(A\cup B)=F(t^*)=\sum_{x\in A}|x-t^*|+\sum_{y\in B}|y-t^*|.
\]
The first term is bounded below by $\phi(A)$:
\[
\sum_{x\in A}|x-t^*|\ge \phi(A).
\]
For the second term, since $t^*\le \tau_1<\tau_2\le y$ for every $y\in B$,
\[
|y-t^*|=y-t^*\ge y-\tau_1.
\]
Therefore
\[
\sum_{y\in B}|y-t^*|\ge \sum_{y\in B}(y-\tau_1).
\]
Because $\tau_2\le y$ for all $y\in B$,
\[
\sum_{y\in B}(y-\tau_1)=\sum_{y\in B}(y-\tau_2)+b(\tau_2-\tau_1)
\ge \phi(B)+b\gamma,
\]
where we used $\phi(B)\le \sum_{y\in B}|y-\tau_2|=\sum_{y\in B}(y-\tau_2)$.
Combining the previous displays gives
\[
\phi(A\cup B)\ge \phi(A)+\phi(B)+b\gamma
=\phi(A)+\phi(B)+\gamma\min\{a,b\},
\]
which proves \eqref{eq:1d_merge_penalty} when $a\ge b$.

The case $a\le b$ is symmetric.
Indeed, by the same clipping argument, $F$ admits a minimizer $t^*\ge \tau_2$.
Then
\[
\sum_{y\in B}|y-t^*|\ge \phi(B),
\]
and since every $x\in A$ satisfies $x\le \tau_1<\tau_2\le t^*$,
\[
|x-t^*|=t^*-x\ge \tau_2-x.
\]
Thus
\[
\sum_{x\in A}|x-t^*|\ge \sum_{x\in A}(\tau_2-x)
=\sum_{x\in A}(\tau_1-x)+a(\tau_2-\tau_1)
\ge \phi(A)+a\gamma,
\]
because $\phi(A)\le \sum_{x\in A}|x-\tau_1|=\sum_{x\in A}(\tau_1-x)$.
Hence
\[
\phi(A\cup B)\ge \phi(A)+\phi(B)+a\gamma
=\phi(A)+\phi(B)+\gamma\min\{a,b\}.
\]
This completes the proof.
\end{proof}


\begin{lemma}[Lower bound on the $k$-means mixing coefficient]\label{lem:anova_mix_lb_app}
Assume $n_1\le n_2$ and let $m_1,m_2\ge 0$ satisfy
\[
m_1\le n_1,\qquad m_2\le n_2,\qquad m:=m_1+m_2\le \frac{n}{2}.
\]
Define
\begin{equation}\label{eq:mixcoeff_def_app}
\Psi(m_1,m_2)
:=
\frac{(n_1-m_1)m_2}{\,n_1-m_1+m_2\,}
+
\frac{(n_2-m_2)m_1}{\,n_2-m_2+m_1\,},
\end{equation}
with the convention that a term is interpreted as $0$ when its numerator is $0$.
Then
\begin{equation}\label{eq:mixcoeff_lb_app}
\Psi(m_1,m_2)\;\ge\; \frac{n_1}{n}(m_1+m_2)=c_b\,m.
\end{equation}
\end{lemma}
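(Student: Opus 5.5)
The plan is to exploit concavity. The map $h(a,b)=ab/(a+b)$, with $h:=0$ when $a+b=0$ (matching the paper's convention), is half the harmonic mean of $a,b\ge0$, and hence is jointly concave and positively homogeneous on $[0,\infty)^2$. Each summand of $\Psi$ in \eqref{eq:mixcoeff_def_app} is $h$ composed with an affine map of $(m_1,m_2)$:
\[
\Psi(m_1,m_2)=h(n_1-m_1,\,m_2)+h(n_2-m_2,\,m_1).
\]
So $\Psi$ is concave on the feasible region. It follows that $\Psi(m_1,m_2)-c_b(m_1+m_2)$ is concave, and its minimum over the feasible set
\[
P:=\Bigl\{(m_1,m_2):\ 0\le m_1\le n_1,\ 0\le m_2\le n_2,\ m_1+m_2\le \tfrac n2\Bigr\}
\]
is attained at a vertex of the polytope $P$. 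I would treat $m_1,m_2$ as real variables, which only strengthens the claim. It therefore suffices to check \eqref{eq:mixcoeff_lb_app} at the vertices of $P$.

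Using $n_1\le n/2\le n_2$, the vertices are $(0,0)$, $(n_1,0)$, $(0,n/2)$ and $(n_1,\,n/2-n_1)$. The constraint $m_2\le n_2$ is inactive, because $n/2\le n_2$. I will check each vertex in turn.
\begin{itemize}
\item At $(0,0)$ both sides vanish.
\item At $(n_1,0)$ the first term is $0$ and the second is $n_2n_1/n$. We need this to be at least $c_b n_1=n_1^2/n$, which holds since $n_2\ge n_1$.
\item At $(0,n/2)$ the second term is $0$ and the first is $n_1(n/2)/(n_1+n/2)$. We need this to be at least $c_b\cdot n/2=n_1/2$, which is equivalent to $n\ge n_1+n/2$, i.e.\ $n_1\le n/2$.
\item At $(n_1,n/2-n_1)$ the first term vanishes because $n_1-m_1=0$. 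The second term is $(n/2)\,n_1/(n/2+n_1)$, since $n_2-m_2=n_2-n/2+n_1=n/2$. This is again at least $n_1/2$ by the same inequality.
\end{itemize}

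The step needing the most care is the concavity claim together with the degenerate conventions. I would verify concavity of $h$ by writing $h(a,b)=\bigl(1/a+1/b\bigr)^{-1}$ on the open orthant, or equivalently $h(a,b)=\min_{t\in[0,1]}\{t^2a+(1-t)^2b\}$, which is a minimum of linear functions and hence concave. The formula via the minimum also handles the boundary cases $a=0$ or $b=0$ consistently with the convention that such terms equal $0$, so concavity extends to the closed orthant.

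One must also confirm that the vertex list is complete, i.e.\ that the constraint $m_1+m_2\le n/2$ cuts the box exactly along the segment from $(0,n/2)$ to $(n_1,n/2-n_1)$. This uses $n_1\le n/2\le n_2$.

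As a sanity fallback, in case the concavity route is questioned, there is an algebraic check. Write $\Psi=m-\frac{m_2^2}{s}-\frac{m_1^2}{n-s}$ with $s=n_1-m_1+m_2$, which is the size of $\widehat C_1$. One can then verify directly that $\frac{m_2^2}{s}+\frac{m_1^2}{n-s}\le (1-c_b)m$. I would only pursue this if needed.
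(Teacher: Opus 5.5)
Your proof is correct. It uses the same overall strategy as the paper (concavity, then checking extreme points), but the concavity fact you rely on is different and stronger, so the route is genuinely different.

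\textbf{The paper's route.} It fixes $m=m_1+m_2$ and proves only one-variable concavity of $a\mapsto\Psi(a,m-a)$. This comes from the rewriting $\frac{(z+d)(z-d)}{4z}=\frac{z}{4}-\frac{d^2}{4z}$ with $z$ affine in $a$. Since the right-hand side $c_b m$ is constant on each such slice, it then checks the slice endpoints $a=0$ and $a=\min\{m,n_1\}$. This leads to $m$-dependent inequalities such as $n_2^2\ge n_1 m$ and $(n-m)^2\ge n_1 m$.

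\textbf{Your route.} You prove joint concavity of $\Psi$ on the whole feasible polytope, using $h(a,b)=\min_{t\in[0,1]}\{t^2a+(1-t)^2b\}$. Because $c_b(m_1+m_2)$ is linear, everything reduces to four fixed vertices. Their checks collapse to $n_1\le n_2$ and $n_1\le n/2$.

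\textbf{Trade-off.} Your version needs fewer verifications, and none depend on a free parameter. The min-of-linear-functions representation also makes the $0/0$ boundary convention automatic rather than a special case. The paper's version needs nothing beyond a one-variable computation.

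Your vertex list is complete, including the case $n_1=n/2$ where $(n_1,0)$ and $(n_1,n/2-n_1)$ coincide. The four evaluations are right, so the algebraic fallback is unnecessary.
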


\begin{proof}
Fix the total number of misclassified points $m=m_1+m_2$ and write $a:=m_1$, so that $m_2=m-a$.
Because $m\le n/2\le n_2$, the admissible range is
\[
a\in[0,\min\{n_1,m\}].
\]
Consider the function
\[
q_m(a)
:=
\frac{(n_1-a)(m-a)}{\,n_1+m-2a\,}
+
\frac{(n_2-m+a)a}{\,n_2-m+2a\,}.
\]
Then $\Psi(m_1,m_2)=q_m(m_1)$.

We first show that $q_m$ is concave on its domain.
For the first term, set
\[
z_1:=n_1+m-2a,\qquad d_1:=n_1-m.
\]
Then
\[
n_1-a=\frac{z_1+d_1}{2},\qquad m-a=\frac{z_1-d_1}{2},
\]
and therefore
\[
\frac{(n_1-a)(m-a)}{n_1+m-2a}
=
\frac{z_1^2-d_1^2}{4z_1}
=
\frac{z_1}{4}-\frac{d_1^2}{4z_1}.
\]
Since $z_1$ is affine in $a$ and $z\mapsto -d_1^2/(4z)$ is concave on $(0,\infty)$, the first term is concave in $a$.

For the second term, set
\[
z_2:=n_2-m+2a,\qquad d_2:=n_2-m.
\]
Then
\[
a=\frac{z_2-d_2}{2},\qquad n_2-m+a=\frac{z_2+d_2}{2},
\]
and hence
\[
\frac{(n_2-m+a)a}{n_2-m+2a}
=
\frac{z_2^2-d_2^2}{4z_2}
=
\frac{z_2}{4}-\frac{d_2^2}{4z_2},
\]
which is also concave in $a$.
Therefore $q_m$ is concave.

A concave function on an interval attains its minimum at an endpoint, so it suffices to check the endpoints.

\smallskip
\noindent\emph{Endpoint 1: $a=0$.}
Then $m_1=0$ and $m_2=m$, so
\[
q_m(0)=\frac{n_1m}{n_1+m}.
\]
Because $m\le n/2\le n_2$, we have $n_1+m\le n_1+n_2=n$, hence
\[
q_m(0)\ge \frac{n_1m}{n}=c_b\,m.
\]

\smallskip
\noindent\emph{Endpoint 2a: $a=m$ (possible when $m\le n_1$).}
Then $m_1=m$ and $m_2=0$, so
\[
q_m(m)=\frac{n_2m}{n_2+m}.
\]
We claim that $q_m(m)\ge n_1m/n$.
Indeed,
\[
\frac{n_2m}{n_2+m}\ge \frac{n_1m}{n}
\quad\Longleftrightarrow\quad
n\,n_2\ge n_1(n_2+m)
\quad\Longleftrightarrow\quad
n_2^2\ge n_1m.
\]
This is true because $m\le n_1\le n_2$.

\smallskip
\noindent\emph{Endpoint 2b: $a=n_1$ (possible when $m>n_1$).}
Then $m_1=n_1$ and $m_2=m-n_1$, so
\[
q_m(n_1)=\frac{n_1(n-m)}{n+n_1-m}.
\]
We claim that $q_m(n_1)\ge n_1m/n$.
Indeed,
\[
\frac{n_1(n-m)}{n+n_1-m}\ge \frac{n_1m}{n}
\quad\Longleftrightarrow\quad
n(n-m)\ge m(n+n_1-m)
\quad\Longleftrightarrow\quad
(n-m)^2\ge n_1m.
\]
Since $m\le n/2$ and $n_1\le n/2$, we have
\[
n-m\ge \frac{n}{2}\ge m
\qquad\text{and}\qquad
n-m\ge \frac{n}{2}\ge n_1,
\]
hence $(n-m)^2\ge n_1m$.

Thus every endpoint value of $q_m$ is at least $n_1m/n$, and therefore
\[
\Psi(m_1,m_2)=q_m(m_1)\ge \frac{n_1m}{n}=c_b\,m.
\]
This proves \eqref{eq:mixcoeff_lb_app}.
\end{proof}

\begin{proof}[Proof of Theorem~\ref{thm:two_cluster_thresholds}]
Let $\widehat{\mathcal C}=\{\widehat C_1,\widehat C_2\}$ be any partition distinct from $\mathcal C^*$ up to label swapping, and keep the relabeling and notation introduced above. Then $m\ge 1$ and \eqref{eq:thm41_relabeling_app} holds.

We show that
\[
\min_{\theta}\mathcal L_n(\widehat{\mathcal C},\theta)
>
\min_{\theta}\mathcal L_n(\mathcal C^*,\theta)
\]
under the stated threshold in each part.

\medskip
\noindent\textbf{Part (i): squared loss ($k$-means), $g(r)=r^2$.}

Set
\[
A_1:=C_1^*\setminus M_1,\qquad B_2:=M_2,\qquad
A_2:=C_2^*\setminus M_2,\qquad B_1:=M_1.
\]
Then
\[
\widehat C_1=A_1\cup B_2,\qquad \widehat C_2=A_2\cup B_1,
\]
with $A_1,B_1\subset B(\theta_1^*,D)$ and $A_2,B_2\subset B(\theta_2^*,D)$.

\emph{\underbar{Step 1}: Lower bound the cross-ball ANOVA penalty.}
For any finite set $S\subset\mathbb R^d$, write
\[
\mathrm{SSE}(S):=\min_{\theta}\sum_{x\in S}\|x-\theta\|^2.
\]
If both $A_1$ and $B_2$ are nonempty, the ANOVA identity gives
\[
\mathrm{SSE}(A_1\cup B_2)
=
\mathrm{SSE}(A_1)+\mathrm{SSE}(B_2)
+
\frac{|A_1||B_2|}{|A_1|+|B_2|}\,
\|\bar A_1-\bar B_2\|^2,
\]
where $\bar A_1$ and $\bar B_2$ denote the sample means.
Since Euclidean balls are convex, $\bar A_1\in B(\theta_1^*,D)$ and $\bar B_2\in B(\theta_2^*,D)$.
Therefore
\[
\|\bar A_1-\bar B_2\|\ge \Delta-2D=\gamma.
\]
Hence
\[
\mathrm{SSE}(A_1\cup B_2)
\ge
\mathrm{SSE}(A_1)+\mathrm{SSE}(B_2)
+
\frac{(n_1-m_1)m_2}{n_1-m_1+m_2}\gamma^2.
\]
By continuity, the same inequality remains valid when one of the two sets is empty, with the convention that the last term is then $0$.
Analogously,
\[
\mathrm{SSE}(A_2\cup B_1)
\ge
\mathrm{SSE}(A_2)+\mathrm{SSE}(B_1)
+
\frac{(n_2-m_2)m_1}{n_2-m_2+m_1}\gamma^2.
\]
Summing the two bounds and dropping the nonnegative terms $\mathrm{SSE}(B_1)$ and $\mathrm{SSE}(B_2)$ yields
\begin{equation}\label{eq:penalty_app_new}
\min_{\theta}\mathcal L_n(\widehat{\mathcal C},\theta)
\ge
\mathrm{SSE}(A_1)+\mathrm{SSE}(A_2)+\Psi(m_1,m_2)\gamma^2,
\end{equation}
where $\Psi(m_1,m_2)$ is defined in \eqref{eq:mixcoeff_def_app}.

\emph{\underbar{Step 2}: Upper bound the gain from deleting misclassified points from the benchmark clusters.}
Fix any set $S\subset B(\theta^*,D)$ and any subset $T\subset S$.
Let $\mu_{S\setminus T}$ denote the mean of $S\setminus T$.
Because $B(\theta^*,D)$ is convex, $\mu_{S\setminus T}\in B(\theta^*,D)$.
Hence for every $x\in T$,
\[
\|x-\mu_{S\setminus T}\|
\le \|x-\theta^*\|+\|\mu_{S\setminus T}-\theta^*\|
\le D+D=2D.
\]
Using $\mu_{S\setminus T}$ as a feasible center for $S$ gives
\[
\mathrm{SSE}(S)-\mathrm{SSE}(S\setminus T)
\le \sum_{x\in T}\|x-\mu_{S\setminus T}\|^2
\le 4|T|D^2.
\]
Applying this with $(S,T)=(C_1^*,M_1)$ and $(C_2^*,M_2)$ yields
\begin{equation}\label{eq:gain_bound_app_new}
\bigl[\mathrm{SSE}(C_1^*)-\mathrm{SSE}(A_1)\bigr]
+
\bigl[\mathrm{SSE}(C_2^*)-\mathrm{SSE}(A_2)\bigr]
\le 4mD^2.
\end{equation}

\emph{\underbar{Step 3}: Compare with the benchmark partition.}
Since
\[
\min_{\theta}\mathcal L_n(\mathcal C^*,\theta)
=
\mathrm{SSE}(C_1^*)+\mathrm{SSE}(C_2^*),
\]
combining \eqref{eq:penalty_app_new}, \eqref{eq:gain_bound_app_new}, and Lemma~\ref{lem:anova_mix_lb_app} gives
\begin{align*}
\min_{\theta}\mathcal L_n(\widehat{\mathcal C},\theta)
-
\min_{\theta}\mathcal L_n(\mathcal C^*,\theta)
&\ge
-4mD^2+\Psi(m_1,m_2)\gamma^2 \\
&\ge
-4mD^2+c_bm\gamma^2 \\
&=
m\bigl(c_b\gamma^2-4D^2\bigr).
\end{align*}
If $c_b\gamma^2>4D^2$, the right-hand side is strictly positive because $m\ge 1$.
This proves part (i).

\medskip
\noindent\textbf{Part (ii): linear loss (one-dimensional continuous $k$-median), $g(r)=r$.}

Identify the axis joining $\theta_1^*$ and $\theta_2^*$ with $\mathbb R$.
Write
\[
C_1^*\subset I_1=[u_1-D,u_1+D],\qquad
C_2^*\subset I_2=[u_2-D,u_2+D],\qquad
u_2-u_1-2D=\gamma.
\]
For any finite set $S\subset\mathbb R$, define
\[
\phi(S):=\min_{t\in\mathbb R}\sum_{x\in S}|x-t|.
\]
Then
\[
\min_{\theta}\mathcal L_n(\widehat{\mathcal C},\theta)=\phi(\widehat C_1)+\phi(\widehat C_2),
\qquad
\min_{\theta}\mathcal L_n(\mathcal C^*,\theta)=\phi(C_1^*)+\phi(C_2^*).
\]

\emph{\underbar{Step 1}: Bound the gain from deleting misclassified points from a one-dimensional ball.}
Let $S\subset [u-D,u+D]$ and $T\subset S$, and write $m_T:=|T|$.
Let $t_{S\setminus T}$ be any median of $S\setminus T$.
Since $S\setminus T\subset [u-D,u+D]$ and the interval is convex, every median of $S\setminus T$ lies in $[u-D,u+D]$.
Hence for every $x\in T$,
\[
|x-t_{S\setminus T}|\le 2D.
\]
Using $t_{S\setminus T}$ as a feasible location for $S$,
\[
\phi(S)\le \sum_{x\in S}|x-t_{S\setminus T}|
=\phi(S\setminus T)+\sum_{x\in T}|x-t_{S\setminus T}|
\le \phi(S\setminus T)+2Dm_T.
\]
Therefore
\begin{equation}\label{eq:linear_delete_gain_app_new}
\phi(S\setminus T)\ge \phi(S)-2Dm_T.
\end{equation}
Applying \eqref{eq:linear_delete_gain_app_new} with $(S,T)=(C_1^*,M_1)$ and $(C_2^*,M_2)$ gives
\begin{equation}\label{eq:linear_reduced_sets_app_new}
\phi(A_1)+\phi(A_2)
\ge \phi(C_1^*)+\phi(C_2^*)-2D(m_1+m_2)
= \min_{\theta}\mathcal L_n(\mathcal C^*,\theta)-2mD.
\end{equation}

\emph{\underbar{Step 2}: Lower bound the merge cost of the two mixed clusters.}
By Lemma~\ref{lem:1d_merge_penalty_app},
\[
\phi(\widehat C_1)=\phi(A_1\cup B_2)
\ge \phi(A_1)+\phi(B_2)+\gamma\,\min\{n_1-m_1,m_2\},
\]
and
\[
\phi(\widehat C_2)=\phi(A_2\cup B_1)
\ge \phi(A_2)+\phi(B_1)+\gamma\,\min\{n_2-m_2,m_1\}.
\]
Since $\phi(B_1)\ge 0$ and $\phi(B_2)\ge 0$, summing yields
\begin{equation}\label{eq:linear_merge_lb_app_new}
\min_{\theta}\mathcal L_n(\widehat{\mathcal C},\theta)
\ge \phi(A_1)+\phi(A_2)+\gamma P,
\end{equation}
where
\[
P:=\min\{n_1-m_1,m_2\}+\min\{n_2-m_2,m_1\}.
\]

\emph{\underbar{Step 3}: Compare with the benchmark partition.}
Combining \eqref{eq:linear_reduced_sets_app_new} and \eqref{eq:linear_merge_lb_app_new}, we obtain
\begin{equation}\label{eq:linear_diff_precase_app_new}
\min_{\theta}\mathcal L_n(\widehat{\mathcal C},\theta)
-\min_{\theta}\mathcal L_n(\mathcal C^*,\theta)
\ge \gamma P - 2mD.
\end{equation}

We now lower bound $P$ using only \eqref{eq:thm41_relabeling_app}.
Since $m=m_1+m_2\le n/2$ and $n_2\ge n/2$, we have
\[
n_2-m_2\ge \frac{n}{2}-m_2\ge m_1,
\]
and therefore
\[
\min\{n_2-m_2,m_1\}=m_1.
\]
Hence
\[
P=m_1+\min\{n_1-m_1,m_2\}.
\]

\smallskip
\noindent\emph{Case 1: $m_2\le n_1-m_1$.}
Then $P=m_1+m_2=m$, and \eqref{eq:linear_diff_precase_app_new} becomes
\[
\min_{\theta}\mathcal L_n(\widehat{\mathcal C},\theta)
-\min_{\theta}\mathcal L_n(\mathcal C^*,\theta)
\ge m(\gamma-2D).
\]
Now $c_b\le 1/2$, so the condition $c_b\gamma>D$ implies
\[
\gamma>\frac{D}{c_b}\ge 2D.
\]
Since $m\ge 1$, the right-hand side is strictly positive.

\smallskip
\noindent\emph{Case 2: $m_2>n_1-m_1$.}
Then $P=m_1+(n_1-m_1)=n_1$.
Using again \eqref{eq:thm41_relabeling_app}, we have $m\le n/2$, so \eqref{eq:linear_diff_precase_app_new} gives
\[
\min_{\theta}\mathcal L_n(\widehat{\mathcal C},\theta)
-\min_{\theta}\mathcal L_n(\mathcal C^*,\theta)
\ge n_1\gamma - 2mD
\ge n_1\gamma - nD
= n(c_b\gamma-D),
\]
which is strictly positive by the assumed condition $c_b\gamma>D$.

Thus in both cases,
\[
\min_{\theta}\mathcal L_n(\widehat{\mathcal C},\theta)
>
\min_{\theta}\mathcal L_n(\mathcal C^*,\theta),
\]
proving part (ii).

Since $\widehat{\mathcal C}\neq \mathcal C^*$ up to label swapping was arbitrary, we conclude that under the sufficient conditions
\[
c_b\gamma^2>4D^2
\qquad\text{for $k$-means,}
\]
and
\[
c_b\gamma>D
\qquad\text{for one-dimensional continuous $k$-median,}
\]
every partition different from $\mathcal C^*$ up to label swapping is strictly suboptimal for the profiled objective
\[
\mathcal C\mapsto \min_{\theta}\mathcal L_n(\mathcal C,\theta).
\]
Hence $\mathcal C^*$ is the unique minimizer of the profiled objective, up to label swapping.

Finally, substituting $\gamma=\Delta-2D$ gives the explicit thresholds:
\[
c_b(\Delta-2D)^2>4D^2
\quad\Longleftrightarrow\quad
\frac{\Delta}{D}>2+\frac{2}{\sqrt{c_b}}
\qquad (k\text{-means}),
\]
and
\[
c_b(\Delta-2D)>D
\quad\Longleftrightarrow\quad
\frac{\Delta}{D}>2+\frac{1}{c_b}
\qquad (\text{one-dimensional continuous }k\text{-median}).
\]
As shown in Appendix~\ref{app:tightness}, the scalings $1/\sqrt{c_b}$ and $1/c_b$ are order-wise tight.
\end{proof}
\subsection{Tightness and worst-case constructions}\label{app:tightness}

In this part, we prove the tightness of the exact recovery thresholds presented in Theorem~\ref{thm:two_cluster_thresholds}. We construct specific "worst-case" configurations where the separation $\Delta$ is slightly below the stated thresholds, and show that the global minimizer of the objective function fails to recover the benchmark partition.

The failure mode we exploit is ``Heavy Cluster Splitting'': when one cluster is significantly more massive than the other ($n_2 \gg n_1$), the algorithm may reduce the total loss by placing \emph{both} prototypes within the massive cluster (to reduce its within-cluster variance/dispersion) while treating the small cluster as outliers assigned to the nearest prototype.

\subsubsection{Setup: The adversarial 1D configuration}

Consider a one-dimensional dataset $\mathcal{X} = \mathbb{R}$ with $k=2$ clusters. Let $n_1$ and $n_2$ denote the sizes of the two benchmark clusters $C_1^*$ and $C_2^*$, with $n_2 > n_1$. The balance coefficient is $c_b = n_1/(n_1+n_2)$. We assume the severe imbalance regime where $n_2 \gg n_1$.

We construct the geometry as follows:
\begin{itemize}
    \item {Cluster 1 (The Victim):} A point mass of $n_1$ points located at $x=0$.
    \item {Cluster 2 (The Heavy Target):} Two point masses, each of size $n_2/2$, located at $x = \Delta - D$ and $x = \Delta + D$.
\end{itemize}
The benchmark parameters for this configuration are:
\begin{itemize}
    \item {Anchors:} $\theta_1^* = 0$, $\theta_2^* = \Delta$.
    \item {Radius:} The maximum distance from $\theta_2^*$ to its points is $D$. Thus $D_{\text{eff}} = D$.
    \item {Separation:} $d(\theta_1^*, \theta_2^*) = \Delta$.
\end{itemize}
We analyze whether the "Correct" solution (recovering $C_1^*, C_2^*$) is preferred over the "Splitting" solution (splitting $C_2^*$ and merging $C_1^*$).

\subsubsection{Tightness for $k$-means (squared loss)}

Under the $k$-means objective $\mathcal{L}(\theta) = \sum_{x \in S} \min_j \|x - \theta_j\|^2$.

\textit{Cost of the correct solution ($\mathcal{L}_{\text{correct}}$).}
The solver places prototypes at the centroids of $C_1^*$ and $C_2^*$.
\begin{itemize}
    \item $\hat{\theta}_1 = 0$. Contribution from $C_1^*$: $0$.
    \item $\hat{\theta}_2 = \Delta$. Contribution from $C_2^*$: Since $C_2^*$ has points at $\Delta \pm D$, the cost is $\frac{n_2}{2}(-D)^2 + \frac{n_2}{2}(D)^2 = n_2 D^2$.
\end{itemize}
\[
\mathcal{L}_{\text{correct}} = n_2 D^2.
\]

\textit{Cost of the splitting solution ($\mathcal{L}_{\text{split}}$).}
Consider a candidate solution that places prototypes at the two sub-modes of the heavy cluster: $\hat{\theta} = (\Delta - D, \Delta + D)$.
\begin{itemize}
    \item {Cluster 2 points:} The $n_2/2$ points at $\Delta-D$ are assigned to $\hat{\theta}_1=\Delta-D$ (cost 0). The $n_2/2$ points at $\Delta+D$ are assigned to $\hat{\theta}_2=\Delta+D$ (cost 0). Total contribution from $C_2^*$: $0$.
    \item {Cluster 1 points:} The $n_1$ points at $0$ are assigned to the nearest prototype $\hat{\theta}_1 = \Delta - D$. The squared distance is $(\Delta - D - 0)^2$.
\end{itemize}
\[
\mathcal{L}_{\text{split}} = n_1 (\Delta - D)^2.
\]

\textit{The failure condition.}
Exact recovery fails if $\mathcal{L}_{\text{split}} < \mathcal{L}_{\text{correct}}$, i.e.,
\[
n_1 (\Delta - D)^2 < n_2 D^2.
\]
Taking the square root:
\[
\sqrt{n_1} (\Delta - D) < \sqrt{n_2} D \implies \frac{\Delta}{D} - 1 < \sqrt{\frac{n_2}{n_1}}.
\]
Rearranging for the ratio $\Delta/D$:
\[
\frac{\Delta}{D} < 1 + \sqrt{\frac{n_2}{n_1}}.
\]
Recall that $c_b \approx n_1/n_2$ for small $c_b$. Thus $\sqrt{n_2/n_1} \approx 1/\sqrt{c_b}$. The failure condition becomes:
\[
\frac{\Delta}{D} < 1 + \frac{1}{\sqrt{c_b}}.
\]
This confirms that a separation scaling of order $1/\sqrt{c_b}$ is necessary. If $\Delta/D$ falls below this order, the objective is minimized by splitting the heavy cluster rather than separating the distant small cluster.

\subsubsection{Tightness for $k$-medoids (linear loss)}

Under the $k$-medoids objective $\mathcal{L}(\theta) = \sum_{x \in S} \min_j \|x - \theta_j\|$ (assuming continuous prototypes for simplicity, or selecting points from the dataset which yields the same result here).

\emph{Cost of the correct solution ($\mathcal{L}_{\text{correct}}$).}
Prototypes are placed at the medians.
\begin{itemize}
    \item $\hat{\theta}_1 = 0$. Contribution from $C_1^*$: $0$.
    \item $\hat{\theta}_2 = \Delta$. Contribution from $C_2^*$: The points at $\Delta \pm D$ are at distance $D$ from the center $\Delta$. Cost: $\frac{n_2}{2}(D) + \frac{n_2}{2}(D) = n_2 D$.
\end{itemize}
\[
\mathcal{L}_{\text{correct}} = n_2 D.
\]

\emph{2. Cost of the splitting solution ($\mathcal{L}_{\text{split}}$).}
Prototypes are placed at the sub-modes: $\hat{\theta} = (\Delta - D, \Delta + D)$.
\begin{itemize}
    \item {Cluster 2 points:} All points in $C_2^*$ have distance 0 to their assigned prototypes. Cost: $0$.
    \item {Cluster 1 points:} Points at $0$ are assigned to $\hat{\theta}_1 = \Delta - D$. Distance is $|\Delta - D|$.
\end{itemize}
\[
\mathcal{L}_{\text{split}} = n_1 (\Delta - D).
\]

\emph{3. The failure condition.}
Exact recovery fails if $\mathcal{L}_{\text{split}} < \mathcal{L}_{\text{correct}}$, i.e.,
\[
n_1 (\Delta - D) < n_2 D.
\]
Dividing by $n_1 D$:
\[
\frac{\Delta}{D} - 1 < \frac{n_2}{n_1}.
\]
Using $c_b \approx n_1/n_2$, this implies failure when:
\[
\frac{\Delta}{D} < 1 + \frac{1}{c_b}.
\]
This confirms that for $k$-medoids, the separation must scale linearly with the imbalance ratio $1/c_b$ to prevent the heavy cluster from monopolizing the prototypes.

\subsubsection{Conclusion on sharpness}

Comparing the two failure conditions:
\begin{itemize}
    \item {$k$-means Failure:} $\Delta/D \lesssim \sqrt{1/c_b}$.
    \item {$k$-medoids Failure:} $\Delta/D \lesssim 1/c_b$.
\end{itemize}
Since $c_b \ll 1$, we have $1/c_b \gg \sqrt{1/c_b}$. This proves that the thresholds derived in Theorem~\ref{thm:two_cluster_thresholds} are tight and reflect a fundamental geometric reality: the linear loss of $k$-medoids provides less "force" to hold a distant small cluster against the "gain" of splitting a dense heavy cluster, thereby requiring larger separation to ensure stability in the face of imbalance.

\section{Proofs for Section 5 (refined stability and local geometry)}\label{app:refined_proofs}

In this appendix, we provide detailed proofs for the local parameter stability (Proposition~\ref{prop:eta_control}) and the geometry of the near-optimal solution set (Theorem~\ref{thm:tube}).

\subsection{Notation, alignment, and label functions}

Let $S=\{x_1,\dots,x_n\}\subset \mathbb R^d$. A partition $\mathcal C=\{C_1,\dots,C_k\}$ is represented by a label function
\[
c:[n]\to [k],\qquad c(i)=j \iff i\in C_j.
\]
Likewise, the benchmark partition $\mathcal C^*=\{C_1^*,\dots,C_k^*\}$ is represented by $c^*$.

Given two labelings $c,c'$, the (raw) Hamming distance is
\[
d_{\mathrm{Ham}}^{\mathrm{raw}}(c,c'):=\frac1n\sum_{i=1}^n \mathbf 1\{c(i)\neq c'(i)\}.
\]
Because partitions are label-invariant, we define the label-invariant Hamming distance by optimizing over permutations $\Pi_k$:
\begin{equation}\label{eq:ham_label_def}
d_{\mathrm{Ham}}(\mathcal C,\mathcal C')
:=\min_{\pi\in\Pi_k}\frac1n\sum_{i=1}^n \mathbf 1\{c(i)\neq \pi(c'(i))\}.
\end{equation}
It is equivalent to the overlap form in (33) of the main text:
\[
d_{\mathrm{Ham}}(\mathcal C,\mathcal C')
=1-\max_{\pi\in\Pi_k}\frac1n\sum_{j=1}^k |C_j\cap C'_{\pi(j)}|.
\]

Let $\theta^*=(\theta_1^*,\dots,\theta_k^*)$ be the benchmark prototypes and $\hat\theta=(\hat\theta_1,\dots,\hat\theta_k)$ be candidate prototypes.
Define the displacement
\[
\eta(\hat\theta,\theta^*)
:=\min_{\pi\in\Pi_k}\max_{j\in[k]}\|\hat\theta_j-\theta^*_{\pi(j)}\|.
\]
Fix (once and for all in Proposition~\ref{prop:eta_control}) an optimal permutation $\pi^\star$ attaining the above minimum, and relabel the candidate prototypes by $\hat\theta_j\leftarrow \hat\theta_{\pi^\star(j)}$.
After this relabeling, we may write simply
\[
\eta=\max_{j\in[k]}\|\hat\theta_j-\theta_j^*\|.
\]
(Throughout, all comparisons between $\hat C_j$ and $C_j^*$ are understood under the same optimal matching convention.)

\subsection{Useful lemmas}

\begin{lemma}[Triangle inequality for $d_{\mathrm{Ham}}$]\label{lem:ham_triangle}
For any three partitions $\mathcal A,\mathcal B,\mathcal D$,
\[
d_{\mathrm{Ham}}(\mathcal A,\mathcal B)\le d_{\mathrm{Ham}}(\mathcal A,\mathcal D)+d_{\mathrm{Ham}}(\mathcal D,\mathcal B).
\]
\end{lemma}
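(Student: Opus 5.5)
I will work with the permutation form \eqref{eq:ham_label_def}, writing $a,b,e:[n]\to[k]$ for label functions representing $\mathcal A,\mathcal B,\mathcal D$. The idea is to realize each of the two distances on the right by an optimal permutation, compose these permutations, and bound the resulting (generally suboptimal) matching between $\mathcal A$ and $\mathcal B$ pointwise.

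\textbf{Choosing permutations.} Pick $\sigma\in\Pi_k$ attaining $d_{\mathrm{Ham}}(\mathcal A,\mathcal D)=\frac1n\sum_i\mathbf 1\{a(i)\neq\sigma(e(i))\}$. Pick $\rho\in\Pi_k$ attaining $d_{\mathrm{Ham}}(\mathcal D,\mathcal B)=\frac1n\sum_i\mathbf 1\{e(i)\neq\rho(b(i))\}$. Both minima exist because $\Pi_k$ is finite. Now set $\pi:=\sigma\circ\rho\in\Pi_k$.

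\textbf{Pointwise bound.} Fix $i$. If $a(i)\neq\pi(b(i))$, then either $a(i)\neq\sigma(e(i))$, or $\sigma(e(i))\neq\sigma(\rho(b(i)))$. Since $\sigma$ is injective, the second case is equivalent to $e(i)\neq\rho(b(i))$. Hence
\[
\mathbf 1\{a(i)\neq\pi(b(i))\}\le \mathbf 1\{a(i)\neq\sigma(e(i))\}+\mathbf 1\{e(i)\neq\rho(b(i))\}.
\]

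\textbf{Averaging.} Average this inequality over $i\in[n]$. On the left, use that $d_{\mathrm{Ham}}(\mathcal A,\mathcal B)$ is a minimum over permutations, so it is at most the value at $\pi$. The right-hand side becomes exactly $d_{\mathrm{Ham}}(\mathcal A,\mathcal D)+d_{\mathrm{Ham}}(\mathcal D,\mathcal B)$, which proves the inequality.

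The only point needing care is the composition convention. One must check that relabeling through $\mathcal D$ composes as $\sigma\circ\rho$ rather than in the reverse order, and that the injectivity of $\sigma$ is what converts the middle mismatch into a mismatch between $e$ and $\rho\circ b$. Otherwise the argument is the standard triangle inequality for raw Hamming distance, applied to permuted labelings.
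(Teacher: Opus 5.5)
Your proof is correct and matches the paper's argument essentially step for step. The paper likewise takes optimal permutations $\pi_{AD},\pi_{DB}$, composes them as $\pi_{AB}=\pi_{AD}\circ\pi_{DB}$, applies the pointwise indicator union bound through $\pi_{AD}(d(i))$, and uses bijectivity of $\pi_{AD}$ to turn the middle mismatch into $\mathbf 1\{d(i)\neq\pi_{DB}(b(i))\}$ before averaging.
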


\begin{proof}
Let $a,d,b:[n]\to[k]$ be label functions for $\mathcal A,\mathcal D,\mathcal B$.

Let $\pi_{AD}\in\Pi_k$ attain the minimum in \eqref{eq:ham_label_def} for $(\mathcal A,\mathcal D)$, and let $\pi_{DB}\in\Pi_k$ attain the minimum for $(\mathcal D,\mathcal B)$.
Define the composed permutation
\[
\pi_{AB}:=\pi_{AD}\circ \pi_{DB}\in\Pi_k.
\]
Then, by definition of the minimum,
\[
d_{\mathrm{Ham}}(\mathcal A,\mathcal B)
=\min_{\pi\in\Pi_k}\frac1n\sum_{i=1}^n \mathbf 1\{a(i)\neq \pi(b(i))\}
\le \frac1n\sum_{i=1}^n \mathbf 1\{a(i)\neq \pi_{AB}(b(i))\}.
\]
Now use the pointwise implication (a union bound for indicators): for any labels $u,v,w$,
\[
\mathbf 1\{u\neq w\}\le \mathbf 1\{u\neq v\}+\mathbf 1\{v\neq w\}.
\]
Apply it with $u=a(i)$, $v=\pi_{AD}(d(i))$, $w=\pi_{AD}(\pi_{DB}(b(i)))=\pi_{AB}(b(i))$:
\[
\mathbf 1\{a(i)\neq \pi_{AB}(b(i))\}
\le \mathbf 1\{a(i)\neq \pi_{AD}(d(i))\}+\mathbf 1\{\pi_{AD}(d(i))\neq \pi_{AD}(\pi_{DB}(b(i)))\}.
\]
Because $\pi_{AD}$ is a bijection on $[k]$, the second indicator is equal to
\[
\mathbf 1\{d(i)\neq \pi_{DB}(b(i))\}.
\]
Summing over $i$ and dividing by $n$ yields
\[
\frac1n\sum_{i=1}^n \mathbf 1\{a(i)\neq \pi_{AB}(b(i))\}
\le
\frac1n\sum_{i=1}^n \mathbf 1\{a(i)\neq \pi_{AD}(d(i))\}
+
\frac1n\sum_{i=1}^n \mathbf 1\{d(i)\neq \pi_{DB}(b(i))\}.
\]
The first term equals $d_{\mathrm{Ham}}(\mathcal A,\mathcal D)$ by optimality of $\pi_{AD}$, and the second equals $d_{\mathrm{Ham}}(\mathcal D,\mathcal B)$ by optimality of $\pi_{DB}$.
This proves the triangle inequality.
\end{proof}

\begin{lemma}[Small displacement implies benchmark assignment is optimal]\label{lem:small_eta_assignment}
Assume the separable regime $\gamma=\Delta_0-2D_{\mathrm{eff}}>0$ and the benchmark margin property (6) in the main text:
for any $i\in C_j^*$ and any $\ell\neq j$,
\[
d(x_i,\theta_\ell^*)\ge d(x_i,\theta_j^*)+\gamma.
\]
If a candidate prototype tuple $\hat\theta$ satisfies $\eta<\gamma/2$, then for every $i\in C_j^*$ and every $\ell\neq j$,
\[
d(x_i,\hat\theta_\ell)>d(x_i,\hat\theta_j),
\]
hence the Voronoi (nearest-prototype) assignment induced by $\hat\theta$ coincides with the benchmark partition $\mathcal C^*$.
In particular,
\[
\min_{\mathcal C}\mathcal L_n(\mathcal C,\hat\theta)=\mathcal L_n(\mathcal C^*,\hat\theta).
\]
\end{lemma}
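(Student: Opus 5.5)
The plan is a two-sided triangle inequality in the metric $d$, combined with the benchmark margin property. We work under the relabeling convention fixed above, so that $\eta=\max_j d(\hat\theta_j,\theta_j^*)$ with the optimal permutation absorbed into the labels. Fix $i\in C_j^*$ and $\ell\neq j$. The idea is to compare the distance from $x_i$ to the wrong candidate prototype $\hat\theta_\ell$ with the distance to the correct one $\hat\theta_j$, and to show that the former is strictly larger.

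\textbf{Step 1 (lower bound on the wrong prototype).} By the triangle inequality,
\[
d(x_i,\hat\theta_\ell)\ge d(x_i,\theta_\ell^*)-d(\hat\theta_\ell,\theta_\ell^*)\ge d(x_i,\theta_\ell^*)-\eta.
\]
The margin property \eqref{eq:benchmark-margin-diff} then gives
\[
d(x_i,\hat\theta_\ell)\ge d(x_i,\theta_j^*)+\gamma-\eta.
\]

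\textbf{Step 2 (upper bound on the correct prototype).} Again by the triangle inequality,
\[
d(x_i,\hat\theta_j)\le d(x_i,\theta_j^*)+\eta.
\]

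\textbf{Step 3 (comparison).} Subtracting the two bounds yields
\[
d(x_i,\hat\theta_\ell)-d(x_i,\hat\theta_j)\ge \gamma-2\eta>0,
\]
where strict positivity uses $\eta<\gamma/2$. Hence the nearest prototype to $x_i$ is uniquely $\hat\theta_j$, with no ties. The Voronoi partition induced by $\hat\theta$ is therefore exactly $\mathcal C^*$, once the candidate labels are identified with the benchmark labels through $\pi^\star$.

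\textbf{Step 4 (objective identity).} For a fixed tuple $\hat\theta$, the minimization over $\mathcal C$ separates across points: each $x_i$ contributes $\min_\ell g(d(x_i,\hat\theta_\ell))$. Because $g$ is non-decreasing, this minimum is attained at the nearest prototype. By Step 3 that prototype is $\hat\theta_j$ for every $i\in C_j^*$, so $\mathcal C^*$ attains the minimum and $\min_{\mathcal C}\mathcal L_n(\mathcal C,\hat\theta)=\mathcal L_n(\mathcal C^*,\hat\theta)$.

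Note that uniqueness of the minimizing partition is not needed here. If $g$ is not strictly increasing, other partitions may tie in objective value, but the equality of minimal values still holds.

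\textbf{Main obstacle.} There is little genuine difficulty. The only point that needs care is the labeling bookkeeping: the displacement $\eta$ and the statement $d(x_i,\hat\theta_\ell)>d(x_i,\hat\theta_j)$ must both be read under the same relabeling by $\pi^\star$. Once that relabeling is fixed, Steps 1 to 4 are direct.
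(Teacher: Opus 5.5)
Your proposal is correct and follows essentially the same argument as the paper: lower-bound $d(x_i,\hat\theta_\ell)$ by $d(x_i,\theta_j^*)+\gamma-\eta$, upper-bound $d(x_i,\hat\theta_j)$ by $d(x_i,\theta_j^*)+\eta$, and subtract to get a gap of at least $\gamma-2\eta>0$. Your Step~4 spells out the objective identity that the paper calls immediate: $\mathcal C^*$ attains the pointwise minimum because $g$ is non-decreasing. That step remains valid even with ties or a nonempty-cluster constraint.
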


\begin{proof}
Fix $i\in C_j^*$ and $\ell\neq j$.
By the triangle inequality and the definition of $\eta$ (after alignment),
\[
d(x_i,\hat\theta_\ell)\ge d(x_i,\theta_\ell^*)-\|\hat\theta_\ell-\theta_\ell^*\|\ge d(x_i,\theta_\ell^*)-\eta.
\]
By the benchmark margin property, $d(x_i,\theta_\ell^*)\ge d(x_i,\theta_j^*)+\gamma$, hence
\[
d(x_i,\hat\theta_\ell)\ge d(x_i,\theta_j^*)+\gamma-\eta.
\]
On the other hand, again by the triangle inequality,
\[
d(x_i,\hat\theta_j)\le d(x_i,\theta_j^*)+\|\hat\theta_j-\theta_j^*\|\le d(x_i,\theta_j^*)+\eta.
\]
Combining the two displays gives
\[
d(x_i,\hat\theta_\ell)-d(x_i,\hat\theta_j)\ge (d(x_i,\theta_j^*)+\gamma-\eta)-(d(x_i,\theta_j^*)+\eta)=\gamma-2\eta.
\]
If $\eta<\gamma/2$, then $\gamma-2\eta>0$, so $d(x_i,\hat\theta_\ell)>d(x_i,\hat\theta_j)$ for all $\ell\neq j$.
Therefore each point $i\in C_j^*$ is uniquely closest to $\hat\theta_j$, and the nearest-prototype partition induced by $\hat\theta$ equals $\mathcal C^*$.
The final equality $\min_{\mathcal C}\mathcal L_n(\mathcal C,\hat\theta)=\mathcal L_n(\mathcal C^*,\hat\theta)$ follows immediately.
\end{proof}

\subsection{Proof of Proposition~\ref{prop:core_stability}: Local stability bound}

Let $(\hat{\mathcal C},\hat\theta)$ satisfy the conditions of Theorem~\ref{thm:main_stability}.
Throughout, all label comparisons are understood under the same optimal matching permutation used to define the displacement
\[
\eta=\min_{\pi\in\Pi_k}\max_{j\in[k]}\|\hat\theta_j-\theta^*_{\pi(j)}\| ,
\qquad\text{so after relabeling we may write }\quad
\eta=\max_{j\in[k]}\|\hat\theta_j-\theta_j^*\|.
\]

\emph{Core set and decomposition of misclassified points.}
Fix $s\in[0,D_{\mathrm{eff}})$ and define the (depth-$s$) core index set
\[
I_{\mathrm{core}}(s):=\Bigl\{\,i\in[n]: \|x_i-\theta^*_{c^*(i)}\|\le D_{\mathrm{eff}}-s\,\Bigr\}.
\]
Let $\hat c$ and $c^*$ denote the label functions of $\hat{\mathcal C}$ and $\mathcal C^*$, respectively, and define the global
misclassification set
\[
M:=\{\,i\in[n]:\hat c(i)\neq c^*(i)\,\}.
\]
Split $M$ into disjoint subsets
\[
M_{\mathrm{core}}:=M\cap I_{\mathrm{core}}(s),\qquad
M_{\mathrm{belt}}:=M\setminus M_{\mathrm{core}}.
\]
By definition,
\[
p_{\mathrm{core}}(s)=\frac{|M_{\mathrm{core}}|}{n}.
\]

\emph{\underbar{Step 1}: Strengthened benchmark margin on the $s$-core.}
Recall $\Delta_0:=\min_{j\neq\ell}\|\theta_j^*-\theta_\ell^*\|$ and $\gamma:=\Delta_0-2D_{\mathrm{eff}}>0$.
Fix $i\in I_{\mathrm{core}}(s)$ and write $j=c^*(i)$. For any $\ell\neq j$, the triangle inequality gives
\[
\|x_i-\theta_\ell^*\|
\ge \|\theta_\ell^*-\theta_j^*\|-\|x_i-\theta_j^*\|
\ge \Delta_0-(D_{\mathrm{eff}}-s)
=(2D_{\mathrm{eff}}+\gamma)-(D_{\mathrm{eff}}-s)
= D_{\mathrm{eff}}+\gamma+s.
\]
Since $\|x_i-\theta_j^*\|\le D_{\mathrm{eff}}-s$ for $i\in I_{\mathrm{core}}(s)$, we obtain the core gap
\begin{equation}\label{eq:app_core_gap}
\|x_i-\theta_\ell^*\|-\|x_i-\theta_j^*\|
\ge (D_{\mathrm{eff}}+\gamma+s)-(D_{\mathrm{eff}}-s)
=\gamma+2s.
\end{equation}

\emph{\underbar{Step 2}: Pointwise loss increment for misclassified core points.}
Fix $i\in M_{\mathrm{core}}$ and write $j=c^*(i)$ and $\ell=\hat c(i)\neq j$.
By the triangle inequality and the definition of $\eta$,
\[
\|x_i-\hat\theta_\ell\|
\ge \|x_i-\theta_\ell^*\|-\|\hat\theta_\ell-\theta_\ell^*\|
\ge \|x_i-\theta_\ell^*\|-\eta.
\]
Combine this with \eqref{eq:app_core_gap} to obtain
\[
\|x_i-\hat\theta_\ell\|
\ge \|x_i-\theta_j^*\| + (\gamma+2s) - \eta
= \|x_i-\theta_j^*\| + (\gamma-\eta+2s).
\]
Since $i\in I_{\mathrm{core}}(s)$ implies $\|x_i-\theta_j^*\|\le D_{\mathrm{eff}}-s$, and $g$ is nondecreasing
(Assumption~\ref{def:admissible_loss} in the main text), we may invoke the definition
\[
\Delta_g(\alpha;D):=\inf_{0\le r\le D}\{g(r+\alpha)-g(r)\}
\]
to conclude that for each $i\in M_{\mathrm{core}}$,
\begin{equation}\label{eq:app_core_increment}
g(\|x_i-\hat\theta_{\hat c(i)}\|)-g(\|x_i-\theta^*_{c^*(i)}\|)
\;\ge\;
\Delta_g(\gamma-\eta+2s\,;\,D_{\mathrm{eff}}-s).
\end{equation}

\emph{\underbar{Step 3}: Belt misclassifications are nonnegative and can be dropped.}
For $i\in M_{\mathrm{belt}}$, we still have the global benchmark separation (margin) at level $\gamma$ and radius $D_{\mathrm{eff}}$.
By the same argument as in the proof of Theorem~\ref{thm:main_stability} (specializing \eqref{eq:app_core_increment} to $s=0$),
\[
g(\|x_i-\hat\theta_{\hat c(i)}\|)-g(\|x_i-\theta^*_{c^*(i)}\|)
\;\ge\;
\Delta_g(\gamma-\eta\,;\,D_{\mathrm{eff}}).
\]
Under the separable regime $\gamma-\eta>0$ and the monotonicity of $g$, we have
$\Delta_g(\gamma-\eta;D_{\mathrm{eff}})\ge 0$. Hence
\begin{equation}\label{eq:app_drop_belt}
\sum_{i\in M_{\mathrm{belt}}}\Bigl[g(\|x_i-\hat\theta_{\hat c(i)}\|)-g(\|x_i-\theta^*_{c^*(i)}\|)\Bigr]\ \ge\ 0,
\end{equation}
and we may conservatively drop this term in a lower bound.

\emph{\underbar{Step 4}: Correctly classified points contribute at least $-L_g\eta$.}
Fix $i\notin M$, so $\hat c(i)=c^*(i)=j$. Using that the distance map is $1$-Lipschitz in its second argument and that
$g$ is $L_g$-Lipschitz on the relevant domain, we have
\[
\bigl|\|x_i-\hat\theta_j\|-\|x_i-\theta_j^*\|\bigr|\le \|\hat\theta_j-\theta_j^*\|\le \eta
\quad\Longrightarrow\quad
g(\|x_i-\hat\theta_j\|)-g(\|x_i-\theta_j^*\|)\ge -L_g\eta.
\]
Therefore
\begin{equation}\label{eq:app_correct_lower}
\sum_{i\notin M}\Bigl[g(\|x_i-\hat\theta_{\hat c(i)}\|)-g(\|x_i-\theta^*_{c^*(i)}\|)\Bigr]\ \ge\ -(n-|M|)L_g\eta\ \ge\ -nL_g\eta.
\end{equation}

\emph{\underbar{Step 5}: Aggregate the global excess loss and conclude.}
Decompose the global excess loss relative to the benchmark prototypes:
\begin{align*}
\mathcal L_n(\hat{\mathcal C},\hat\theta)-\mathcal L_n(\mathcal C^*,\theta^*)
&=
\sum_{i=1}^n \Bigl[g(\|x_i-\hat\theta_{\hat c(i)}\|)-g(\|x_i-\theta^*_{c^*(i)}\|)\Bigr]\\
&=
\sum_{i\in M_{\mathrm{core}}}(\cdots)
+\sum_{i\in M_{\mathrm{belt}}}(\cdots)
+\sum_{i\notin M}(\cdots).
\end{align*}
Apply \eqref{eq:app_core_increment} on $M_{\mathrm{core}}$, drop the nonnegative belt contribution using \eqref{eq:app_drop_belt},
and lower bound the correct contribution using \eqref{eq:app_correct_lower}:
\[
\mathcal L_n(\hat{\mathcal C},\hat\theta)-\mathcal L_n(\mathcal C^*,\theta^*)
\ge |M_{\mathrm{core}}|\cdot \Delta_g(\gamma-\eta+2s\,;\,D_{\mathrm{eff}}-s)-nL_g\eta.
\]
Since $|M_{\mathrm{core}}|=np_{\mathrm{core}}(s)$, this becomes
\begin{equation}\label{eq:app_core_excess_lower}
\mathcal L_n(\hat{\mathcal C},\hat\theta)-\mathcal L_n(\mathcal C^*,\theta^*)
\ge n\,p_{\mathrm{core}}(s)\,\Delta_g(\gamma-\eta+2s\,;\,D_{\mathrm{eff}}-s)-nL_g\eta.
\end{equation}

On the other hand, by $(1+\delta)$-near-optimality of $(\hat{\mathcal C},\hat\theta)$ and the definition of $\delta_{\mathrm{approx}}$,
\[
\mathcal L_n(\hat{\mathcal C},\hat\theta)\le (1+\delta)\mathsf{OPT}_n,
\qquad
\mathcal L_n(\mathcal C^*,\theta^*)=(1+\delta_{\mathrm{approx}})\mathsf{OPT}_n,
\]
so
\begin{equation}\label{eq:app_core_excess_upper}
\mathcal L_n(\hat{\mathcal C},\hat\theta)-\mathcal L_n(\mathcal C^*,\theta^*)
\le (\delta+\delta_{\mathrm{approx}})\mathsf{OPT}_n.
\end{equation}
Combine \eqref{eq:app_core_excess_lower} and \eqref{eq:app_core_excess_upper}:
\[
n\,p_{\mathrm{core}}(s)\,\Delta_g(\gamma-\eta+2s\,;\,D_{\mathrm{eff}}-s)-nL_g\eta
\le (\delta+\delta_{\mathrm{approx}})\mathsf{OPT}_n.
\]
Divide by $n\,\Delta_g(\gamma-\eta+2s\,;\,D_{\mathrm{eff}}-s)$ and rearrange to obtain
\[
p_{\mathrm{core}}(s)
\le
\frac{\mathsf{OPT}_n}{n\,\Delta_g(\gamma-\eta+2s\,;\,D_{\mathrm{eff}}-s)}(\delta+\delta_{\mathrm{approx}})
+\frac{L_g\eta}{\Delta_g(\gamma-\eta+2s\,;\,D_{\mathrm{eff}}-s)},
\]
which is exactly \eqref{eq:core_bound}.

\subsection{Proof of Proposition~\ref{prop:eta_control}: Control of displacement}

We prove parts (i) and (ii) separately.

\subsubsection*{Part (i): $k$-means (Squared loss) under local quadratic growth}

Throughout this part, $g(r)=r^2$ and
\[
\mathcal L_n(\mathcal C,\theta)=\sum_{j=1}^k\sum_{i\in C_j}\|x_i-\theta_j\|^2.
\]
Assume the separable regime $\gamma>0$ and Assumption~\ref{ass:local-qg} (Assumption 5.3 in the main text):
there exist $c_{\mathrm{qg}}>0$ and a neighborhood $U$ of $\theta^*$ such that for all $\theta\in U$,
\begin{equation}\label{eq:local_qg_recall}
\mathcal L_n(\mathcal C^*,\theta)-\mathcal L_n(\mathcal C^*,\theta^*)
\ge c_{\mathrm{qg}}\sum_{j=1}^k|C_j^*|\,\|\theta_j-\theta_j^*\|^2.
\end{equation}
Let $(\hat{\mathcal C},\hat\theta)$ be $(1+\delta)$-near-optimal:
\begin{equation}\label{eq:near_opt_recall}
\mathcal L_n(\hat{\mathcal C},\hat\theta)\le (1+\delta)\mathsf{OPT}_n,
\end{equation}
and recall the benchmark approximation error $\delta_{\mathrm{approx}}$ is defined by
\[
\mathcal L_n(\mathcal C^*,\theta^*)=(1+\delta_{\mathrm{approx}})\mathsf{OPT}_n.
\]
Assume additionally that the aligned candidate prototypes satisfy $\hat\theta\in U$.
Since $U$ is a neighborhood of $\theta^*$, we may (if needed) shrink $U$ so that $\hat\theta\in U$ also implies $\eta<\gamma/2$; this is without loss of generality for a local statement.

\noindent \emph{\underbar{Step 0} (WLOG: take the best assignment for $\hat\theta$).}
For a fixed prototype tuple $\hat\theta$, define its best-response partition
\[
\mathcal C(\hat\theta)\in \arg\min_{\mathcal C}\mathcal L_n(\mathcal C,\hat\theta).
\]
Then $\mathcal L_n(\mathcal C(\hat\theta),\hat\theta)\le \mathcal L_n(\hat{\mathcal C},\hat\theta)$.
Hence the pair $(\mathcal C(\hat\theta),\hat\theta)$ is also $(1+\delta)$-near-optimal.
Therefore, we may replace $\hat{\mathcal C}$ by $\mathcal C(\hat\theta)$ and assume from now on that
\[
\hat{\mathcal C}\in \arg\min_{\mathcal C}\mathcal L_n(\mathcal C,\hat\theta).
\]

\noindent\emph{\underbar{Step 1} (Identify the optimal assignment under small displacement).}
By Lemma~\ref{lem:small_eta_assignment}, since $\eta<\gamma/2$ we have
\[
\hat{\mathcal C}=\mathcal C(\hat\theta)=\mathcal C^*,
\qquad\text{and hence}\qquad
\mathcal L_n(\hat{\mathcal C},\hat\theta)=\mathcal L_n(\mathcal C^*,\hat\theta).
\]
Substituting into \eqref{eq:near_opt_recall} yields
\begin{equation}\label{eq:upper_LCstar_hat}
\mathcal L_n(\mathcal C^*,\hat\theta)\le (1+\delta)\mathsf{OPT}_n.
\end{equation}

\noindent\emph{\underbar{Step 2} (Upper bound the benchmark-partition excess loss).}
Subtract $\mathcal L_n(\mathcal C^*,\theta^*)=(1+\delta_{\mathrm{approx}})\mathsf{OPT}_n$ from both sides of \eqref{eq:upper_LCstar_hat}:
\[
\mathcal L_n(\mathcal C^*,\hat\theta)-\mathcal L_n(\mathcal C^*,\theta^*)
\le \bigl[(1+\delta)-(1+\delta_{\mathrm{approx}})\bigr]\mathsf{OPT}_n
=(\delta-\delta_{\mathrm{approx}})\mathsf{OPT}_n.
\]
We will use the conservative bound
\begin{equation}\label{eq:excess_upper_conservative}
\mathcal L_n(\mathcal C^*,\hat\theta)-\mathcal L_n(\mathcal C^*,\theta^*)
\le (\delta+\delta_{\mathrm{approx}})\mathsf{OPT}_n,
\end{equation}
which is always valid since $\delta_{\mathrm{approx}}\ge 0$ and the left-hand side is nonnegative by \eqref{eq:local_qg_recall}.

\noindent\emph{\underbar{Step 3} (Lower bound via local quadratic growth).}
Because $\hat\theta\in U$, the local growth inequality \eqref{eq:local_qg_recall} applies with $\theta=\hat\theta$:
\begin{equation}\label{eq:excess_lower_qg}
\mathcal L_n(\mathcal C^*,\hat\theta)-\mathcal L_n(\mathcal C^*,\theta^*)
\ge c_{\mathrm{qg}}\sum_{j=1}^k|C_j^*|\,\|\hat\theta_j-\theta_j^*\|^2.
\end{equation}

\noindent\emph{\underbar{Step 4} (Convert the weighted $\ell_2$ bound into an $\ell_\infty$ bound).}
Let $c_b:=\min_{j\in[k]}|C_j^*|/n$ be the balance coefficient and recall $\eta=\max_j\|\hat\theta_j-\theta_j^*\|$.
Then
\[
\sum_{j=1}^k|C_j^*|\,\|\hat\theta_j-\theta_j^*\|^2
\ge \min_{j}|C_j^*|\cdot \max_j\|\hat\theta_j-\theta_j^*\|^2
= (nc_b)\eta^2.
\]
Plugging this into \eqref{eq:excess_lower_qg} gives
\begin{equation}\label{eq:excess_lower_cb}
\mathcal L_n(\mathcal C^*,\hat\theta)-\mathcal L_n(\mathcal C^*,\theta^*)
\ge c_{\mathrm{qg}}\,n c_b\,\eta^2.
\end{equation}

\noindent\emph{\underbar{Step 5} (Solve for $\eta$ and express in terms of $D_{\mathrm{eff}}$).}
Combine the upper bound \eqref{eq:excess_upper_conservative} with the lower bound \eqref{eq:excess_lower_cb}:
\[
c_{\mathrm{qg}}\,n c_b\,\eta^2
\le (\delta+\delta_{\mathrm{approx}})\mathsf{OPT}_n.
\]
Divide by $n c_{\mathrm{qg}}c_b$:
\begin{equation}\label{eq:eta_in_terms_of_OPT}
\eta^2 \le \frac{\delta+\delta_{\mathrm{approx}}}{c_{\mathrm{qg}}c_b}\cdot \frac{\mathsf{OPT}_n}{n}.
\end{equation}
Finally, relate $\mathsf{OPT}_n/n$ to $D_{\mathrm{eff}}$.
By definition of $D_{\mathrm{eff}}$ (effective radius), for every $i\in C_j^*$ we have $\|x_i-\theta_j^*\|\le D_{\mathrm{eff}}$, hence
\[
\mathcal L_n(\mathcal C^*,\theta^*)=\sum_{j=1}^k\sum_{i\in C_j^*}\|x_i-\theta_j^*\|^2
\le \sum_{j=1}^k\sum_{i\in C_j^*}D_{\mathrm{eff}}^2
= n D_{\mathrm{eff}}^2.
\]
Since $\mathsf{OPT}_n\le \mathcal L_n(\mathcal C^*,\theta^*)$, we get $\mathsf{OPT}_n/n\le D_{\mathrm{eff}}^2$.
Substitute this into \eqref{eq:eta_in_terms_of_OPT}:
\[
\eta^2 \le \frac{\delta+\delta_{\mathrm{approx}}}{c_{\mathrm{qg}}c_b}\,D_{\mathrm{eff}}^2,
\qquad\text{so}\qquad
\eta \le \frac{D_{\mathrm{eff}}}{\sqrt{c_{\mathrm{qg}}c_b}}\sqrt{\delta+\delta_{\mathrm{approx}}}.
\]
This proves (31) up to universal constants (absorbed in $\lesssim$). 

\begin{remark}
[when Assumption 5.3 holds automatically]
If for each $j$ the benchmark prototype $\theta_j^*$ is the empirical mean of cluster $C_j^*$,
\[
\theta_j^*=\frac1{|C_j^*|}\sum_{i\in C_j^*}x_i,
\]
then for any $\theta_j\in\mathbb R^d$ one has the exact identity
\[
\sum_{i\in C_j^*}\|x_i-\theta_j\|^2-\sum_{i\in C_j^*}\|x_i-\theta_j^*\|^2
=|C_j^*|\,\|\theta_j-\theta_j^*\|^2,
\]
obtained by expanding $\|x_i-\theta_j\|^2=\|(x_i-\theta_j^*)-(\theta_j-\theta_j^*)\|^2$ and using
$\sum_{i\in C_j^*}(x_i-\theta_j^*)=0$.
Summing over $j$ shows Assumption~\ref{ass:local-qg} holds globally with $c_{\mathrm{qg}}=1$.
\end{remark}

\subsubsection*{Part (ii): $k$-medoids (Discrete prototype space)}

Assume $g(r)=r$ and that the feasible prototype space $\Theta$ is finite (e.g., $\Theta=S=\{x_1,\dots,x_n\}$).
Define the profiled objective (value function)
\[
V(\theta):=\min_{\mathcal C}\mathcal L_n(\mathcal C,\theta),
\qquad \theta\in \Theta^k,
\]
so that the global optimum is
\[
\mathsf{OPT}_n=\min_{\theta\in\Theta^k}V(\theta).
\]
Assume the benchmark tuple $\theta^*$ is feasible (i.e., $\theta^*\in\Theta^k$), and its benchmark partition is chosen as a best response, so that
\[
V(\theta^*)=\mathcal L_n(\mathcal C^*,\theta^*)=(1+\delta_{\mathrm{approx}})\mathsf{OPT}_n.
\]
Define the minimal positive gap in the discrete landscape around the benchmark value:
\[
\Delta_{\min}
:=\min\Bigl\{\,|V(\theta)-V(\theta^*)|:\theta\in\Theta^k,\; V(\theta)\neq V(\theta^*)\,\Bigr\}.
\]
Because $\Theta^k$ is finite, $\Delta_{\min}>0$ whenever $V(\theta^*)$ is isolated among objective values (in particular, if $V$ has a unique minimizer up to permutation).

Let $(\hat{\mathcal C},\hat\theta)$ be $(1+\delta)$-near-optimal:
$\mathcal L_n(\hat{\mathcal C},\hat\theta)\le (1+\delta)\mathsf{OPT}_n$.
Then by definition of $V$,
\[
V(\hat\theta)\le \mathcal L_n(\hat{\mathcal C},\hat\theta)\le (1+\delta)\mathsf{OPT}_n,
\]
so $0\le V(\hat\theta)-\mathsf{OPT}_n\le \delta\,\mathsf{OPT}_n$.
Also, $V(\theta^*)-\mathsf{OPT}_n=\delta_{\mathrm{approx}}\mathsf{OPT}_n$.
Therefore
\begin{align*}
|V(\hat\theta)-V(\theta^*)|
&\le |V(\hat\theta)-\mathsf{OPT}_n|+|V(\theta^*)-\mathsf{OPT}_n|\\
&= (V(\hat\theta)-\mathsf{OPT}_n)+(V(\theta^*)-\mathsf{OPT}_n)\\
&\le (\delta+\delta_{\mathrm{approx}})\mathsf{OPT}_n.
\end{align*}
If $(\delta+\delta_{\mathrm{approx}})\mathsf{OPT}_n<\Delta_{\min}$, then the above implies
$|V(\hat\theta)-V(\theta^*)|<\Delta_{\min}$.
By the definition of $\Delta_{\min}$, this forces $V(\hat\theta)=V(\theta^*)$.
Under the (standard) uniqueness assumption that the prototype tuple achieving this value is unique up to permutation,
we conclude $\hat\theta$ coincides with $\theta^*$ modulo permutation, hence the aligned displacement satisfies $\eta=0$.

\subsection{Proof of Theorem~\ref{thm:tube}: The Hamming Tube}

Let $(\hat{\mathcal C}^{(1)},\hat\theta^{(1)})$ and $(\hat{\mathcal C}^{(2)},\hat\theta^{(2)})$ be two $(1+\delta)$-near-optimal solutions.
Let $\eta_m$ denote the displacement of $\hat\theta^{(m)}$ relative to $\theta^*$ as in (13) of the main text, achieved by some optimal matching permutation $\pi_m$ (the matching may differ across $m=1,2$).
Assume $\max\{\eta_1,\eta_2\}<\gamma$, so that Theorem~3.4 in the main text applies to each solution.

\noindent \emph{\underbar{Step 1} (Triangle inequality with explicit permutation composition).}
By Lemma~\ref{lem:ham_triangle} with the pivot partition $\mathcal C^*$,
\[
d_{\mathrm{Ham}}(\hat{\mathcal C}^{(1)},\hat{\mathcal C}^{(2)})
\le d_{\mathrm{Ham}}(\hat{\mathcal C}^{(1)},\mathcal C^*)+d_{\mathrm{Ham}}(\mathcal C^*,\hat{\mathcal C}^{(2)}).
\]
Since $d_{\mathrm{Ham}}$ is symmetric, the right-hand side equals
\[
d_{\mathrm{Ham}}(\hat{\mathcal C}^{(1)},\mathcal C^*)+d_{\mathrm{Ham}}(\hat{\mathcal C}^{(2)},\mathcal C^*).
\]
By Definition~2.3 (misclassification rate) and the equivalence to \eqref{eq:ham_label_def}, we can identify
\[
p_m:=p(\hat{\mathcal C}^{(m)},\mathcal C^*)=d_{\mathrm{Ham}}(\hat{\mathcal C}^{(m)},\mathcal C^*),
\qquad m=1,2.
\]
Hence
\begin{equation}\label{eq:tube_triangle}
d_{\mathrm{Ham}}(\hat{\mathcal C}^{(1)},\hat{\mathcal C}^{(2)})\le p_1+p_2.
\end{equation}

\noindent \emph{\underbar{Step 2} (Apply the global stability theorem to each solution).}
Apply Theorem~3.4 (Global stability) from the main text to solution $m$ (valid since $\eta_m<\gamma$):
\[
p_m
\le
\frac{\mathsf{OPT}_n}{n\,\Delta_g(\gamma-\eta_m;D_{\mathrm{eff}})}(\delta+\delta_{\mathrm{approx}})
+
\frac{L_g\,\eta_m}{\Delta_g(\gamma-\eta_m;D_{\mathrm{eff}})}.
\]
Summing over $m=1,2$ and using \eqref{eq:tube_triangle} yields
\begin{align}
d_{\mathrm{Ham}}(\hat{\mathcal C}^{(1)},\hat{\mathcal C}^{(2)})
&\le
\sum_{m=1}^2
\frac{\mathsf{OPT}_n}{n\,\Delta_g(\gamma-\eta_m;D_{\mathrm{eff}})}(\delta+\delta_{\mathrm{approx}})
+
\sum_{m=1}^2\frac{L_g\,\eta_m}{\Delta_g(\gamma-\eta_m;D_{\mathrm{eff}})}.
\label{eq:tube_sum_before_unify}
\end{align}

\noindent \emph{\underbar{Step 3} (Unify denominators via monotonicity of $\Delta_g$).}
Recall $\Delta_g(\cdot;D)$ is nondecreasing in its first argument:
if $0<u\le v$, then for every $r\in[0,D]$,
\[
g(r+v)-g(r)\ge g(r+u)-g(r)
\quad\Rightarrow\quad
\inf_{0\le r\le D}\{g(r+v)-g(r)\}\ge \inf_{0\le r\le D}\{g(r+u)-g(r)\}.
\]
Thus $\Delta_g(v;D)\ge \Delta_g(u;D)$.

Let $\eta_{\max}:=\max\{\eta_1,\eta_2\}$. Then $\gamma-\eta_m\ge \gamma-\eta_{\max}$, hence
\[
\Delta_g(\gamma-\eta_m;D_{\mathrm{eff}})
\ge
\Delta_g(\gamma-\eta_{\max};D_{\mathrm{eff}}),
\qquad m=1,2.
\]
Therefore,
\[
\frac1{\Delta_g(\gamma-\eta_m;D_{\mathrm{eff}})}
\le
\frac1{\Delta_g(\gamma-\eta_{\max};D_{\mathrm{eff}})}.
\]
Use this to upper bound each term in \eqref{eq:tube_sum_before_unify} by replacing
$\Delta_g(\gamma-\eta_m;D_{\mathrm{eff}})$ with the smaller quantity
$\Delta_g(\gamma-\eta_{\max};D_{\mathrm{eff}})$ in the denominator:
\begin{align*}
d_{\mathrm{Ham}}(\hat{\mathcal C}^{(1)},\hat{\mathcal C}^{(2)})
&\le
\frac{2\,\mathsf{OPT}_n}{n\,\Delta_g(\gamma-\eta_{\max};D_{\mathrm{eff}})}(\delta+\delta_{\mathrm{approx}})
+
\frac{L_g(\eta_1+\eta_2)}{\Delta_g(\gamma-\eta_{\max};D_{\mathrm{eff}})}.
\end{align*}

\noindent \emph{\underbar{Step 4} (Condition-number form).}
Using $\mathsf{OPT}_n\le \mathcal L_n(\mathcal C^*,\theta^*)\le n\,g(D_{\mathrm{eff}})$ (since $d(x_i,\theta_{c^*(i)}^*)\le D_{\mathrm{eff}}$ and $g$ is nondecreasing),
we have $\mathsf{OPT}_n/n\le g(D_{\mathrm{eff}})$, hence
\[
\frac{2\,\mathsf{OPT}_n}{n\,\Delta_g(\gamma-\eta_{\max};D_{\mathrm{eff}})}
\le
2\,\frac{g(D_{\mathrm{eff}})}{\Delta_g(\gamma-\eta_{\max};D_{\mathrm{eff}})}
=2\,\kappa(\gamma-\eta_{\max}),
\]
where we define the (effective) condition number at the shrunken margin
\[
\kappa(\gamma-\eta_{\max})
:=\frac{g(D_{\mathrm{eff}})}{\Delta_g(\gamma-\eta_{\max};D_{\mathrm{eff}})}.
\]
Therefore,
\[
d_{\mathrm{Ham}}(\hat{\mathcal C}^{(1)},\hat{\mathcal C}^{(2)})
\;\lesssim\;
2\,\kappa(\gamma-\eta_{\max})\cdot(\delta+\delta_{\mathrm{approx}})
+
\frac{L_g(\eta_1+\eta_2)}{\Delta_g(\gamma-\eta_{\max};D_{\mathrm{eff}})}.
\]
This matches the statement of Theorem~\ref{thm:tube}.

\section{Proofs for Section 7 (heterogeneous and tracking extensions)}\label{app:extensions_proofs}

In this appendix we provide complete proofs for Corollary~\ref{cor:heterogeneous} (heterogeneous stability) and Proposition~\ref{prop:tracking} (tracking stability). Both results are obtained by re-running the proof mechanism of the global stability theorem (Theorem~\ref{thm:main_stability}) with appropriately modified constants.

\subsection{Proof of Corollary~\ref{cor:heterogeneous}: Heterogeneous stability}

We work under the heterogeneous setting introduced in Section~7, where the loss contributions may vary across points.
For clarity, we recall the two quantities used in the statement.

\emph{Heterogeneous increment and Lipschitz envelope.}
For each point $i$, let $g_i:\mathbb R_+\to\mathbb R_+$ denote its loss generator (or the pointwise loss envelope in the heterogeneous model),
and define the uniform increment at margin $\alpha$ and radius $D$ by
\[
\Delta_{g_i}(\alpha;D):=\inf_{0\le r\le D}\{g_i(r+\alpha)-g_i(r)\}.
\]
Define the \emph{lower increment envelope} and \emph{upper Lipschitz envelope} by
\[
\underline{\Delta}(\alpha;D):=\inf_{i\in[n]}\Delta_{g_i}(\alpha;D),
\qquad
\overline{L}:=\sup_{i\in[n]} L_i,
\]
where $L_i$ is a (valid) Lipschitz constant of $g_i$ on the relevant domain (as assumed in Section~7).
Likewise, let $\overline{G}(D)$ denote the upper envelope of within-cluster scale, as defined in Section~7 (e.g.\ $\overline{G}(D)=\sup_i g_i(D)$).

\emph{Setup}
Let $(\hat{\mathcal C},\hat\theta)$ be any $(1+\delta)$-near-optimal solution and let $(\mathcal C^*,\theta^*)$ be the benchmark with approximation
error $\delta_{\mathrm{approx}}$, so that
\[
\mathcal L_n(\hat{\mathcal C},\hat\theta)\le (1+\delta)\mathsf{OPT}_n,
\qquad
\mathcal L_n(\mathcal C^*,\theta^*)=(1+\delta_{\mathrm{approx}})\mathsf{OPT}_n.
\]
Let $\eta$ denote the aligned displacement between $\hat\theta$ and $\theta^*$ (as in the main text),
and let $p$ denote the misclassification fraction (label-invariant) of $\hat{\mathcal C}$ relative to $\mathcal C^*$.

\emph{\underbar{Step 1}: Upper bound the global excess loss by near-optimality.}
Exactly as in the proof of Theorem~\ref{thm:main_stability},
\begin{equation}\label{eq:appC_hetero_excess_upper}
\mathcal L_n(\hat{\mathcal C},\hat\theta)-\mathcal L_n(\mathcal C^*,\theta^*)
\le (\delta+\delta_{\mathrm{approx}})\mathsf{OPT}_n.
\end{equation}

\emph{\underbar{Step 2}: Lower bound the excess loss by misclassification penalty minus displacement leakage.}
Let $M\subset[n]$ denote the set of misclassified indices under an optimal label matching, so $|M|=np$.
Under the separable regime and displacement constraint $\eta<\gamma$ (as required by Theorem~\ref{thm:main_stability}),
the same geometric argument used in the homogeneous case implies that every misclassified point incurs
a \emph{pointwise} loss increase at least $\Delta_{g_i}(\gamma-\eta;D_{\mathrm{eff}})$ relative to its benchmark assignment
(see the pointwise step in the proof of Theorem~\ref{thm:main_stability}).
Therefore, summing over misclassified points and using the envelope $\underline{\Delta}$ yields
\[
\sum_{i\in M}\bigl(g_i(\text{wrong})-g_i(\text{right})\bigr)
\ge |M|\cdot \underline{\Delta}(\gamma-\eta;D_{\mathrm{eff}})
= n p\cdot \underline{\Delta}(\gamma-\eta;D_{\mathrm{eff}}).
\]
For correctly classified points, the only way the heterogeneous loss can decrease relative to the benchmark is through the prototype
displacement. By Lipschitz continuity of each $g_i$ and the displacement bound $\eta$, the per-point decrease is bounded by $L_i\eta$,
and hence by $\overline{L}\eta$. Summing over all $n$ points gives the conservative leakage bound
\[
\sum_{i\notin M}\bigl(g_i(\text{assigned to }\hat\theta)-g_i(\text{assigned to }\theta^*)\bigr)\ge -n\,\overline{L}\eta.
\]
Combining the two parts, we obtain the global lower bound
\begin{equation}\label{eq:appC_hetero_excess_lower}
\mathcal L_n(\hat{\mathcal C},\hat\theta)-\mathcal L_n(\mathcal C^*,\theta^*)
\ge n p\cdot \underline{\Delta}(\gamma-\eta;D_{\mathrm{eff}}) - n\,\overline{L}\eta.
\end{equation}

\emph{\underbar{Step 3}: Conclude the heterogeneous stability inequality.}
Combine \eqref{eq:appC_hetero_excess_upper} and \eqref{eq:appC_hetero_excess_lower}:
\[
n p\cdot \underline{\Delta}(\gamma-\eta;D_{\mathrm{eff}}) - n\,\overline{L}\eta
\le (\delta+\delta_{\mathrm{approx}})\mathsf{OPT}_n.
\]
Divide by $n\,\underline{\Delta}(\gamma-\eta;D_{\mathrm{eff}})$ and rearrange to obtain
\[
p
\le
\frac{\mathsf{OPT}_n}{n\,\underline{\Delta}(\gamma-\eta;D_{\mathrm{eff}})}(\delta+\delta_{\mathrm{approx}})
+
\frac{\overline{L}\eta}{\underline{\Delta}(\gamma-\eta;D_{\mathrm{eff}})},
\]
which is exactly \eqref{eq:hetero_bound}.

\emph{\underbar{Step 4}: Condition-number form.}
If additionally $\mathsf{OPT}_n/n\lesssim \overline{G}(D_{\mathrm{eff}})$, then the first term in \eqref{eq:hetero_bound} satisfies
\[
\frac{\mathsf{OPT}_n}{n\,\underline{\Delta}(\gamma-\eta;D_{\mathrm{eff}})}
\;\lesssim\;
\frac{\overline{G}(D_{\mathrm{eff}})}{\underline{\Delta}(\gamma-\eta;D_{\mathrm{eff}})}
=: \kappa_{\mathrm{het}},
\]
yielding the stated condition-number form.

\subsection{Proof of Proposition~\ref{prop:tracking}: Tracking stability}

We prove Proposition~\ref{prop:tracking} by applying the global stability bound at time $t$ and then decomposing the total displacement
into an algorithmic component (warm-start inaccuracy) and an environmental component (data drift).

\emph{Time-$t$ notation and assumptions.}
At time $t$, let $\mathcal C_t^*$ and $\theta_t^*$ denote the benchmark partition and prototypes, with effective radius $D_t$ and margin $\gamma_t$.
Let $(\hat{\mathcal C}_t,\hat\theta_t)$ be the warm-started solution produced by the tracking algorithm at time $t$, and let $p_t$ denote its
misclassification fraction relative to $\mathcal C_t^*$.
Assume the total displacement from the benchmark is small:
\[
\eta_t:=\eta(\hat\theta_t,\theta_t^*) < \gamma_t,
\qquad\text{and}\qquad
\Delta_g(\gamma_t-\eta_t;D_t)>0.
\]
Assume also that the optimization gap at time $t$ is $\delta_t$ in the sense that
\[
\mathcal L_{n,t}(\hat{\mathcal C}_t,\hat\theta_t)\le (1+\delta_t)\mathrm{OPT}_{n,t},
\]
where $\mathcal L_{n,t}$ and $\mathrm{OPT}_{n,t}$ denote the time-$t$ objective and its global optimum.
Finally, assume the displacement admits the additive decomposition
\[
\eta_t \le \eta_t^{\mathrm{alg}} + \eta_t^{\mathrm{drift}},
\]
as described in Section~7 (algorithmic error plus drift).

\emph{\underbar{Step 1}: Apply the global stability mechanism at time $t$.}
Apply Theorem~\ref{thm:main_stability} to the time-$t$ instance (with $\delta=\delta_t$, $\gamma=\gamma_t$, $D_{\mathrm{eff}}=D_t$, and $\eta=\eta_t$).
This yields
\begin{equation}\label{eq:appC_tracking_raw}
p_t
\le
\frac{\mathrm{OPT}_{n,t}}{n\,\Delta_g(\gamma_t-\eta_t;D_t)}\,\delta_t
+
\frac{L_g\,\eta_t}{\Delta_g(\gamma_t-\eta_t;D_t)}.
\end{equation}
(Here we suppress $\delta_{\mathrm{approx}}$ since the tracking statement is expressed in terms of the time-$t$ optimization gap $\delta_t$; if a benchmark approximation factor is present it can be added to $\delta_t$ in the same way as in Theorem~\ref{thm:main_stability}.)

\emph{\underbar{Step 2}: Condition-number form for the optimization term.}
Define the time-$t$ condition number (at shrunken margin $\gamma_t-\eta_t$) by
\[
\kappa_t(\eta_t):=\frac{g(D_t)}{\Delta_g(\gamma_t-\eta_t;D_t)}.
\]
Using the standard within-cluster scale control $\mathrm{OPT}_{n,t}/n\lesssim g(D_t)$ (as in the main text),
the first term in \eqref{eq:appC_tracking_raw} satisfies
\[
\frac{\mathrm{OPT}_{n,t}}{n\,\Delta_g(\gamma_t-\eta_t;D_t)}\,\delta_t
\;\lesssim\;
\kappa_t(\eta_t)\,\delta_t.
\]

\emph{\underbar{Step 3}: Decompose the displacement term.}
By the assumed decomposition $\eta_t\le \eta_t^{\mathrm{alg}}+\eta_t^{\mathrm{drift}}$,
\[
\frac{L_g\,\eta_t}{\Delta_g(\gamma_t-\eta_t;D_t)}
\le
\frac{L_g(\eta_t^{\mathrm{alg}}+\eta_t^{\mathrm{drift}})}{\Delta_g(\gamma_t-\eta_t;D_t)}.
\]

\emph{\underbar{Step 4}: Combine bounds.}
Combining the last two displays with \eqref{eq:appC_tracking_raw} yields
\[
p_t
\;\lesssim\;
\kappa_t(\eta_t)\,\delta_t
+
\frac{L_g(\eta_t^{\mathrm{alg}}+\eta_t^{\mathrm{drift}})}{\Delta_g(\gamma_t-\eta_t;D_t)},
\]
which is exactly \eqref{eq:tracking_bound}.
\bibliographystyle{imsart-number} 
\bibliography{refs}       

\end{document}